\def\eqref#1{equation~\ref{#1}}
\def\1{\bm{1}}
\def\vc{{\bm{c}}}
\def\vg{{\bm{g}}}
\def\vh{{\bm{h}}}
\def\vm{{\bm{m}}}
\def\vt{{\bm{t}}}
\def\vv{{\bm{v}}}
\def\vx{{\bm{x}}}
\def\mC{{\bm{C}}}
\def\mG{{\bm{G}}}
\def\mI{{\bm{I}}}
\def\mM{{\bm{M}}}
\def\mO{{\bm{O}}}
\def\mS{{\bm{S}}}
\def\mU{{\bm{U}}}
\def\mV{{\bm{V}}}
\def\mX{{\bm{X}}}
\def\mZ{{\bm{Z}}}
\DeclareMathAlphabet{\mathsfit}{\encodingdefault}{\sfdefault}{m}{sl}
\SetMathAlphabet{\mathsfit}{bold}{\encodingdefault}{\sfdefault}{bx}{n}
\def\gE{{\mathcal{E}}}
\def\gF{{\mathcal{F}}}
\def\gN{{\mathcal{N}}}
\def\sR{{\mathbb{R}}}
\def\sZ{{\mathbb{Z}}}
\newcommand{\R}{\mathbb{R}}
\newtheorem{theorem}{Theorem}
\newtheorem{proposition}{Proposition}
\newtheorem{lemma}[theorem]{Lemma}
\newtheorem{definition}{Definition}
\definecolor{DarkBlue}{rgb}{0,0.08,1}
\title{Learning Physical Dynamics with \\ Subequivariant Graph Neural Networks}
\author{%
  Jiaqi Han\\
  Tsinghua University\\
  \And 
  Wenbing Huang\thanks{Corresponding author: Wenbing Huang.}\\
  Gaoling School of Artificial Intelligence, \\
  Renmin University of China \\ Beijing Key Laboratory of Big Data\\ Management and Analysis Methods \\
  \And  
Hengbo Ma \\
 University of California, Berkeley\\
  \And 
  Jiachen Li\\
   Stanford University\\
  \And 
    Joshua B. Tenenbaum\\
   MIT BCS, CBMM, CSAIL\\
  \And 
  Chuang Gan\\
  UMass Amherst \\ MIT-IBM Watson AI Lab 
}
\begin{document}

\maketitle

\begin{abstract}

Graph Neural Networks (GNNs) have become a prevailing tool for learning physical dynamics. However, they still encounter several challenges: 1) Physical laws abide by symmetry,  which is a vital inductive bias accounting for model generalization and should be incorporated into the model design. Existing simulators either consider insufficient symmetry, or enforce excessive equivariance in practice when symmetry is partially broken by gravity. 2) Objects in the physical world possess diverse shapes, sizes, and properties, which should be appropriately processed by the model. To tackle these difficulties, we propose a novel backbone, \emph{Subequivariant Graph Neural Network}, which 1) relaxes equivariance to subequivariance by considering external fields like gravity, where the universal approximation ability holds theoretically; 2) introduces a new subequivariant object-aware message passing for learning physical interactions between multiple objects of various shapes in the particle-based representation; 3) operates in a hierarchical fashion, allowing for modeling long-range and complex interactions. Our model achieves on average over 3\% enhancement in contact prediction accuracy across 8 scenarios on Physion and 2$\times$ lower rollout MSE on RigidFall compared with state-of-the-art GNN simulators, while exhibiting strong generalization and data efficiency. Code and videos are available at our project page: \href{https://hanjq17.github.io/SGNN/}{https://hanjq17.github.io/SGNN/}.

\end{abstract}

\section{Introduction}


Learning and predicting the complicated physical dynamics and interactions between objects are vital to many tasks including physical reasoning~\cite{chen2022comphy,yi2019clevrer,chen2021grounding,lerer2016learning}, scene understanding~\cite{battaglia2013simulation}, and model-based planning and control ~\cite{li2022contact,lin2022diffskill,ma2022risp,ye2020object,agrawal2016learning}. Several learning-based differentiable simulators~\cite{sanchez2018graph,sanchez2020learning,li2018learning,pfaff2021learning} have been proposed, and they obtain promising achievements in simulating various kinds of interacting objects including rigid and fluids. The success mainly relies on the prevalence of graph neural networks~\cite{gilmer2017neural,sanchez2020learning,hamilton2017inductive}, which are desirable tools for physical simulation, by modeling particles as nodes, physical relations as edges, and their interactions as the message passing thereon.  

Notably, physical world exhibits certain symmetries.
For example, the way dominoes fall from left to right will exactly be preserved when the system is rotated horizontally to another direction. Inherently, humans reason about the dynamics with the preservation of such symmetry, and this inductive bias has been endorsed by the physical laws that also abide by the symmetries in our 3D world. 
Yet, regardless of this inductive bias, current differentiable simulators like GNS~\cite{sanchez2020learning} and DPI~\cite{li2018learning} would fail to learn the real dynamics that can generalize to all directions. In the example of dominoes, if the data in training are positioned from left to right, GNS can predict well during testing when the dominoes also fall in the same direction, but performs poorly if the scene is rotated horizontally (see demo in the \href{https://hanjq17.github.io/SGNN/}{Supplementary Video}). This observation implies that GNS overfits training samples without learning the true dynamics that obey the symmetry, limiting its generalization.

With this consideration, there have been a number of works, named geometrically equivariant graph neural networks~\cite{han2022geometrically}, that leverage symmetry as an inductive bias in learning to simulate. These models are designed such that their outputs will rotate/translate/reflect in the same way as the inputs, hence retaining the symmetry. However, models like EGNN~\cite{satorras2021en} and GMN~\cite{huang2022equivariant} are exerted E(3)-equivariance, the full symmetry of the 3D Euclidean space. Such constraint is so strong that it penalizes all directions in the 3D space, which cannot be applied to scenarios with external fields, like gravity. The existence of gravity breaks the symmetry in the vertical direction, reducing E(3) to its subgroup. We formally characterize this phenomenon of equivariance relaxation as \emph{subequivariance}. 

In general, simulating physics on datasets like Physion~\cite{bear2021physion} are highly challenging, due to the scale of the system (on average thousands of particles per system), the diversity of the interactions (\emph{e.g.}, collision, friction, gravity), as well as multiple shapes, materials, or even rigidness of the objects. These factors require unique efforts in designing the simulators, which brings another weakness of current GNN-based methods: From a practical point of view, they seldom explicitly involve the geometric object information into message passing. Moreover, when modeling  multiple interacting objects of different shapes, the interactions between or within objects are usually different. For example, the former aims to exchange momentum or energy across objects, while the latter
usually accounts for the geometrical constraint. It is thus important to involve objectness to distinguish interactions between particles within objects from those across objects.

In this work, we propose Subequivariant Graph Neural Networks (SGNN) that consist of several features to tackle the above challenges. \textbf{1.} We relax equivariance to subequivariance and design subequivariant functions to physical scenarios with the existence of gravity, and excitingly, we have proved that our designed form inherits the approximation universality. \textbf{2.} We formulate a novel subequivariant object-aware message passing framework for modeling dynamics and interactions between objects of varying shapes. \textbf{3.} We incorporate the subequivariant message passing into a hierarchical model to deal with long-range and complicated object interactions. We demonstrate the efficacy of our SGNN for learning physical dynamics on a large-scale challenging dataset Physion with 8 different scenarios, and a 3-object RigidFall dataset. Experimental results show that our model is capable of yielding more accurate dynamics prediction, is highly data-efficient, and has strong generalization compared with the state-of-the-art learning-based differentiable physical simulators.



\section{Related Work}

\textbf{GNN-based physical dynamics simulators.} There have been many works that employ graph neural networks as physical simulators of dynamical systems~\cite{battaglia2016interaction,mrowca2018flexible,kipf2018neural}. 
Graph Network Simulator (GNS) proposed by~\cite{sanchez2020learning} has been showcased a simple yet powerful tool in simulating large systems in particle-based representation of rigid and fluids by dynamically constructing interaction graph and performing multiple steps of information propagation. DPI~\cite{li2018learning} adds one-level of hierarchy to the rigid and predicts the rigid transformation via generalized coordinates, with applications to manipulation. There are also works that incorporate strong physical priors like Hamiltonian mechanics~\cite{sanchez2019hamiltonian,zhong2021extending}, geometrical constraints~\cite{rubanova2022constraintbased} and shapes~\cite{pfaff2021learning}. Despite the promising empirical results, these works have not given sufficient considerations on symmetry especially when external force like gravity presents, leaving room for enhancing their generalization to unseen testing data.

\textbf{Geometrically equivariant graph neural networks.} Equivariant graph neural networks~\cite{han2022geometrically} are a family of GNNs that are specifically designed to meet the constraint of certain symmetry, mostly involving translations, rotations, and/or reflections in Euclidean space. This goal is approached by several measures, including solving group convolution with irreducible representation~\cite{thomas2018tensor,fuchs2020se3} or leveraging invariant scalarization~\cite{villar2021scalars} like taking the inner product~\cite{satorras2021en,huang2022equivariant}. Our approach also belongs to the scalarization family together with EGNN~\cite{satorras2021en} and GMN~\cite{huang2022equivariant}. Specifically, EGNN models the interaction with the invariant distance as input, computed as an inner product of the relative positions, and GMN generalizes to a multi-channel version by taking into consideration a stack of multiple vectors. These equivariant GNNs operate on particle graphs or point clouds, while we additionally consider a combination of both particle- and object-level information for physical simulation. More importantly, they are assumed a full Euclidean symmetry with strong equivariance constraint, while we elaborate how to relax the constraint in the existence of external fields like gravity by leveraging subequivariance. 

\textbf{Equivariance on subgroups.} E($n$)-Steerable CNNs~\cite{weiler2019general,cesa2021program} develop convolutional kernels that meet equivariance on E(3)/E(2) and their subgroups by leveraging restricted representation. EMLP~\cite{finzi2021practical} obtains equivariance on arbitrary matrix groups. However, these approaches require specifying the particular group/subgroup to perform equivariance, while our goal here is to relax the equivariance on subgroups of E(3) by considering external force field, having more physical implications. Moreover, these works rely on computationally expensive operations like irreducible representation or solving group constraints, while our formulation resorts to scalarization, which is easy to implement, efficient to compute, and also comes with necessary universality guarantee.

\section{Subequivariant Graph Neural Networks}
\label{sec:SEGIN}

\subsection{Background}
\textbf{GNN-based simulators.}
We start by introducing the mechanism of GNN-based simulators for learning physical dynamics. We consider the particle-based representation of a physical system with $N$ particles consisting of $M$ objects. At time $t$, each particle within the system possesses some state information, including \textbf{1.} the position $\vec{\vx}_i^{(t)}\in\sR^3$ and the velocity $\vec{\vv}_i^{(t)}\in\sR^3$, which are both directional vectors; \textbf{2.} some attributes $\vh_i\in\sR^n$ without geometric context, such as the rigidness; \textbf{3.} the dynamic spatial connections with other particles, where an edge will be constructed if the distance between two particles is smaller than a threshold $r$, namely, $\gE^{(t)}=\{(i,j): \|\vec{\vx}_i^{(t)}-\vec{\vx}^{(t)}_j\|_2 < r\}$. We denote the object category of particle $i$ as $o(i)\in\sZ$ satisfying $o(i)=k$ if particle $i$ belongs to object $k$.
The goal here is to predict the position of the next step $\vec{\vx}_i^{(t+1)}$ given the above system information at time $t$
, which can be favorably modeled by GNNs, \emph{i.e.},  $\vec{\vx}_i^{(t+1)}=\varphi_{\text{GNN}}\left(\{\vec{\vx}_i^{(t)}\}, \{\vec{\vv}_i^{(t)}\}, \{{\vh}_i\}, \gE^{(t)}\right)$. Since the prediction at different time shares the same model, we will henceforth omit the temporal superscript $t$ for all variables for brevity.

The conventional GNN simulators $\varphi_{\text{GNN}}$~\cite{li2018learning,sanchez2020learning,mrowca2018flexible} offer a favorable solution by leveraging message-passing on the interaction graph, which computes:
\begin{align}
    \vm_{ij} &= \phi\left(\vec{\vx}_i, \vec{\vv}_i, \vec{\vx}_j, \vec{\vv}_j, \vh_i, \vh_j \right), \\
    \vec{\vx}'_i, \vec{\vv}'_i, \vh'_i &= \psi\left(\sum\nolimits_{j\in\gN(i)}\vm_{ij}, \vec{\vx}_i, \vec{\vv}_i, \vh_i \right),
\end{align}
where $\gN(i)=\{j: (i,j)\in\gE\}$ is the neighbors of node $i$, and $\phi$, $\psi$ are the edge message function and node update function, respectively. The prediction is obtained by conducting several iterations of message passing. Nevertheless, such form does not guarantee the desirable symmetry context with common choices of $\phi$ and $\psi$, and meanwhile the object information has not been elaborated, leaving room for a more exquisite  message passing scheme for learning complex physical dynamics. Recent works such as EGNN~\cite{satorras2021en} and GMN~\cite{huang2022equivariant} have actually considered E(3)-equivariance in the design of the functions $\phi$ and $\psi$ to pursue symmetry. However, they are not applicable to the case when symmetry is partially violated by gravity, which motivates our proposal in~\textsection~\ref{sec:SOMP}. Prior to introducing our work, we first provide necessary preliminaries related to equivariance.  

\textbf{Equivariance.}
In this paper, we basically focus on equivariance in terms of $\text{E(3)}$ transformations: translation, rotation, and reflection.  The formal definition is provided below.
\begin{definition}[Equivariance and Invariance]
\label{def:equ}
We call that the function $f:\R^{3\times m}\times\R^{n}\rightarrow \R^{3 \times m'}$ is $\text{E(3)}$-equivariant, if for any transformation $g$ in $\text{E(3)}$, $f(g\cdot\Vec{\mZ},\vh)=g\cdot f(\Vec{\mZ},\vh)$, $\forall\Vec{\mZ}\in\R^{3\times m}$, $\forall\vh\in\R^{n}$. Similarly, $f$ is invariant if $f(g\cdot\Vec{\mZ},\vh)= f(\Vec{\mZ},\vh)$, $\forall g\in \text{E(3)}$.  
\end{definition}

In Definition~\ref{def:equ}, the group action $\cdot$ is instantiated as $g\cdot\Vec{\mZ}\coloneqq \mO\Vec{\mZ}$ for the orthogonal transformation (rotation and reflection) where $\mO\in O(3)\coloneqq\{\mO\in\R^{3\times3}|\mO^\top\mO=\mI\}$, and $g\cdot\Vec{\mZ}\coloneqq \Vec{\mZ}+\vt$ for translation where $\vt\in\R^{3}$. Note that for the input of $f$, we have added the the right-arrow superscript on $\Vec{\mZ}$ to distinguish it from the scalar $\vh$ that is unaffected by the transformation, akin to our notations for the position $\vec{\vx}_i$, velocity $\vec{\vv}_i$, and attribute $\vh_i$.

It is non-trivial to derive equivariant function particularly for orthogonal transformations. GMN~\cite{huang2022equivariant} proposes a multichannel scalarization form, which yields,
\begin{align}
\label{eq:GMN}
    f(\Vec{\mZ},\vh)\coloneqq\Vec{\mZ}\mV, \quad\text{s.t.} \mV=\sigma(\Vec{\mZ}^{\top}\Vec{\mZ},\vh), 
\end{align}
where the inner product $\vec{\mZ}^\top\vec{\mZ}\in\sR^{m\times m}$ is firstly computed and concatenated with $\vh$, the resultant invariant term is then transformed by a function (usually a Multi-Layer Perceptron (MLP)) $\sigma:\sR^{m\times m+n}\mapsto\sR^{m\times m'}$ producing $\mV\in\R^{m\times m'}$, and the directional output is acquired by taking a matrix multiplication of the basis $\vec{\mZ}$ with $\mV$. By joining the analyses in GMN along with~\cite{villar2021scalars}, we have the universality of the formulation in Eq.~(\ref{eq:GMN}), which is provided in Appendix~\ref{sec:proof}. 

\begin{figure}[t]
    \centering
    \includegraphics[width=0.98\textwidth]{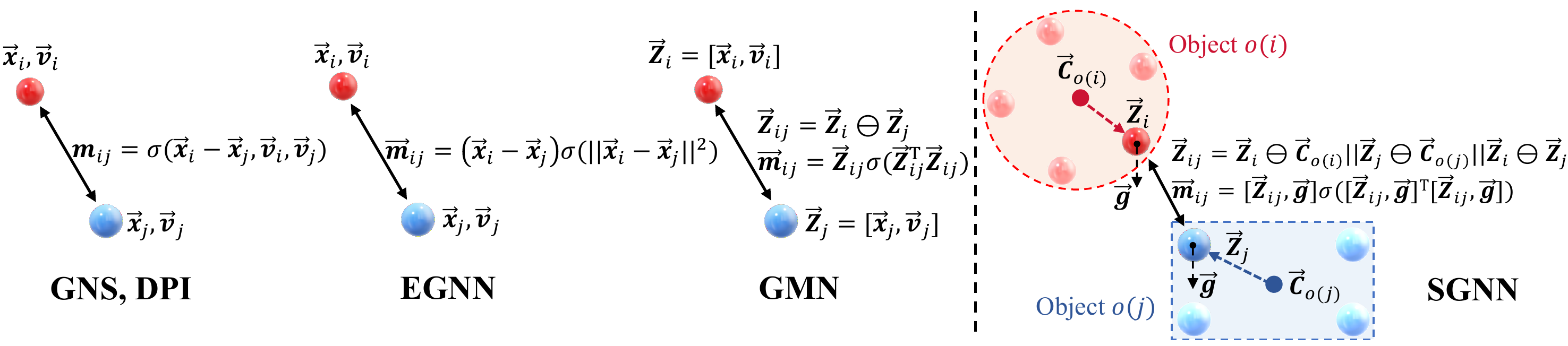}
    \caption{Comparison between different message passing. We omit the scalars $\vh_i$ for simplicity.}
    \label{fig:messagepassing}
\end{figure}

\subsection{Subequivariant Object-aware Message Passing}
\label{sec:SOMP}

To simultaneously benefit from symmetry and object-aware information, we develop Subequivariant Object-aware Message Passing (SOMP). It characterizes information aggregation between multi-channel vectors and scalars, while taking into account both the particle and object information. We use $\vec{\mZ}_i\in\sR^{3\times m}$ to indicate a stack of $m$ multi-channel 3D vectors; particularly, $\vec{\mZ}_i$ is initialized as $[\vec{\vx}_i, \vec{\vv}_i], 1\leq i\leq N$ by involving both the position and velocity. We additionally initialize the object features by pooling its particles: for object $k$, we have $\vec{\mC}_k=\frac{1}{|\{i:o(i)=k\}|}\sum\nolimits_{\{i:o(i)=k\}}\vec{\mZ}_i, \vc_k=\sum\nolimits_{\{i:o(i)=k\}}\vh_i,
1\leq k\leq M$.
We also introduce a binary operation ``$\ominus$'' which yields $\vec{\mZ}_i\ominus\vec{\mZ}_j = [\vec{\mZ}_i-\vec{\mZ}_j, \vec{\vv}_i, \vec{\vv}_j]$. It transforms the translation-equivariant vectors to be invariant and stacks those invariant vectors together, resulting in a translation-invariant representation while enriching the input vectors with more channels. The direction of gravity is set to be along the vertical axis.


In a high-level overview, we denote our message passing as the function $\varphi$ that updates each particle given the input states of all particles, objects, and graph connectivity:
\begin{align}
\label{eq:somp}
    \{(\vec{\mZ}'_i, \vh'_i)\}_{i=1}^N &= \varphi\left(\{(\vec{\mZ}_i,\vh_i)\}_{i=1}^N, \{(\vec{\mC}_{k}, \vc_{k})\}_{k=1}^M, \gE \right).
\end{align}
Specifically, $\varphi$ is unfolded as the following message passing and aggregation computations:
\begin{align}
\label{eq:vector-interaction}
\vec{\mZ}_{ij} &= (\vec{\mZ}_i\ominus\vec{\mC}_{o(i)}) \|(\vec{\mZ}_j\ominus\vec{\mC}_{o(j)})\| (\vec{\mZ}_i\ominus\vec{\mZ}_j),\\
\label{eq:scalar-interaction}
\vh_{ij} &=  \vh_i\| \vc_{o(i)} \| \vh_j \| \vc_{o(j)}, \\
\label{eq:message-mlp}
\vec{\mM}_{ij}, \vm_{ij} &= \phi_{\vec{\vg}}\left(\vec{\mZ}_{ij}, \vh_{ij} \right), \\
\label{eq:update-mlp}
(\vec{\mZ}'_i, \vh'_i) &= (\vec{\mZ}_i, \vh_i) + \psi_{\vec{\vg}}\left((\sum\nolimits_{j\in\gN(i)}\vec{\mM}_{ij}) \| (\vec{\mZ}_i\ominus\vec{\mC}_{o(i)}), (\sum\nolimits_{j\in\gN(i)}\vm_{ij}) \| \vh_i \| \vc_{o(i)} \right), 
\end{align}
where $\|$ is the concatenation along the channel dimension, $\gN(i)=\{j: (i,j)\in\gE \}$ is the neighbors of node $i$, and both $\phi_{\vec{\vg}}$ and $\psi_{\vec{\vg}}$ are subequivariant and will be defined in detail in Eq.~(\ref{eq:subGMN}).
In detail, we first derive the multi-channel vector input $\vec{\mZ}_{ij}$ for the interaction between particle $i$ and $j$ in Eq.~(\ref{eq:vector-interaction}) by stacking the three terms along the channel dimension, including the relative information of the particle with respect to its belonged object: $\vec{\mZ}_i\ominus\vec{\mC}_{o(i)}$ and $\vec{\mZ}_i\ominus\vec{\mC}_{o(i)}$, and the relative information between particles $\vec{\mZ}_i\ominus\vec{\mZ}_j$. By this means, we arrive at a translation-invariant representation $\vec{\mZ}_{ij}$ with rich object-aware geometric information. For the invariant features we simply concatenate them in Eq.~(\ref{eq:scalar-interaction}).
Subsequently, in Eq.~(\ref{eq:message-mlp}) we feed the interactions $\vec{\mZ}_{ij}$ and $\vh_{ij}$ into the subequivariant message function $\phi_{\vec{\vg}}$, yielding the vector message $\vec{\mM}_{ij}$ and scalar message $\vm_{ij}$\footnote{For both $\phi_{\vec{\vg}}$ and $\psi_{\vec{\vg}}$, we expand more output channels besides $\mV_{\vec{\vg}}$ of $\sigma$ in Eq.~(\ref{eq:subGMN}), and assign it to the invariant message $\vm_{ij}$ in Eq.~(\ref{eq:message-mlp}) and $\vh'_i$ in Eq.~(\ref{eq:update-mlp}), respectively.}. Finally, Eq.~(\ref{eq:update-mlp}) first performs message aggregation and then updates the states by another subequivariant function $\psi_{\vec{\vg}}$, obtaining the eventual result. With the updated $\vec{\mZ}'_i$ and $\vh'_i$, one can readily obtain the output with equivariance or invariance as desired, which, in our case, implies $\vec{\mZ}'_i= [\vec{\vx}'_i]$ by setting $m'=1$.

\textbf{Comparison with existing GNN-based simulators.} The core of satisfying equivariance lies in taking the invariant inner product before feeding the vectors into the MLP. Such design shares a similar spirit in nature with existing works including EGNN~\cite{satorras2021en} and GMN~\cite{huang2022equivariant}. The interaction modeling of EGNN only considers the relative position $\vec{\vx}_i-\vec{\vx}_j$ and its inner product $\|\vec{\vx}_i-\vec{\vx}_j\|^2$, which might have limited expressivity while tackling complex physical interactions and dynamics between objects. GMN further extends to a multi-channel interaction $\vec{\mZ}_i\ominus\vec{\mZ}_j$ where $\vec{\mZ}_i=[\vec{\vx}_i,\vec{\vv}_i]$. Nevertheless, it lacks the necessary object information, while we explicitly involve the particle-object correlation $\vec{\mZ}_i\ominus\vec{\mC}_{o(i)}$ in our SOMP.
Besides, GNS~\cite{sanchez2020learning} and DPI~\cite{li2018learning} only enforces translation equivariance by directly feeding $[\vec{\vx}_i-\vec{\vx}_j,\vec{\vv}_i,\vec{\vv}_j]$ into an MLP without taking the inner product. They fail to preserve the O($3$)-equivariance and consequently have weaker generalization as we will illustrate in our experiments.
Fig.~\ref{fig:messagepassing} summarizes and compares the message-passing schemes of GNS, EGNN, GMN and our SOMP. Particularly, our design of SOMP takes careful considerations: \textbf{1.} Equivariance is still permitted with both geometric and scalar object features involved; \textbf{2.} The object information can also be constantly updated during message passing; \textbf{3.} The expressivity of SOMP is enhanced over EGNN and GMN with object information considered.
We provide detailed theoretical comparisons in Appendix~\ref{sec:proof2} by showing that EGNN and GMN are indeed special cases of SGNN.


We now present the detailed formulations of $\phi_{\vec{\vg}}$ in Eq.~(\ref{eq:message-mlp})  and $\psi_{\vec{\vg}}$ in Eq.~(\ref{eq:update-mlp}).  Since the full symmetry is violated by gravity $\vec{\vg}\in\R^3$, and the dynamics of the system will naturally preserve a gravitational acceleration in the vertical direction. By this means, the orthogonal symmetry is no longer maintained in every direction but only restricted to the subgroup $O_{\vec{\vg}}(3)\coloneqq\{\mO\in O(3)\mid \mO\vec{\vg}=\vec{\vg}\}$, that is, the rotations/reflections around the gravitational axis. We term such a reduction of equivariance as a novel notion: \emph{subequivariance}. To reflect this special symmetry, we augment Eq.~(\ref{eq:GMN}) by:
\begin{align}
\label{eq:subGMN}
    f_{\vec{\vg}}(\Vec{\mZ},\vh)=[\Vec{\mZ}, \vec{\vg}]\mV_{\vec{\vg}}, \quad\text{s.t.} \mV_{\vec{\vg}}=    \sigma([\Vec{\mZ},\vec{\vg}]^{\top}[\Vec{\mZ},\vec{\vg}],\vh),
\end{align}
where $\sigma:\R^{(m+1)\times(m+1)}\rightarrow\R^{(m+1)\times m'}$ is an MLP. Compared with Eq.~(\ref{eq:GMN}), here we just augment the directional input with $\vec{\vg}$. Interestingly, such a simple augmentation is universally expressive:
\begin{theorem}
\label{th:subequ}
Let $f_{\vec{\vg}}(\Vec{\mZ},\vh)$ be defined by Eq.~(\ref{eq:subGMN}). Then, $f_{\vec{\vg}}$ is $O_{\vec{\vg}}(3)$-equivariant. More importantly, For any $O_{\vec{\vg}}(3)$-equivariant function $\hat{f}(\Vec{\mZ},\vh)$, there always exists an MLP $\sigma$ satisfying $\|\hat{f}-f_{\vec{\vg}}\|<\epsilon$ for arbitrarily small positive value $\epsilon$.
\end{theorem}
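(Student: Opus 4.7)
The plan is twofold: first verify $O_{\vec{\vg}}(3)$-equivariance of $f_{\vec{\vg}}$ by a short direct computation, and then establish universality by lifting $\hat{f}$ to a genuinely $O(3)$-equivariant function on an augmented input and invoking the universal approximation result for the scalarization form Eq.~(\ref{eq:GMN}) that the paper cites from prior work.

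For equivariance, fix any $\mO \in O_{\vec{\vg}}(3)$. Since $\mO\vec{\vg}=\vec{\vg}$ by definition, we have $\mO[\vec{\mZ},\vec{\vg}] = [\mO\vec{\mZ},\mO\vec{\vg}] = [\mO\vec{\mZ},\vec{\vg}]$. The Gram matrix $[\vec{\mZ},\vec{\vg}]^{\top}[\vec{\mZ},\vec{\vg}]$ is then left unchanged under replacing $\vec{\mZ}$ with $\mO\vec{\mZ}$, using $\mO^{\top}\mO=\mI$, so $\mV_{\vec{\vg}}$ is invariant. Combining these two facts, $f_{\vec{\vg}}(\mO\vec{\mZ},\vh) = [\mO\vec{\mZ},\vec{\vg}]\mV_{\vec{\vg}} = \mO[\vec{\mZ},\vec{\vg}]\mV_{\vec{\vg}} = \mO\, f_{\vec{\vg}}(\vec{\mZ},\vh)$, which is the desired equivariance.

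For universality, define a lift $\tilde{f}([\vec{\mZ}',\vec{\vg}'],\vh)$ on all augmented inputs whose gravity slot $\vec{\vg}'$ lies in the $O(3)$-orbit of $\vec{\vg}$: pick any $\mR\in O(3)$ with $\mR\vec{\vg}=\vec{\vg}'$ and set $\tilde{f}([\vec{\mZ}',\vec{\vg}'],\vh)\coloneqq \mR\,\hat{f}(\mR^{-1}\vec{\mZ}',\vh)$. Well-definedness reduces to the assumed $O_{\vec{\vg}}(3)$-equivariance of $\hat{f}$: if $\mR_1\vec{\vg}=\mR_2\vec{\vg}$ then $\mR_2^{-1}\mR_1\in O_{\vec{\vg}}(3)$, and pushing this element through $\hat{f}$ shows the two candidate values coincide. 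By construction $\tilde{f}$ is $O(3)$-equivariant on its domain and reduces to $\hat{f}$ at $\vec{\vg}'=\vec{\vg},\vec{\mZ}'=\vec{\mZ}$. Applying the scalarization universality of Eq.~(\ref{eq:GMN}) at input dimension $m+1$ to $\tilde{f}$ produces an MLP $\sigma$ for which $\tilde{f}$ is approximated uniformly within $\epsilon$ by $[\vec{\mZ}',\vec{\vg}']\,\sigma([\vec{\mZ}',\vec{\vg}']^{\top}[\vec{\mZ}',\vec{\vg}'],\vh)$. Specializing to the slice $\vec{\vg}'=\vec{\vg}$ collapses the right-hand side to $f_{\vec{\vg}}(\vec{\mZ},\vh)$ and the left-hand side to $\hat{f}(\vec{\mZ},\vh)$, yielding $\|\hat{f}-f_{\vec{\vg}}\|<\epsilon$.

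The step I expect to be the main obstacle is making the lifting rigorous: $\tilde{f}$ is only defined on the $O(3)$-invariant submanifold where $\|\vec{\vg}'\|=\|\vec{\vg}\|$, which has codimension one inside the ambient $\R^{3\times(m+1)}$. To invoke the cited GMN/Villar scalarization theorem cleanly one has two natural options. The first is to extend $\tilde{f}$ $O(3)$-equivariantly and continuously to a compact ambient region, for instance by a symmetry-preserving Tietze-type construction or by multiplying by an $O(3)$-invariant smooth cutoff in $\|\vec{\vg}'\|$; the second is to apply the universality result directly on the compact $O(3)$-invariant submanifold itself, which is legitimate because the scalarization parameterization covers continuous $O(3)$-equivariant functions on any $O(3)$-invariant domain. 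Once this extension step is secured, the rest of the argument is a direct specialization of the previously established $O(3)$-equivariant universality.
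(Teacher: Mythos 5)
Your proof is correct, but it takes a genuinely different route from the paper's. The equivariance half is the same short computation. For universality, you reduce the subgroup case to the full-group case: you lift $\hat{f}$ to an $\text{O}(3)$-equivariant function $\tilde{f}$ of the augmented input $[\Vec{\mZ}',\vec{\vg}']$ on the orbit $\{\|\vec{\vg}'\|=\|\vec{\vg}\|\}$ (well-definedness of the lift being exactly where the $\text{O}_{\vec{\vg}}(3)$-equivariance of $\hat{f}$ is consumed), invoke the full $\text{O}(3)$ scalarization universality of Eq.~(\ref{eq:GMN}) at channel dimension $m+1$, and restrict to the slice $\vec{\vg}'=\vec{\vg}$. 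The paper instead works with the subgroup directly and re-proves the three ingredients from scratch: (i) $\hat{f}$ lands in the column span of $[\Vec{\mZ},\vec{\vg}]$ via the reflection trick applied \emph{within} $\text{O}_{\vec{\vg}}(3)$ (Lemma~\ref{le:s-span}); (ii) the coefficient is an $\text{O}_{\vec{\vg}}(3)$-invariant function via compact SVD (Lemma~\ref{le:s-inv}); and (iii) the augmented Gram matrix $[\Vec{\mZ},\vec{\vg}]^\top[\Vec{\mZ},\vec{\vg}]$ is a \emph{complete} invariant separating $\text{O}_{\vec{\vg}}(3)$-orbits, proved by splitting $\Vec{\mZ}$ along $\vec{\vg}$ and its orthogonal complement and reducing to the $\text{O}(2)$ case (Lemma~\ref{le:s-bject}). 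What your approach buys is brevity and generality (it would instantly handle any stabilizer subgroup of any fixed frame); what the paper's approach buys is that it is entirely pointwise and self-contained, so it never meets the domain issue you flag, and it produces the explicit orbit-separation statement of Lemma~\ref{le:s-bject} as a reusable byproduct. Your flagged obstacle is real but resolvable exactly as you say: the cited universality is stated on the ambient space, whereas $\tilde{f}$ lives on a codimension-one invariant submanifold, so you must either extend equivariantly (an invariant radial cutoff in $\|\vec{\vg}'\|$ suffices) or observe that the span/invariance arguments behind Eq.~(\ref{eq:GMN}) are pointwise and hence valid on any compact $\text{O}(3)$-invariant domain; either patch closes the argument. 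Incidentally, the paper's closing remark in Appendix~\ref{sec:proof} (that Eq.~(\ref{eq:subGMN}) viewed as a function of both $\Vec{\mZ}$ and $\vec{\vg}$ is universal for $\text{O}(3)$, and remains universal for the stabilizer upon fixing $\vec{\vg}$) is essentially your argument stated as a corollary rather than used as the proof.
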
 
The proof is non-straightforward and deferred to Appendix~\ref{sec:proof1}. The detailed architectural view of the proposed SOMP is depicted in Fig.~\ref{fig:theo-compare} in Appendix~\ref{sec:proof3}.
We leverage our designed subequivariant function with such desirable properties for $\phi_{\vec{\vg}}$ and $\psi_{\vec{\vg}}$ in our subequivariant object-aware message passing. We immediately have the following theorem guaranteeing the validity of our design.
\begin{theorem}
The message passing $\varphi$ (Eq.~\ref{eq:somp}) is $O_{\vec{\vg}}(3)$-equivariant.
\end{theorem}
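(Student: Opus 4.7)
The plan is to trace an arbitrary $\mO \in O_{\vec{\vg}}(3)$ through the composition of operations defining $\varphi$ and show that every intermediate quantity has either the equivariance type ($\vec{\mZ} \mapsto \mO \vec{\mZ}$) or the invariance type ($\vh \mapsto \vh$) matching its notational decoration. Since $\varphi$ is built by composing type-preserving linear operations (averaging, subtraction, summation, channel-wise concatenation) with the subequivariant maps $\phi_{\vec{\vg}}$ and $\psi_{\vec{\vg}}$, equivariance will follow by a short structural induction along the pipeline.

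First I would verify the preprocessing. The object aggregate $\vec{\mC}_k$ is a mean of $\vec{\mZ}_i$'s, so it is $O(3)$-equivariant by linearity, while $\vc_k$ is a sum of scalars and hence invariant. The operator $\ominus$ produces differences of $O(3)$-equivariant vectors together with the velocities, which is again $O(3)$-equivariant, and channel-wise stacking preserves this property since $\mO$ acts independently on each channel ($\mO[\vec{A}, \vec{B}] = [\mO \vec{A}, \mO \vec{B}]$). Therefore $\vec{\mZ}_i \ominus \vec{\mC}_{o(i)}$, $\vec{\mZ}_j \ominus \vec{\mC}_{o(j)}$, $\vec{\mZ}_i \ominus \vec{\mZ}_j$, and their concatenation $\vec{\mZ}_{ij}$ in Eq.~(\ref{eq:vector-interaction}) are all $O(3)$-equivariant, hence in particular $O_{\vec{\vg}}(3)$-equivariant. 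The scalar interaction $\vh_{ij}$ in Eq.~(\ref{eq:scalar-interaction}) is a concatenation of invariants and is itself invariant.

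Next I apply Theorem~\ref{th:subequ} twice. The first application to $\phi_{\vec{\vg}}$ gives an $O_{\vec{\vg}}(3)$-equivariant vector message $\vec{\mM}_{ij}$ and an invariant scalar message $\vm_{ij}$ (the scalar output being taken from additional invariant channels of $\sigma$, as the footnote specifies). Summation over $\gN(i)$ commutes with $\mO$, so $\sum_{j} \vec{\mM}_{ij}$ and $\sum_{j} \vm_{ij}$ retain their respective types. Concatenating them with $\vec{\mZ}_i \ominus \vec{\mC}_{o(i)}$ and $\vh_i \| \vc_{o(i)}$ again preserves equivariance/invariance, and a second application of Theorem~\ref{th:subequ} to $\psi_{\vec{\vg}}$ yields an $O_{\vec{\vg}}(3)$-equivariant vector update and an invariant scalar update. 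Adding these to $\vec{\mZ}_i$ and $\vh_i$ in Eq.~(\ref{eq:update-mlp}) maintains the types, so $(\vec{\mZ}'_i, \vh'_i)$ transforms exactly as required.

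No step is genuinely hard; the main difficulty is bookkeeping. I must consistently separate directional and scalar quantities along the pipeline and confirm that whenever $\vec{\vg}$ is inserted as a channel inside Eq.~(\ref{eq:subGMN}) the hypothesis $\mO \vec{\vg} = \vec{\vg}$ is invoked, so that $[\vec{\mZ}, \vec{\vg}]$ remains equivariant and Theorem~\ref{th:subequ} can be applied in the subgroup sense. A useful by-product of the argument is that every input to $\phi_{\vec{\vg}}$ and $\psi_{\vec{\vg}}$ is translation-invariant, since only differences of positions and velocities appear, so the residual form of Eq.~(\ref{eq:update-mlp}) also renders $\varphi$ translation-equivariant in its positional channels, although this is not asserted by the theorem.
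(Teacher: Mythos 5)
Your proposal is correct and follows essentially the same route as the paper's proof in Appendix~\ref{sec:proof2}: propagate an arbitrary $\mO\in O_{\vec{\vg}}(3)$ through the pooling, $\ominus$, concatenation, and summation steps (all of which commute with $\mO$ by linearity and channel-wise action), and invoke the subequivariance of $\phi_{\vec{\vg}}$ and $\psi_{\vec{\vg}}$ guaranteed by Theorem~\ref{th:subequ} at the two nonlinear stages. Your closing remark on translation equivariance likewise mirrors the paper's own remark following its proof.
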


\subsection{Application to Physical Scenes with Hierarchical Modeling}
\label{sec:hier}

This subsection introduce the entire architecture of our subequivariant GNN. Many physical scenes are complicated, possibly involving contact, collision, and friction amongst multiple objects. In light of this, we further incorporate our message passing into a multi-stage hierarchical modeling framework. One of our interesting findings here is the edge separation. This is indeed motivated by the consideration that the interactions \emph{between} or \emph{within} objects are usually different, similar to automorphism graph networks~\cite{de2020natural,mitton2021local,thiede2021autobahn} where the neighboring objects formulate the isomorphism group of edges. The former serves as a bridge of exchanging momentum and energy via interaction forces, while the latter usually accounts for maintaining the rigid or plastic constraints. Therefore, it would be beneficial to disentangle these two kinds of interactions in message passing, which also distinguishes our hierarchical modeling from DPI~\cite{li2018learning}. The overall flowchart is provided in Fig.~\ref{fig:architecture}.
To be specific, we alternate between the following three stages that implement $\varphi$ in Eq.~(\ref{eq:somp}) distinctly. 

\textbf{1. Particle-level inter-object message passing.} We start by modeling the local interactions from the particle-level, where we only involve those between different objects. That is,
\begin{align}
\label{eq:particle-level-inter}
    \{(\vec{\mZ}'_i, \vh'_i)\}_{i=1}^N &= \varphi_1\left(\{(\vec{\mZ}_i,\vh_i)\}_{i=1}^N, \{(\vec{\mC}_{k}, \vc_{k})\}_{k=1}^M, \gE_{\text{inter}} \right),
\end{align}
where $\gE_{\text{inter}} = \{(i,j): (i,j)\in\gE, o(i)\neq o(j)\}$.

\textbf{2. Object-level message passing.} Given the renewed particle-level information produced by the first stage, we are now ready to construct the object-level message passing graph. The object-level message passing is proceeded as
\begin{align}
    \{(\vec{\mC}'_k, \vc'_k)\}_{k=1}^M &= \varphi_2\left(\{(\vec{\mC}_k,\vc_k)\}_{k=1}^M, \{\emptyset\}, \gE_{\text{obj}} \right),
\end{align}
where $\gE_{\text{obj}}$ comprises the edges between interacted objects, indicating $\gE_{\text{obj}} = \{(k,l): \exists (i,j)\in\gE_{\text{inter}}, o(i)=k, o(j)=l \}$. As a specific instantiation of $\varphi$ in Eq.~(\ref{eq:somp}), here each object is considered as a node and the original object information in Eq.~(\ref{eq:vector-interaction}-\ref{eq:update-mlp}) is omitted (this is why we denote the second input of $\varphi_2$ as the empty set $\emptyset$). Furthermore, we leverage a pooling of the updated particle-level information from the first stage as the object-level interactions, or formally, we re-define the calculation of ``$\ominus$'' for two different objects indexed by $k, l$ as $\vec{\mC}_k\ominus\vec{\mC}_l = \text{Mean-Pool}_{(i,j)\in\gE_{\text{inter}}}\left(\vec{\mZ}'_i\ominus\vec{\mZ}'_j \right)$, and similarly, $\vc_k\| \vc_l = \text{Mean-Pool}_{(i,j)\in\gE_{\text{inter}}}\left( \vh'_i\| \vh'_j \right)$. 

\textbf{3. Particle-level inner-object message passing.} Finally, given the updated object state information, we carry out the message passing for the particles only along the inner-object edges $\gE_{\text{inner}}=\{(i,j):(i,j)\in\gE, o(i)=o(j)\}$, namely,
\begin{align}
    \{(\vec{\mZ}''_i, \vh''_i)\}_{i=1}^N &= \varphi_3\left(\{(\vec{\mZ}_i,\vh_i)\}_{i=1}^N, \{(\vec{\mC}'_{k}, \vc'_{k})\}_{k=1}^M, \gE_{\text{inner}} \right).
\end{align}
\begin{figure}[t]
    \centering
    \includegraphics[width=0.95\textwidth]{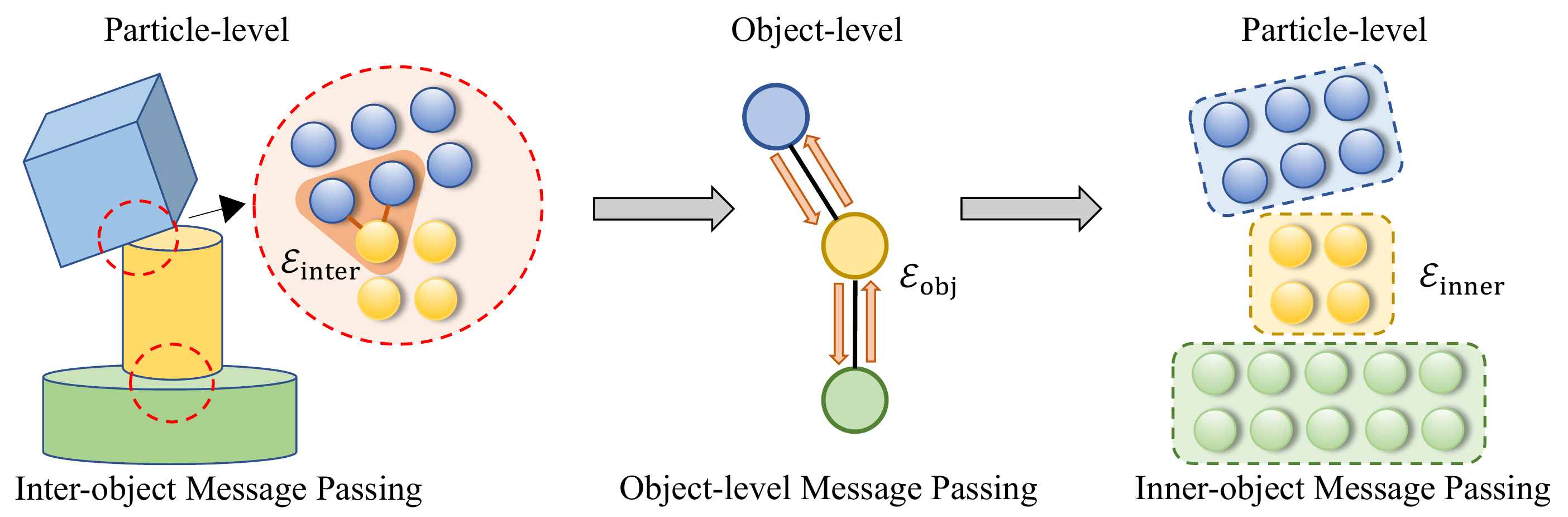}
    \caption{The overall architecture with hierarchical modeling, specified as three stages.}
    \label{fig:architecture}
\end{figure}
We use $\{\vec{\mZ}''_i\}_{i=1}^N$ as the proposal of the positions in the next time step, akin to the way in~\textsection~\ref{sec:SOMP}.

\section{Experiments}
\label{sec:exp}
We conduct evaluations on Physion~\cite{bear2021physion} and RigidFall~\cite{li2020visual}. Physion~\cite{bear2021physion} is a large scale dataset created by the ThreeDWorld simulator~\cite{gan2021threedworld}, which consists of eight different scenarios, including Dominoes, Contain, Collide, Drop, Roll, Link, Support, and Drape. These scenarios possess diverse physical scenes with complicated object interactions, serving as a challenging benchmark for evaluating dynamics models. Particularly, Drape also involves simulating \emph{deformable} objects. RigidFall~\cite{li2020visual} is a simulation dataset whose scenes involve several cubes falling and colliding under varying magnitude of gravitational acceleration.

\subsection{Baselines} 
We compare SGNN with several SOTA particle-based GNN simulators, including two non-equivariant models GNS~\cite{sanchez2020learning} and DPI~\cite{li2018learning}, as well as two equivariant models EGNN~\cite{satorras2021en} and GMN~\cite{huang2022equivariant}. Notably, the data preprocessing provided by Physion~\cite{bear2021physion} for particle-based methods does not consider the camera-angle for the rotation of 3D coordinates, leading to the situation that in some scenarios, \emph{e.g.}, Dominoes, Link, Support, and Collide, there exist a directional bias (from left to right horizontally) in the trajectories. This motivates us to consider three evaluation protocols: \textbf{1.} Use the original training and testing set with directional bias. \textbf{2.} Train on the original set and test on the randomly-rotated testing set. \textbf{3.} Train on the randomly-rotated training set (akin to applying data augmentation of rotations~\cite{benton2020learning}), and test on the randomly-rotated testing set. Here all rotations are restricted to the ones around gravity. We dub the results obtained from the three cases GNS*, GNS, and GNS-Rot for GNS, and similarly for DPI. For EGNN and GMN, we also propose to adapt them to be subequivariant (EGNN-S and GMN-S) by adding an extra vector along the gravity in their update of velocities. Details are in Appendix~\ref{sec:impl-detail}. Note that EGNNs, GMNs, and our SGNN always produce exactly the same result in the three scenarios due to their equivariance (or subequivariance). For RigidFall, such bias is not observed since the cubes are falling vertically driven by gravity.

\begin{table}[t]
  \centering
  \small
   \setlength{\tabcolsep}{4.5pt}
  \caption{Contact prediction accuracy (\%) on Physion. Results are averaged across 3 runs.}
    \begin{tabular}{lcccccccc}
    \toprule
    & Dominoes & Contain & Link  & Drape & Support & Drop  & Collide & Roll \\
    \midrule
    GNS*~\cite{sanchez2020learning}  & 78.6\tiny{$\pm$0.9} & 71.6\tiny{$\pm$1.6} & 66.7\tiny{$\pm$1.5} & 58.8\tiny{$\pm$1.0} & 68.2\tiny{$\pm$1.6} & 65.3\tiny{$\pm$1.1} & \textbf{86.1}\tiny{$\pm$0.5} & 81.3\tiny{$\pm$1.8} \\
    DPI*~\cite{li2018learning}  & 82.3\tiny{$\pm$1.3} & 72.3\tiny{$\pm$1.8} & 63.7\tiny{$\pm$2.2} & 53.3\tiny{$\pm$0.9} & 64.8\tiny{$\pm$2.0} & 70.7\tiny{$\pm$0.8} & 84.4\tiny{$\pm$0.7} & 82.3\tiny{$\pm$0.6} \\
    \midrule
    GNS~\cite{sanchez2020learning}   & 53.3\tiny{$\pm$1.8} & 70.7\tiny{$\pm$2.2} & 58.0\tiny{$\pm$1.7} & 58.0\tiny{$\pm$1.3} & 61.3\tiny{$\pm$1.7} & 65.0\tiny{$\pm$0.7} & 76.7\tiny{$\pm$1.2} & 80.0\tiny{$\pm$0.4} \\
    DPI~\cite{li2018learning}   & 57.3\tiny{$\pm$2.2} & 71.9\tiny{$\pm$1.3} & 63.0\tiny{$\pm$1.9} & 52.3\tiny{$\pm$1.2} & 58.0\tiny{$\pm$1.1} & 68.7\tiny{$\pm$0.7} & 78.7\tiny{$\pm$1.3} & 80.5\tiny{$\pm$0.8} \\
    \midrule
    GNS-Rot~\cite{sanchez2020learning} & 74.7\tiny{$\pm$1.2} & 72.7\tiny{$\pm$2.0} & 63.1\tiny{$\pm$2.1} & 57.3\tiny{$\pm$1.0} & 65.5\tiny{$\pm$1.3} & 64.7\tiny{$\pm$1.3} & 84.0\tiny{$\pm$1.0} & 79.4\tiny{$\pm$2.1} \\
    DPI-Rot~\cite{li2018learning} & 72.7\tiny{$\pm$1.7} & 69.4\tiny{$\pm$2.1} & 65.2\tiny{$\pm$2.3} & 52.8\tiny{$\pm$1.0} & 66.7\tiny{$\pm$0.8} & 72.3\tiny{$\pm$0.5} & 83.2\tiny{$\pm$1.6} & 80.1\tiny{$\pm$0.5} \\
    \midrule
    EGNN~\cite{satorras2021en}  & 61.3\tiny{$\pm$1.1} & 66.0\tiny{$\pm$1.9} & 52.7\tiny{$\pm$1.2} & 54.7\tiny{$\pm$0.8} & 60.0\tiny{$\pm$0.5} & 63.3\tiny{$\pm$0.9} & 76.7\tiny{$\pm$1.4} & 79.8\tiny{$\pm$0.4} \\
     EGNN-S  & 72.0\tiny{$\pm$2.1} & 64.6\tiny{$\pm$1.8} & 55.3\tiny{$\pm$2.1} & 55.3\tiny{$\pm$1.0} & 60.5\tiny{$\pm$1.5} & 69.3\tiny{$\pm$2.0} & 79.3\tiny{$\pm$1.5} & 81.6\tiny{$\pm$0.9} \\
    GMN~\cite{huang2022equivariant}  & 54.7\tiny{$\pm$0.5} & 57.6\tiny{$\pm$2.2} & 54.5\tiny{$\pm$1.6} & 57.6\tiny{$\pm$1.3} & 55.1\tiny{$\pm$1.6} & 54.2\tiny{$\pm$1.1} & 79.5\tiny{$\pm$1.5} & 81.3\tiny{$\pm$1.3} \\
        GMN-S  & 55.6\tiny{$\pm$1.3} & 65.3\tiny{$\pm$1.7} & 55.1\tiny{$\pm$2.8} & 57.0\tiny{$\pm$1.8} & 59.3\tiny{$\pm$2.2} & 57.3\tiny{$\pm$1.5} & 81.2\tiny{$\pm$1.2} & 79.3\tiny{$\pm$1.5} \\
    \midrule
    Our SGNN  & \textbf{89.1}\tiny{$\pm$1.5} & \textbf{78.1}\tiny{$\pm$1.5} & \textbf{73.3}\tiny{$\pm$1.1} & \textbf{60.6}\tiny{$\pm$0.5} & \textbf{71.2}\tiny{$\pm$0.9} & \textbf{74.3}\tiny{$\pm$1.0} & 85.3\tiny{$\pm$1.1} & \textbf{84.2}\tiny{$\pm$0.6} \\
    \bottomrule
    \end{tabular}%
  \label{tab:physion_acc}%
\end{table}%

\begin{figure}
    \centering
    \includegraphics[width=0.99\linewidth]{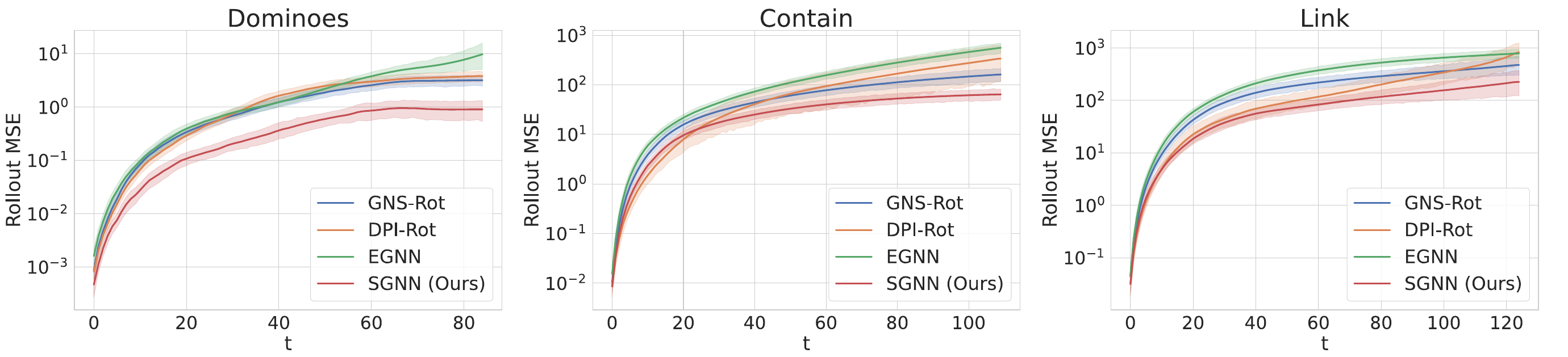}
    \caption{The rollout-MSE curves on Dominoes, Link, and Contain. Our model achieves the lowest MSE. Colored areas indicate 95\% confidence intervals. Full results in Appendix~\ref{sec:mse}.}
    \label{fig:rollout_mse}
    \vskip -0.2in
\end{figure}


\subsection{Evaluation on Physion}

\textbf{Experimental setup.} We strictly follow the training and evaluation protocol proposed in~\cite{bear2021physion} for the particle-based simulators. We employ two evaluation metrics: \textbf{1.} Contact prediction accuracy. This metric, as also adopted by~\cite{bear2021physion}, evaluates whether two targeted objects labeled in the dataset will contact during the model rollout. \textbf{2.} Rollout MSE, the mean squared error between model rollout and the ground truth. We reuse the training protocol and hyper-parameters suggested in Physion for baselines and our model to ensure a fair comparison. In detail, all MLPs are initialized with 3 projection layers and a hidden dimension of 200. The networks are trained with an Adam optimizer, using an initial learning rate 0.0001 and an early-stopping of 10 epochs on the validation loss. We use 4 iterations in each message passing of our model. 
We run RANSAC~\cite{fischler1981random} during testing on all models to help enforce the rigid constraint as suggested by Physion. More experimental details including a comparison of computational complexity are deferred to Appendix~\ref{sec:impl-detail}.

\textbf{Results.} We present the average contact prediction accuracy with std in Table~\ref{tab:physion_acc}, and the rollout MSE in Fig.~\ref{fig:rollout_mse}. The detailed MSE on all 8 scenarios is deferred to Appendix~\ref{sec:mse} due to space limit. We interpret our results by answering the following questions. \textbf{Q1.} \emph{Is maintaining the symmetry important to models that learn to simulate?} For the non-equivariant models GNS and DPI, their performance usually encounters a significant drop if they are trained on the original data but tested on the rotated, particularly on the scenarios with bias in directions. For example, the accuracy of GNS drops from 78.6\% to 53.3\% on Dominoes when comparing GNS* with GNS. Such observation also holds for DPI and on more scenarios like Collide and Support. Using data augmentation would help in relieving the issue, but is still hard to recover the performance. The accuracy on Dominoes yielded by GNS-Rot is 74.7\%, still worse than 78.6\% by GNS*. Our SGNN instead leverages subequivariance to generalize to all horizontal directions, and is always guaranteed to produce the same prediction regardless of any rotations around gravity, and there is also no need to apply any data augmentation of rotations during training. Interestingly, EGNN-S and GMN-S offers improvements over EGNN and GMN, implying the necessity of subequivariance, but are still outperformed by SGNN by a large margin. \textbf{Q2.} \emph{Does SGNN learn the physics and simulate the dynamics better than other models?} As displayed in Table~\ref{tab:physion_acc}, SGNN achieves the highest prediction accuracy on 7 out of the 8 scenarios, and is also very competitive on Collide. The improvements are very significant (>5\% accuracy enhancement over the second best) on scenarios including Dominoes, Contain, and Link, where there are multiple objects contacting, colliding, and interacting. This observation is also supported by the rollout MSE curves in Fig.~\ref{fig:rollout_mse}, showing our SGNN consistently yielding lower prediction error along the trajectories. \textbf{Q3.} \emph{How does SGNN perform compared with other equivariant models?} For EGNN and GMN, enforcing such a strong constraint of E($3$)-equivariance leads to inferior performance in scenes with gravity. For instance, in Dominoes, Contain, and Link, the accuracy of EGNN and GMN only stays around 60\%, while SGNN achieves promising results with subequivariant object-aware message passing.
\begin{figure}[t]
    \centering
    \includegraphics[width=0.95\textwidth]{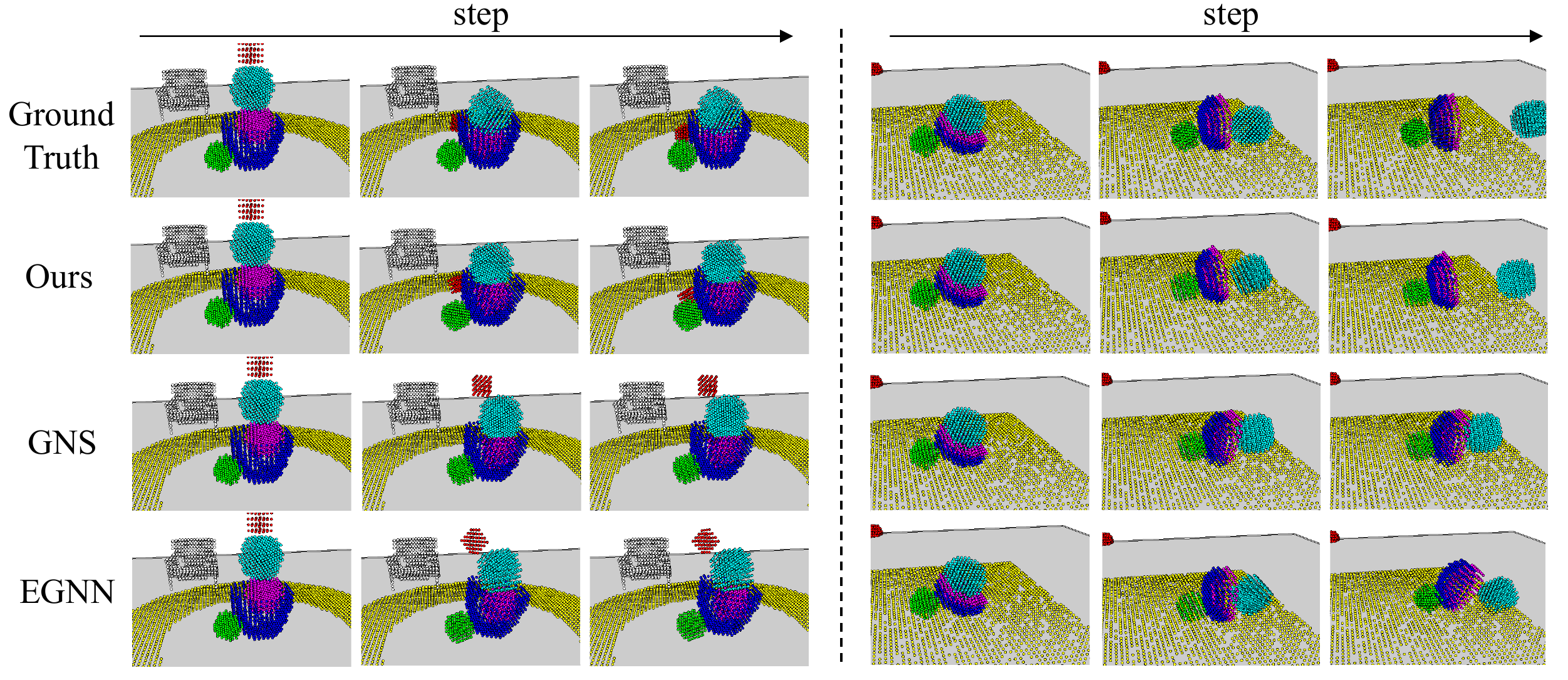}
    \caption{Visualization examples on Physion. Our model yields accurate long-term predictions.}
    \label{fig:physion_vis}
\end{figure}

\textbf{Generalization across scenarios.} We further evaluate the generalization capability of our model across various scenarios. To fulfill this goal, we leverage models trained in certain scenarios
\begin{wrapfigure}[13]{r}{0.55\textwidth}
  \begin{center}
  \vskip -0.20in
    \includegraphics[width=0.55\textwidth]{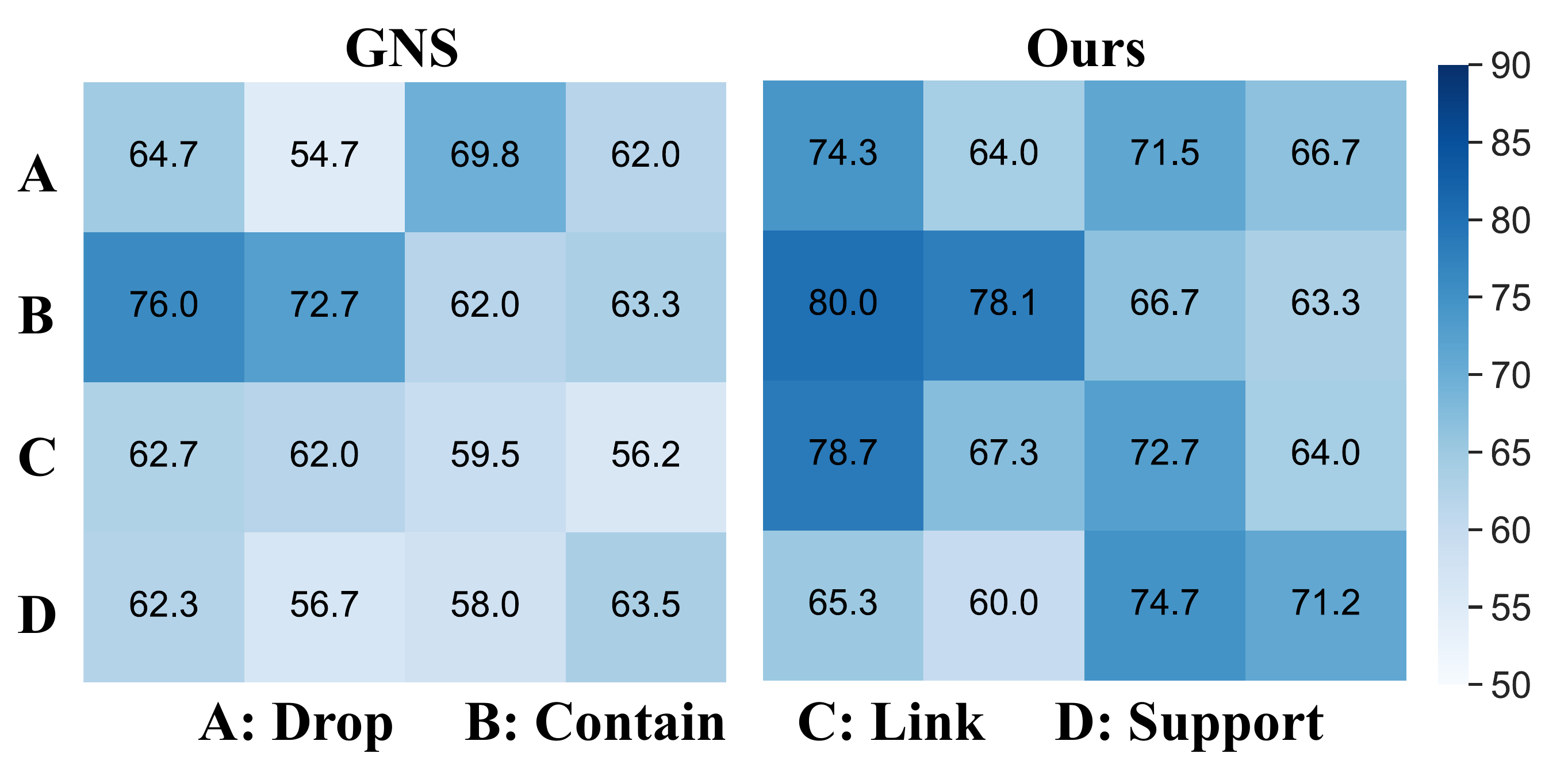}
  \end{center}
  \vskip-0.15in
  \caption{Generalization across tasks. Row/column indicates the training/testing scenario, respectively.}
  \label{fig:generalization}
\end{wrapfigure}
to test them in others. We summarize the results in Fig.~\ref{fig:generalization}, where the models are trained in the scenarios indexed by the rows and evaluated on those indexed by the columns. It is observed that our model exhibits significantly stronger generalization than GNS, the best-performed baseline on Physion. For example, our model, although trained with the dynamics of Contain or Link, still yields a high accuracy (>78\%) when tested on Drop, showcasing that our model is more advantageous in learning the dynamics and interactions of various objects.


\begin{wraptable}{r}{0.5\textwidth}
  \centering
  \small
    \setlength{\tabcolsep}{3.2pt}
  \vskip -0.15in
  \caption{Ablation studies.}
    \begin{tabular}{lccc}
    \toprule
          & Dominoes & Contain & Drop \\
    \midrule
    SGNN  &  89.1     & 78.1 & 74.1 \\
    \midrule
    E($3$)-equivariance & 79.6      & 65.3 & 68.1\\
    w/o object-aware & 81.2      & 68.7 & 70.1\\
    w/o hierarchy & 83.5      & 72.5 & 71.3\\
    w/o edge separation &  85.0     & 74.8  & 72.0\\
    \midrule
    Steer-SE(2)-GNN~\cite{weiler2019general}  &  59.1 & 66.7 & 66.7 \\
    \bottomrule
    \end{tabular}%
  \label{tab:abstudy}%
\end{wraptable}
\textbf{Ablation studies.} Here we conduct several ablations to inspect how our proposed components contribute to the overall performance, and the results are given in Table~\ref{tab:abstudy}.
\textbf{1.} Subequivariance. We replace our subequivariant formulation of $\phi_{\vec{\vg}}$ and $\psi_{\vec{\vg}}$ by the E($3$)-equivariant counterpart without gravity. It is clear that, in this way, the model is restricted by strong equivariance constraints, which leads to a significant degradation in performance. This verifies the necessity of relaxing the full-equivariant model with our introduced method for physical scenarios with gravity. Moreover, we extend the Steerable E(2)-CNNs~\cite{weiler2019general,cesa2021program} to GNNs as an alternative to obtain equivariance on the SE(2) subgroup (more details are results are in Appendix~\ref{sec:se2gnn}). Although better than EGNN, the adapted Steerable SE(2)-GNN is still inferior to SGNN, implying our physics-inspired SOMP is more advantageous on simulating physical dynamics.
\textbf{2.} Object-aware message passing. We replace $\vec{\mC}_k$ and $\vc_k$ in Eq.~(\ref{eq:particle-level-inter}) by zeros, eliminating the object information from message passing. Without this necessary information, the model might fail to capture useful geometric vectors like $\vec{\vx}_i-\vec{\vx}_{o(i)}$, which is closely related to physical quantities like torque.
\textbf{3.} Hierarchy. We compare our model with its counterpart without hierarchy. The flat model encounters an average of 4\% drop in the prediction accuracy, showcasing the advantages of leveraging the particle- and object-level message passing for modeling complex object interactions in physical scenes.
\textbf{4.} Edge separation. We employ different sets of edges in the inter-object and inner-object message passing. We argue that it relieves the learning complexity of the message-passing with different types of relations. It is observed that removing edge separation leads to detriment in the accuracy in various scenarios.

\textbf{Generalization toward other rotations.} 
To further investigate whether incorporating our subequivariance truly helps the model to learn the effect of gravity, we apply a rotation around a \emph{non-gravity}
\begin{wrapfigure}[9]{r}{0.50\textwidth}
  \begin{center}
  \vskip -0.22in
    \includegraphics[width=0.50\textwidth]{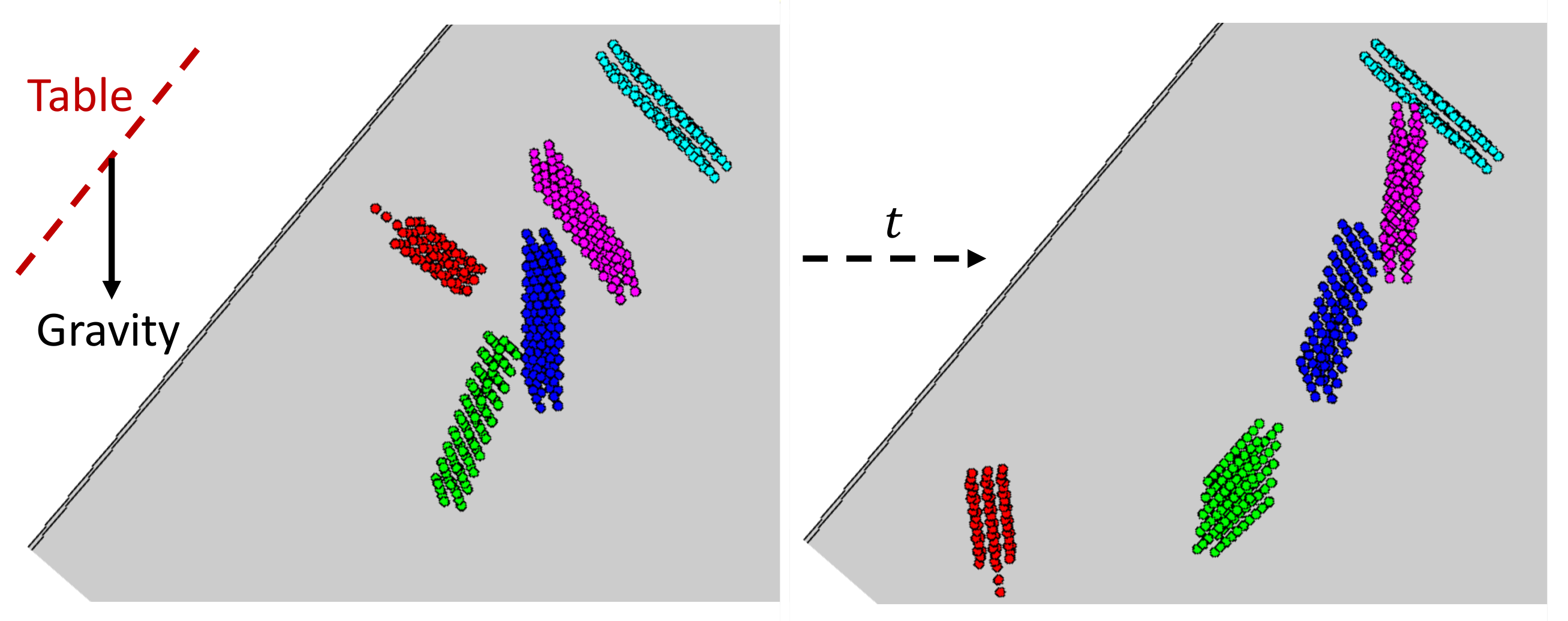}
  \end{center}
  \vskip-0.15in
  \caption{Rotation around a non-gravity axis.}
  \label{fig:rotation}
\end{wrapfigure}
axis, resulting in scenarios in Fig.~\ref{fig:rotation} where dominoes are placed on an incline while gravity still points downwards vertically. Interestingly, SGNN well generalizes to this novel scenario and reasonably simulates the effect of gravity. Particularly, the domino at the bottom starts to slide down along the table driven by gravity. More visualizations are provided in the \href{https://hanjq17.github.io/SGNN/}{video}.

\textbf{Qualitative comparisons.} We provide visualization samples on Physion in Fig.~\ref{fig:physion_vis}. More cases are shown by our \href{https://hanjq17.github.io/SGNN/}{Supplementary Video}. Our model consistently yields accurate predictions across various scenarios.

\subsection{Evaluation on RigidFall}

\begin{table}[htbp]
  \centering
  \small
     \setlength{\tabcolsep}{4.2pt}
  \caption{Rollout MSE ($\times 10^{-2}$) on RigidFall. $|\text{Train}|$ denotes the number of training samples.}
    \begin{tabular}{lcccccccc}
    \toprule
          & \multicolumn{2}{c}{$|\text{Train}|= 200$} & \multicolumn{2}{c}{$|\text{Train}|= 500$} & \multicolumn{2}{c}{$|\text{Train}|= 1000$} & \multicolumn{2}{c}{$|\text{Train}|= 5000$} \\
          & $t=20$  & $t=40$  & $t=20$  & $t=40$  & $t=20$  & $t=40$  & $t=20$  & $t=40$ \\
    \midrule
    GNS~\cite{sanchez2020learning}   &  2.40\tiny{$\pm$1.32}     &  6.79\tiny{$\pm$3.43}     &  1.89\tiny{$\pm$0.87}     &  5.41\tiny{$\pm$2.78}    &  1.09\tiny{$\pm$0.70}     &  3.38\tiny{$\pm$2.15}     &     0.65\tiny{$\pm$0.43}  &  2.38\tiny{$\pm$1.66}\\
    DPI~\cite{li2018learning}   &  1.71\tiny{$\pm$0.87}     &  5.37\tiny{$\pm$2.86}     &  1.48\tiny{$\pm$0.78}     &  4.67\tiny{$\pm$2.55}     &   1.24\tiny{$\pm$0.72}    &  3.97\tiny{$\pm$2.35}    &  0.52\tiny{$\pm$0.36}     &  2.40\tiny{$\pm$1.53}  \\
    EGNN~\cite{satorras2021en}  &  1.89\tiny{$\pm$1.34}     &  5.66\tiny{$\pm$2.81}     &  1.07\tiny{$\pm$0.71}     &  3.94\tiny{$\pm$2.20}     &  0.95\tiny{$\pm$0.44}     &  2.72\tiny{$\pm$1.75}     &   0.78\tiny{$\pm$0.45}    &  2.63\tiny{$\pm$1.72}  \\
    GMN~\cite{huang2022equivariant}   &  2.74\tiny{$\pm$1.24}     & 7.08\tiny{$\pm$3.37}     &  2.50\tiny{$\pm$0.98}     &  6.34\tiny{$\pm$3.11}     &  1.91\tiny{$\pm$1.10}     &   4.50\tiny{$\pm$2.21}   & 1.36\tiny{$\pm$0.63}       & 3.50\tiny{$\pm$1.98}   \\
    \midrule
    Our SGNN  &  \textbf{0.72}\tiny{$\pm$0.60}     & \textbf{2.44}\tiny{$\pm$1.74}      &  \textbf{0.39}\tiny{$\pm$0.23}     &  \textbf{1.47}\tiny{$\pm$1.26}    &   \textbf{0.31}\tiny{$\pm$0.17}    &  \textbf{1.09}\tiny{$\pm$0.93}    &  \textbf{0.20}\tiny{$\pm$0.14}   & \textbf{1.01}\tiny{$\pm$0.97}   \\
    \bottomrule
    \end{tabular}%
  \label{tab:rigidfall_mse}%
\end{table}%

\textbf{Experimental setup.} We use the code provided by~\cite{li2018learning}. We record the rollout MSE at $t=20$ and $40$. To compare the data efficiency of different models, we consider four cases with the size of training set ranging from 200 to 5000. More details are in Appendix~\ref{sec:impl-detail}.


\textbf{Results.} The results are presented in Table~\ref{tab:rigidfall_mse} and visualized in Fig.~\ref{fig:rigidfall_vis}. Our model consistently yields the best predictions regardless of the time step $t$ and the size of training set. Notably, the rollout error still maintains very low even if only 200 training samples are provided, verifying the strong data-efficiency and generalization of SGNN.

\begin{figure}[t]
    \centering
    \vskip -0.15in
    \includegraphics[width=0.90\textwidth]{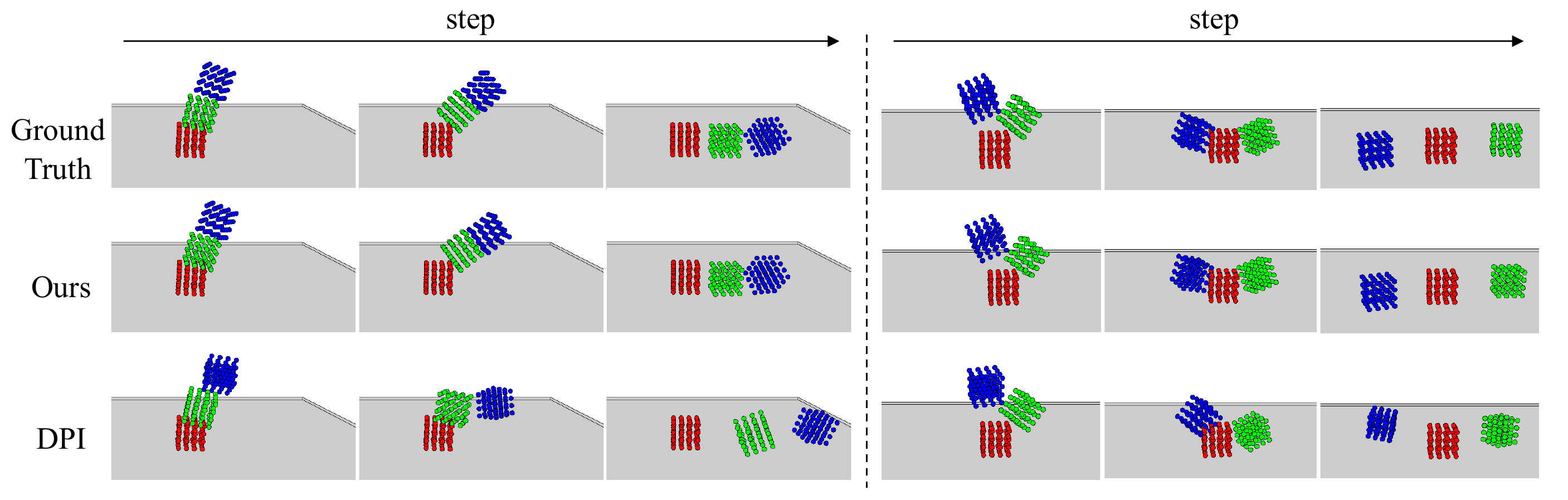}
    \vskip -0.1in
    \caption{Visualization examples on RigidFall. Our SGNN provides accurate simulation trajectories.}
    \label{fig:rigidfall_vis}
\end{figure}


\section{Discussion}
\label{sec:discussion}
\textbf{Limitations and Future Work.} Our model relies on the particle-based representation of the physical systems, which, in real-world scenarios, might be difficult to obtain, or usually with noise. Our model does not explicitly consider self-contact. Future directions include augmenting our dynamics model with strong physical priors like Hamiltonian~\cite{sanchez2019hamiltonian,zhong2021extending}, or combining it with groundings of visual prior~\cite{li2020visual} with application to more sophisticated tasks like physical reasoning or robot manipulation.

\textbf{Conclusion.}
We propose Subequivariant Graph Neural Networks for modeling physical dynamics of multiple interacting objects. We inject appropriate symmetry into a hierarchical message passing framework, and takes into account both particle- and object-level state messages. In particular, subequivariance is a novel concept for characterizing the relaxation of equivariance from a group to its subgroup. We show that relaxation from full equivariance to subequivariance can be applied to systems with gravity involved.  Experiments verify that SGNN accurately captures the dynamics with the guarantee of the desirable symmetry, and exhibits strong generalization with high data-efficiency.

\begin{ack}
We thank the anonymous reviewers for their constructive suggestions. This project was in part supported by MIT-IBM Watson AI Lab, Amazon Research Award Mitsubishi Electric. Dr. Wenbing Huang was supported by the following projects: the National Natural Science Foundation of China (No.62006137); Guoqiang Research Institute General Project, Tsinghua University (No. 2021GQG1012); Beijing Outstanding Young Scientist Program (No. BJJWZYJH012019100020098). 
\end{ack}

\bibliographystyle{plain}
\bibliography{neurips_2022}

\clearpage
\appendix 

\setcounter{equation}{12}
\setcounter{table}{3}
\setcounter{figure}{6}


\section{Theoretical Preliminaries and Proofs}
\label{sec:proof}

In the main paper, we have sketched the definitions and conclusions related to E(3) equivariance and subequivariance. Here, we introduce more details to facilitate the understanding for these conceptions. 

We first recap the orthogonal group $\text{O}(3)=\{\mO\in\R^{3\times 3}\mid \mO^\top\mO=\mI_3\}$ and the translation group $\text{T}(3)=\{\vt\in\R^3\}$. Then the Euclidean group is given by $\text{E}(3)=\text{O}(3)\ltimes\text{T}(3)$, where $\ltimes$ denotes the semidirect product. Basically, E(3) is a group of orthogonal transformations (rotations and reflections) and translations. 

Definition~\ref{def:equ} is rewritten as:
\setcounter{definition}{0}
\begin{definition}[Equivariance and Invariance]
We call that the function $f:\R^{3\times m}\times\R^{n}\rightarrow \R^{3 \times m'}$ is $G$-equivariant, if for any transformation $g$ in a group $\text{G}$, $f(g\cdot\Vec{\mZ},\vh)=g\cdot f(\Vec{\mZ},\vh)$, $\forall\Vec{\mZ}\in\R^{3\times m}$, $\forall\vh\in\R^{n}$. Similarly, $f$ is invariant if $f(g\cdot\Vec{\mZ},\vh)= f(\Vec{\mZ},\vh)$, $\forall g\in \text{G}$.  
\end{definition}

Eq.~(\ref{eq:GMN}) in the main paper is repeated as:
\begin{align}
    f(\Vec{\mZ},\vh)\coloneqq\Vec{\mZ}\mV, \quad\text{s.t.} \mV=\sigma(\Vec{\mZ}^{\top}\Vec{\mZ},\vh),
\end{align}
where the Multi-Layer Perceptron (MLP) function $\sigma:\sR^{m\times m+n}\mapsto\sR^{m\times m'}$ produces $\mV\in\R^{m\times m'}$. It is easy to justify that the function defined in Eq.~(\ref{eq:GMN}) is O(3)-equivariant. Moreover, by joining the analyses in GMN along with the conclusion by~\cite{villar2021scalars}, we immediately have the universality of the formulation in Eq.~(\ref{eq:GMN}):
\begin{proposition} [\cite{huang2022equivariant,villar2021scalars}]
\label{th:equ}
Let $f$ be defined by Eq.~(\ref{eq:GMN}). For any O(3)-equivariant function $\hat{f}(\Vec{\mZ},\vh)$, there always exists an MLP $\sigma$ satisfying $\|\hat{f}-f\|<\epsilon$ for arbitrarily small positive value $\epsilon$.
\end{proposition}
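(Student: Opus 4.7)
The plan is to establish Proposition~\ref{th:equ} by combining a structural characterization of O(3)-equivariant maps (following \cite{villar2021scalars}) with the classical First Fundamental Theorem for the orthogonal group, and then invoking the standard universal approximation theorem for MLPs. Since the output dimension $m'$ decouples column-by-column, it suffices to treat the $m'=1$ case, i.e.\ $\hat f:\R^{3\times m}\times\R^n\to\R^{3}$, and then reassemble.

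The first and main step is to show that any continuous O(3)-equivariant $\hat f(\vec{\mZ},\vh)$ admits a decomposition $\hat f(\vec{\mZ},\vh)=\vec{\mZ}\,\hat{\mV}(\vec{\mZ},\vh)$ with $\hat{\mV}:\R^{3\times m}\times\R^n\to\R^{m}$ an O(3)-invariant function. The geometric observation is the following: for any fixed $(\vec{\mZ},\vh)$, let $U=\mathrm{colspan}(\vec{\mZ})$ and pick an orthogonal transformation $\mO\in O(3)$ that fixes $U$ pointwise and acts as $\pm\mathrm{id}$ on $U^\perp$. Then $\mO\vec{\mZ}=\vec{\mZ}$, so equivariance forces $\hat f(\vec{\mZ},\vh)=\mO\hat f(\vec{\mZ},\vh)$, meaning $\hat f(\vec{\mZ},\vh)\in U$. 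Hence on the open dense set where $\vec{\mZ}$ has full column rank we may set $\hat{\mV}(\vec{\mZ},\vh)=(\vec{\mZ}^{\top}\vec{\mZ})^{-1}\vec{\mZ}^{\top}\hat f(\vec{\mZ},\vh)$, and verify O(3)-invariance by a direct substitution $\vec{\mZ}\mapsto\mO\vec{\mZ}$.

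The second step uses the First Fundamental Theorem of Invariant Theory for O(3): two configurations $\vec{\mZ}_1,\vec{\mZ}_2\in\R^{3\times m}$ lie in the same O(3)-orbit if and only if $\vec{\mZ}_1^{\top}\vec{\mZ}_1=\vec{\mZ}_2^{\top}\vec{\mZ}_2$. Therefore the O(3)-invariant function $\hat{\mV}(\vec{\mZ},\vh)$ factors through the Gram matrix: there exists a continuous $\tilde\sigma:\R^{m\times m}\times\R^n\to\R^{m}$ with $\hat{\mV}(\vec{\mZ},\vh)=\tilde\sigma(\vec{\mZ}^{\top}\vec{\mZ},\vh)$. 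Continuity of $\tilde\sigma$ on the cone of Gram matrices follows from continuity of $\hat{\mV}$ on the full-rank stratum together with the quotient topology on $\R^{3\times m}/O(3)$. Finally, by the universal approximation theorem, on any compact subset of the domain we can choose an MLP $\sigma$ with $\|\sigma-\tilde\sigma\|_\infty<\epsilon/(\sup\|\vec{\mZ}\|)$, giving $\|\hat f-f\|<\epsilon$ with $f$ as in Eq.~(\ref{eq:GMN}); reassembling across $m'$ output columns completes the proof.

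The main obstacle is the continuous extension of the factorization $\hat f=\vec{\mZ}\hat{\mV}$ from the full-rank stratum to rank-deficient configurations, since the pseudo-inverse $(\vec{\mZ}^{\top}\vec{\mZ})^{-1}\vec{\mZ}^{\top}$ is discontinuous at rank drops. The resolution is that we only need $\tilde\sigma$ to be continuous as a function of the Gram matrix on its image, and to be approximated uniformly on a compact of interest; any measurable extension across the lower-dimensional rank-deficient locus does not affect uniform approximation over compacts of $\vec{\mZ}$. Alternatively one can appeal directly to the equivariant scalarization theorem of \cite{villar2021scalars}, which bundles these analytic subtleties into a ready-to-use statement, and then only the MLP approximation of $\tilde\sigma$ remains to be carried out.
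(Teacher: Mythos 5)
Your argument follows essentially the same route as the paper's own proof: the reflection trick (an orthogonal map fixing $\mathrm{colspan}(\Vec{\mZ})$ and negating its complement) to force $\hat{f}$ into the column span, followed by the characterization of O(3)-invariants as functions of the Gram matrix $\Vec{\mZ}^{\top}\Vec{\mZ}$, and finally MLP universal approximation. Your added care about continuity at rank-deficient configurations is a welcome refinement of a point the paper leaves implicit, but it does not change the structure of the argument.
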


To prove this theorem, we require the following two lemmas:
\setcounter{theorem}{0}
\begin{lemma}
\label{le:span}
For any O(3)-equivariant function $\hat{f}(\Vec{\mZ},\vh)$, it must fall into the subspace spanned by the columns of $\Vec{\mZ}$, namely, there exists a function $s(\Vec{\mZ},\vh)$, satisfying $\hat{f}(\Vec{\mZ},\vh)=\Vec{\mZ}s(\Vec{\mZ},\vh)$.
\end{lemma}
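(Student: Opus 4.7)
The plan is to exploit $O(3)$-equivariance by identifying the stabilizer subgroup of $\Vec{\mZ}$ and showing its fixed points in $\R^3$ are exactly the column span of $\Vec{\mZ}$. First, fix $(\Vec{\mZ}, \vh)$ and let $V = \mathrm{col}(\Vec{\mZ}) \subseteq \R^3$. Consider the stabilizer $H = \{\mO \in O(3) : \mO \Vec{\mZ} = \Vec{\mZ}\}$. An $\mO$ lies in $H$ iff it fixes every column of $\Vec{\mZ}$, which is equivalent to fixing all of $V$ pointwise. A standard linear-algebra argument then identifies $H$ with $O(V^\perp)$, i.e.\ the orthogonal maps that act as identity on $V$ and arbitrarily orthogonally on $V^\perp$.

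Next, by $O(3)$-equivariance, for every $\mO \in H$,
\begin{equation*}
\hat{f}(\Vec{\mZ}, \vh) = \hat{f}(\mO \Vec{\mZ}, \vh) = \mO\, \hat{f}(\Vec{\mZ}, \vh).
\end{equation*}
So each column of $\hat{f}(\Vec{\mZ}, \vh)$ is fixed by every element of $H$. Decomposing any $\vy \in \R^3$ as $\vy = \vy_V + \vy_{V^\perp}$, the condition $\mO \vy = \vy$ for all $\mO \in H$ forces $\vy_{V^\perp} = \vec{0}$: whenever $\dim V^\perp \geq 1$, the group $O(V^\perp)$ fixes only the zero vector in $V^\perp$ (pick a reflection through any hyperplane of $V^\perp$ and average, or just use that $O(k)$ acts transitively on the unit sphere for $k \geq 1$). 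If $\dim V^\perp = 0$, the containment $\vy \in V$ is automatic. Hence every column of $\hat{f}(\Vec{\mZ}, \vh)$ lies in $V = \mathrm{col}(\Vec{\mZ})$, which is precisely the assertion that there exists $s(\Vec{\mZ}, \vh) \in \R^{m \times m'}$ with $\hat{f}(\Vec{\mZ}, \vh) = \Vec{\mZ}\, s(\Vec{\mZ}, \vh)$.

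To make $s$ well-defined (since it is not unique whenever the columns of $\Vec{\mZ}$ are linearly dependent), take the canonical choice $s(\Vec{\mZ}, \vh) \coloneqq \Vec{\mZ}^+ \hat{f}(\Vec{\mZ}, \vh)$, where $\Vec{\mZ}^+$ is the Moore--Penrose pseudoinverse. Because $\Vec{\mZ} \Vec{\mZ}^+$ is the orthogonal projector onto $\mathrm{col}(\Vec{\mZ}) = V$ and we have just shown $\hat{f}(\Vec{\mZ}, \vh) \in V$ columnwise, multiplication on the left gives $\Vec{\mZ}\, s(\Vec{\mZ}, \vh) = \Vec{\mZ}\Vec{\mZ}^+ \hat{f}(\Vec{\mZ}, \vh) = \hat{f}(\Vec{\mZ}, \vh)$, as required.

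I expect the main subtlety to be the degenerate regimes where $V$ is a proper subspace (rank-deficient $\Vec{\mZ}$): one must ensure both that $H$ really acts as the full orthogonal group on $V^\perp$, and that the fixed-point reduction ``only vectors in $V$ survive'' still goes through when $\dim V \in \{0,1,2\}$. Once this case analysis is in hand, the pseudoinverse construction packages the choice of $s$ uniformly across all $\Vec{\mZ}$, which is what the downstream universality argument (Proposition~\ref{th:equ}) needs in order to invoke a standard MLP approximation of the scalar map $(\Vec{\mZ}^\top \Vec{\mZ}, \vh) \mapsto s(\Vec{\mZ}, \vh)$.
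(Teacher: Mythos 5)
Your proof is correct and follows essentially the same route as the paper's: both arguments exploit an orthogonal transformation that fixes the column space of $\Vec{\mZ}$ pointwise while acting nontrivially on its orthogonal complement (the paper picks the single element acting as $-\mathrm{Id}$ on $V^\perp$, you invoke the full stabilizer $O(V^\perp)$), and conclude that the component of $\hat{f}$ in $V^\perp$ must vanish. Your pseudoinverse construction of $s$ is a harmless extra refinement that the lemma, which only asserts existence, does not require.
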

\begin{proof}
The proof is given by~\cite{villar2021scalars}. Essentially, suppose $\Vec{\mZ}^{\perp}$ is the orthogonal complement of the column space of $\Vec{\mZ}$. Then there must exit functions $s(\Vec{\mZ},\vh)$ and $s^{\perp}(\Vec{\mZ},\vh)$, satisfying $\hat{f}(\Vec{\mZ},\vh)=\Vec{\mZ} s(\Vec{\mZ},\vh) + \Vec{\mZ}^{\perp}s^{\perp}(\Vec{\mZ},\vh)$. We can always find an orthogonal transformation $\mO$ allowing $\mO\Vec{\mZ}=\Vec{\mZ}$ while $\mO\Vec{\mZ}^{\perp}=-\Vec{\mZ}^{\perp}$. With this transformation $\mO$, we have $\hat{f}(\mO\Vec{\mZ},\vh)=\hat{f}(\Vec{\mZ},\vh)=\Vec{\mZ} s(\Vec{\mZ},\vh) + \Vec{\mZ}^{\perp}s^{\perp}(\Vec{\mZ},\vh)$, and $\mO\hat{f}(\Vec{\mZ},\vh)=\Vec{\mZ} s(\Vec{\mZ},\vh) - \Vec{\mZ}^{\perp}s^{\perp}(\Vec{\mZ},\vh)$. The equivariance property of $\hat{f}$ implies $\Vec{\mZ} s(\Vec{\mZ},\vh) + \Vec{\mZ}^{\perp}s^{\perp}(\Vec{\mZ},\vh)=\Vec{\mZ} s(\Vec{\mZ},\vh) - \Vec{\mZ}^{\perp}s^{\perp}(\Vec{\mZ},\vh)$, which derives $s^{\perp}(\Vec{\mZ},\vh)=0$. Hence, the proof is concluded. 
\end{proof}

\begin{lemma}
\label{le:inv}
If the O(3)-equivariant function $\hat{f}(\Vec{\mZ},\vh)$ lies in the subspace spanned by the columns of $\Vec{\mZ}$, then there exists a function $\sigma$ satisfying $\hat{f}(\Vec{\mZ},\vh) =\Vec{\mZ}\sigma(\Vec{\mZ}^\top\mZ, \vh)$.
\end{lemma}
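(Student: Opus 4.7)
The plan is to start from the conclusion of the preceding lemma, which already restricts $\hat{f}$ to lie in the column span of $\vec{\mZ}$, so that $\hat{f}(\vec{\mZ},\vh) = \vec{\mZ} s(\vec{\mZ},\vh)$ for some vector-valued function $s$. The remaining task is then purely to show that $s$ can be chosen to depend on $\vec{\mZ}$ only through the inner-product (Gram) matrix $\vec{\mZ}^\top \vec{\mZ}$.

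The first step would be to argue that, without loss of generality, $s$ itself is O(3)-invariant in its first argument. On the open dense set where $\vec{\mZ}$ has full column rank, $s$ is uniquely determined by the formula $s(\vec{\mZ},\vh) = (\vec{\mZ}^\top \vec{\mZ})^{-1} \vec{\mZ}^\top \hat{f}(\vec{\mZ},\vh)$, and a direct computation using $\hat{f}(\mO\vec{\mZ},\vh) = \mO \hat{f}(\vec{\mZ},\vh)$ and $\mO^\top \mO = \mI_3$ gives $s(\mO\vec{\mZ},\vh) = s(\vec{\mZ},\vh)$. On the measure-zero rank-deficient locus the representation is non-unique, but one can fix a canonical choice (for example the Moore--Penrose pseudoinverse $s(\vec{\mZ},\vh) = \vec{\mZ}^{+} \hat{f}(\vec{\mZ},\vh)$, which satisfies $(\mO\vec{\mZ})^{+} = \vec{\mZ}^{+} \mO^\top$) and extend by continuity, so that $s$ is O(3)-invariant globally.

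The second step is to invoke the first fundamental theorem of invariant theory for O(3), which says that the Gram map $\vec{\mZ}\mapsto \vec{\mZ}^\top\vec{\mZ}$ separates O(3)-orbits: if $\vec{\mZ}_1^\top\vec{\mZ}_1 = \vec{\mZ}_2^\top\vec{\mZ}_2$, then a common SVD $\vec{\mZ}_i = \mU_i \mSigma \mV^\top$ yields $\mO = \mU_2 \mU_1^\top \in \mathrm{O}(3)$ with $\mO\vec{\mZ}_1 = \vec{\mZ}_2$. Combined with the invariance of $s$ established in step one, this shows that $s(\vec{\mZ},\vh)$ depends only on the pair $(\vec{\mZ}^\top\vec{\mZ},\vh)$, so there is a well-defined map $\sigma$ with $s(\vec{\mZ},\vh) = \sigma(\vec{\mZ}^\top\vec{\mZ},\vh)$. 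Substituting back gives $\hat{f}(\vec{\mZ},\vh) = \vec{\mZ}\, \sigma(\vec{\mZ}^\top\vec{\mZ},\vh)$ as required.

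The delicate point I expect to be the main obstacle is the rank-deficient case. When the columns of $\vec{\mZ}$ are linearly dependent, neither the decomposition $\hat{f} = \vec{\mZ} s$ nor the SVD used for orbit separation is unique, so care is needed to make both the definition of $s$ and the O(3) element $\mO$ realizing $\mO\vec{\mZ}_1 = \vec{\mZ}_2$ well-defined; handling this by a pseudoinverse plus a continuity/denseness argument, or equivalently by restricting attention to the full-rank stratum and extending, is the step that must be executed carefully. Once that technicality is in place, the remaining identification of $s$ as a function of $\vec{\mZ}^\top\vec{\mZ}$ follows immediately from orbit separation.
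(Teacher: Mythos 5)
Your argument is correct and follows essentially the same route as the paper, which cites Corollary 2 of GMN and sketches exactly this plan: replace $s$ by an $O(3)$-invariant coefficient function and then factor that invariant through the Gram matrix via orbit separation. Your pseudoinverse choice $s=\vec{\mZ}^{+}\hat{f}$ equals $\mV\mV^{\top}s$ in compact-SVD terms, i.e.\ precisely the canonical representative the paper constructs in its subequivariant analogue (Lemma~4), and since it is globally well defined and globally $O(3)$-invariant, the continuity/density caveat you flag for the rank-deficient locus is not actually needed.
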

\begin{proof}
The proof is provided by Corollary 2 in~\cite{huang2022equivariant}. The basic idea is that $\hat{f}(\Vec{\mZ},\vh)$ can be transformed to $\hat{f}(\Vec{\mZ},\vh)=\Vec{\mZ}\eta(\Vec{\mZ},\vh)$ where $\eta(\Vec{\mZ},\vh)$ is O(3)-invariant. According to Lemma 1-2 in~\cite{huang2022equivariant}, $\eta(\Vec{\mZ},\vh)$ must be written as $\eta(\Vec{\mZ},\vh)=\sigma(\Vec{\mZ}^\top\Vec{\mZ},\vh)$, which completes the proof. 
\end{proof}

With Lemma~\ref{le:span}-\ref{le:inv}, we always have $\hat{f}(\Vec{\mZ},\vh)=\Vec{\mZ} \sigma(\Vec{\mZ}^\top\Vec{\mZ},\vh)$. Since $\sigma$ can be universally approximated by MLP, then the conclusion in Proposition~\ref{th:equ} is proved. 

Nevertheless, the full symmetry is not always guaranteed and could be broken in certain directions owing to external force fields.
For example, the existence of gravity breaks the symmetry by exerting a force field along the gravitational axis $\vec{\vg}\in\R^3$, and the dynamics of the system will naturally preserve a gravitational acceleration in the vertical direction. By this means, the orthogonal symmetry is no longer maintained in every direction but only restricted to the subgroup $\text{O}_{\vec{\vg}}(3)\coloneqq\{\mO\in O(3) | \mO\vec{\vg}=\vec{\vg}\}$, that is, the rotations/reflections around the gravitational axis. We term such a reduction of equivariance as a novel notion: \emph{subequivariance}. 

\begin{definition}[Subequivariance and Subinvariance]
\label{def:subequ}
We call the function $f:\R^{3\times m}\times\R^n\rightarrow \R^{3 \times m'}$ is $O(3)$-subequivariant induced by $\vec{\vg}$, if $f(\mO\Vec{\mZ},\vh)=\mO f(\Vec{\mZ},\vh)$,  $\forall\mO\in \text{O}_{\vec{\vg}}(3)$, $\forall\Vec{\mZ}\in\R^{3\times m}$, $\forall\vh\in\R^n$; and similarly, it is $O(3)$-subinvariant induced by $\vec{\vg}$, if $f(\mO\Vec{\mZ}, \vh)=f(\Vec{\mZ}, \vh)$,  $\forall\mO\in \text{O}_{\vec{\vg}}(3)$.
\end{definition}

Eq.~(\ref{eq:GMN}) is clearly $\text{O}(3)$-subequivariant, but the $\text{O}(3)$-subequivariant function is unnecessarily the form like Eq.~(\ref{eq:GMN}). Considering the example for the gravity itself which maps all particles to the direction $\vec{\vg}$. It is natural to see that the function $f$ by Eq.~(\ref{eq:GMN}) fails to represent $\vec{\vg}$ if $\vec{\vg}$ does not fall into the subsubspace spanned by the columns of $\Vec{\mZ}$. While this example provides a failure, it also inspires us to derive the following augmented version upon Eq.~(\ref{eq:GMN}):
\begin{align}
    f_{\vec{\vg}}(\Vec{\mZ},\vh)=[\Vec{\mZ}, \vec{\vg}]\mV_{\vec{\vg}}, \quad\text{s.t.} \mV_{\vec{\vg}}=    \sigma([\Vec{\mZ},\vec{\vg}]^{\top}[\Vec{\mZ},\vec{\vg}],\vh),
\end{align}
where $\sigma:\R^{(m+1)\times(m+1)}\rightarrow\R^{(m+1)\times m'}$ is an MLP. Compared with Eq.~(\ref{eq:GMN}), here we just augment the directional input with $\vec{\vg}$. Interestingly, such a simple augmentation is universally expressive, which is proved in the following section. 

\subsection{Proof of Theorem~\ref{th:subequ}}
\label{sec:proof1}
\setcounter{theorem}{0}
\begin{theorem}
Let $f_{\vec{\vg}}(\Vec{\mZ},\vh)$ be defined by Eq.~(\ref{eq:subGMN}). Then, $f_{\vec{\vg}}$ is $\text{O}_{\vec{\vg}}(3)$-equivariant. More importantly, For any $\text{O}_{\vec{\vg}}(3)$-equivariant function $\hat{f}(\Vec{\mZ},\vh)$, there always exists an MLP $\sigma$ satisfying $\|\hat{f}-f_{\vec{\vg}}\|<\epsilon$ for arbitrarily small positive value $\epsilon$.
\end{theorem}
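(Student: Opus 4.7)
The plan is to handle equivariance by direct computation and universality by lifting to the full $\text{O}(3)$-equivariant setting so that Proposition~\ref{th:equ} can be applied directly; the lift is the substantive step, since pure $\text{O}(3)$-equivariance is strictly stronger than what $f_{\vec{\vg}}$ satisfies.

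For equivariance, I would fix $\mO\in\text{O}_{\vec{\vg}}(3)$ and use the defining identity $\mO\vec{\vg}=\vec{\vg}$ to write $[\mO\Vec{\mZ},\vec{\vg}]=\mO[\Vec{\mZ},\vec{\vg}]$. The Gram matrix $[\Vec{\mZ},\vec{\vg}]^\top[\Vec{\mZ},\vec{\vg}]$ is unchanged since $\mO^\top\mO=\mI$, hence $\mV_{\vec{\vg}}$ is invariant and $f_{\vec{\vg}}(\mO\Vec{\mZ},\vh)=\mO[\Vec{\mZ},\vec{\vg}]\mV_{\vec{\vg}}=\mO f_{\vec{\vg}}(\Vec{\mZ},\vh)$.

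For universality, I would canonically extend $\hat{f}$ to $\tilde{F}(\Vec{\mZ},\vec{\vg}',\vh):=\mO\hat{f}(\mO^{-1}\Vec{\mZ},\vh)$, where $\mO\in\text{O}(3)$ is any element with $\mO\vec{\vg}=\vec{\vg}'$. Such $\mO$ exists for every $\vec{\vg}'$ of norm $\|\vec{\vg}\|$ by transitivity of the $\text{O}(3)$-action on the sphere, and the value is independent of that choice because two valid $\mO$'s differ on the right by an element of $\text{O}_{\vec{\vg}}(3)$, under which $\hat{f}$ is by hypothesis equivariant. A short verification then shows $\tilde{F}$ is $\text{O}(3)$-equivariant in the joint argument $(\Vec{\mZ},\vec{\vg}')$ and satisfies $\tilde{F}(\Vec{\mZ},\vec{\vg},\vh)=\hat{f}(\Vec{\mZ},\vh)$. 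Proposition~\ref{th:equ} applied to $\tilde{F}$, with the augmented matrix $[\Vec{\mZ},\vec{\vg}']$ playing the role of the directional input, yields an MLP $\sigma$ uniformly approximating $\tilde{F}$ by $[\Vec{\mZ},\vec{\vg}']\sigma([\Vec{\mZ},\vec{\vg}']^\top[\Vec{\mZ},\vec{\vg}'],\vh)$; specializing $\vec{\vg}'=\vec{\vg}$ gives $\|\hat{f}-f_{\vec{\vg}}\|<\epsilon$.

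The main obstacle I anticipate is the lift itself: verifying $\tilde{F}$ is globally well-defined and continuous on the $\vec{\vg}'$-sphere, so that Proposition~\ref{th:equ} applies cleanly with the approximation error transferred back to the original slice. If this bundle-theoretic step proves delicate, a more elementary fallback is to mirror Lemmas~\ref{le:span}--\ref{le:inv} directly inside the subgroup. The key observation making that route work is that every orthogonal transformation fixing the column space of $[\Vec{\mZ},\vec{\vg}]$ pointwise automatically fixes $\vec{\vg}$ (since $\vec{\vg}$ lies in that span) and hence belongs to $\text{O}_{\vec{\vg}}(3)$; this supplies the reflection needed to kill the orthogonal complement exactly as in Lemma~\ref{le:span}, forcing $\hat{f}(\Vec{\mZ},\vh)\in\text{col}[\Vec{\mZ},\vec{\vg}]$. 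The resulting subinvariant coefficient must then be shown to factor through $[\Vec{\mZ},\vec{\vg}]^\top[\Vec{\mZ},\vec{\vg}]$, which follows from the first fundamental theorem of invariant theory for $\text{O}_{\vec{\vg}}(3)\cong\text{O}(2)$ acting on the $\vec{\vg}$-orthogonal plane: its generating invariants of the columns of $\Vec{\mZ}$ are precisely the entries of that Gram matrix.
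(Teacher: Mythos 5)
Your proposal is correct, and your primary route is genuinely different from the paper's. The paper proves universality from scratch for the subgroup, mirroring the $\text{O}(3)$ case: Lemma~\ref{le:s-span} (reflection argument forcing $\hat{f}$ into $\mathrm{col}[\Vec{\mZ},\vec{\vg}]$), Lemma~\ref{le:s-inv} (SVD extraction of an $\text{O}_{\vec{\vg}}(3)$-invariant coefficient), and Lemma~\ref{le:s-bject} (an explicit bijection between $\text{O}_{\vec{\vg}}(3)$-orbits and the Gram matrices $[\Vec{\mZ},\vec{\vg}]^\top[\Vec{\mZ},\vec{\vg}]$, obtained by decomposing each column along $\vec{\vg}$ and its orthogonal plane and invoking the $\text{O}(2)$ result). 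Your lift $\tilde{F}(\Vec{\mZ},\vec{\vg}',\vh)=\mO\hat{f}(\mO^{-1}\Vec{\mZ},\vh)$ instead reduces the whole problem to Proposition~\ref{th:equ} applied to the augmented directional input $[\Vec{\mZ},\vec{\vg}']$; your well-definedness check (two choices of $\mO$ differ by an element of $\text{O}_{\vec{\vg}}(3)$, absorbed by the hypothesis on $\hat{f}$) is the right one, and the specialization $\vec{\vg}'=\vec{\vg}$ closes the argument. This is essentially the converse of the remark the paper makes \emph{after} its proof (that Eq.~(\ref{eq:subGMN}) viewed as a function of both $\Vec{\mZ}$ and $\vec{\vg}$ is $\text{O}(3)$-universal), promoted to the actual proof mechanism. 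What your route buys is economy and a conceptual explanation of why appending $\vec{\vg}$ is exactly the right augmentation; what it costs is the care you already flag — $\tilde{F}$ lives only on the sphere $\|\vec{\vg}'\|=\|\vec{\vg}\|$, so you must check continuity of the lift there and that Proposition~\ref{th:equ} (whose proof is pointwise in the directional argument plus uniform MLP approximation on a compact set) applies on that restricted $\text{O}(3)$-invariant domain. The paper's route avoids this by staying inside the subgroup throughout, and, as noted at the end of Appendix~\ref{sec:proof1}, extends directly to external fields $\Vec{\mG}\in\R^{d\times d'}$ spanning several directions. Your fallback sketch is, for all practical purposes, the paper's proof: the observation that any reflection fixing $\mathrm{col}[\Vec{\mZ},\vec{\vg}]$ pointwise lies in $\text{O}_{\vec{\vg}}(3)$ is Lemma~\ref{le:s-span}, and the invariant-theory step on the $\vec{\vg}$-orthogonal plane is Lemma~\ref{le:s-bject}.
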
 

The proof is similar to Proposition~\ref{th:equ} but with certain extensions. We first derive the following three lemmas.

\setcounter{theorem}{2}
\begin{lemma}
\label{le:s-span}
For any $\text{O}_{\vec{\vg}}(3)$-equivariant function $\hat{f}(\Vec{\mZ},\vh)$, it must fall into the subspace spanned by the columns of $[\Vec{\mZ},\vec{\vg}]$, namely, there exists a function $s(\Vec{\mZ},\vh)$, satisfying $\hat{f}(\Vec{\mZ},\vh)=[\Vec{\mZ},\vec{\vg}] s(\Vec{\mZ},\vh)$.
\end{lemma}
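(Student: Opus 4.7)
My plan is to mirror the strategy used for the analogous O(3)-equivariant Lemma~\ref{le:span}, but with the augmented matrix $[\Vec{\mZ},\vec{\vg}]$ playing the role of $\Vec{\mZ}$. The key structural object is the subspace $W \coloneqq \mathrm{span}([\Vec{\mZ},\vec{\vg}]) \subseteq \R^3$, together with its orthogonal complement $W^{\perp}$ inside $\R^3$. Since any vector in $\R^3$ decomposes uniquely as a $W$-part plus a $W^{\perp}$-part, I can write
\[
\hat{f}(\Vec{\mZ},\vh) = [\Vec{\mZ},\vec{\vg}]\, s(\Vec{\mZ},\vh) + \mU^{\perp}\, s^{\perp}(\Vec{\mZ},\vh),
\]
where $\mU^{\perp}$ is any matrix whose columns form an orthonormal basis of $W^{\perp}$. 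The goal is to show $s^{\perp}\equiv 0$, which immediately yields the claim.

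To do this, I construct a single orthogonal transformation $\mO\in\mathrm{O}(3)$ that acts as the identity on $W$ and as $-\mI$ on $W^{\perp}$. This is a well-defined orthogonal map because $W$ and $W^{\perp}$ give an orthogonal decomposition of $\R^3$ and $\mO$ is orthogonal on each block. By construction, $\mO$ fixes every column of $[\Vec{\mZ},\vec{\vg}]$, so in particular $\mO\vec{\vg}=\vec{\vg}$ (hence $\mO\in\mathrm{O}_{\vec{\vg}}(3)$) and $\mO\Vec{\mZ}=\Vec{\mZ}$. Invoking $\mathrm{O}_{\vec{\vg}}(3)$-equivariance of $\hat{f}$ gives
\[
\hat{f}(\Vec{\mZ},\vh) = \hat{f}(\mO\Vec{\mZ},\vh) = \mO\,\hat{f}(\Vec{\mZ},\vh) = [\Vec{\mZ},\vec{\vg}]\, s(\Vec{\mZ},\vh) - \mU^{\perp}\, s^{\perp}(\Vec{\mZ},\vh).
\]
Comparing with the original decomposition forces $\mU^{\perp}\, s^{\perp}(\Vec{\mZ},\vh) = \vzero$, and since $\mU^{\perp}$ has full column rank we conclude $s^{\perp}(\Vec{\mZ},\vh)=\vzero$, completing the argument.

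The delicate point, and the reason this is not completely automatic from Lemma~\ref{le:span}, is that the reflection/negation operator must simultaneously (i) fix $\vec{\vg}$ so that it lies in the allowed subgroup $\mathrm{O}_{\vec{\vg}}(3)$ and (ii) fix all columns of $\Vec{\mZ}$ so that equivariance can be applied without changing the argument. The crucial observation is that both requirements are implied by fixing $W$ pointwise, which is possible precisely because $\vec{\vg}$ has been absorbed into the augmented column space. A small case to keep in mind is $W = \R^3$ (when $[\Vec{\mZ},\vec{\vg}]$ already spans the ambient space): then $W^{\perp}=\{\vzero\}$, $\mO=\mI$, and the decomposition is trivial with $s^{\perp}$ vacuously zero, so the statement holds without further work.
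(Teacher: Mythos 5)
Your proof is correct and follows essentially the same route as the paper's: decompose $\hat{f}$ into a component in the column space of $[\Vec{\mZ},\vec{\vg}]$ plus a component in its orthogonal complement, pick the orthogonal map that fixes that column space pointwise (hence fixes $\vec{\vg}$ and $\Vec{\mZ}$, so it lies in $\mathrm{O}_{\vec{\vg}}(3)$) while negating the complement, and invoke equivariance to force the complement component to vanish. Your write-up is in fact slightly more careful than the paper's, since you justify explicitly why the required $\mO$ exists and handle the degenerate case where $[\Vec{\mZ},\vec{\vg}]$ already spans $\R^{3}$.
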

\begin{proof}
The proof is similar to Lemma~\ref{le:s-span}.
Suppose $\Vec{\mZ}^{\perp}$ is the orthogonal complement of the column space of $[\Vec{\mZ},\vec{\vg}]$. Then there must exit functions $s(\Vec{\mZ},\vh)$ and $s^{\perp}(\Vec{\mZ},\vh)$, satisfying $\hat{f}(\Vec{\mZ},\vh)=[\Vec{\mZ},\vec{\vg}] s(\Vec{\mZ},\vh) + \Vec{\mZ}^{\perp}s^{\perp}(\Vec{\mZ},\vh)$. We can always find an orthogonal transformation $\mO\in\text{O}_{\vec{\vg}}$ allowing $\mO\Vec{\mZ}=\Vec{\mZ}$, $\mO\vec{\vg}=\vec{\vg}$ while $\mO\Vec{\mZ}^{\perp}=-\Vec{\mZ}^{\perp}$. With this transformation $\mO$, we have $\hat{f}(\mO\Vec{\mZ},\vh)=\hat{f}(\Vec{\mZ},\vh)=[\Vec{\mZ},\vec{\vg}] s(\Vec{\mZ},\vh) + \Vec{\mZ}^{\perp}s^{\perp}(\Vec{\mZ},\vh)$, and $\mO\hat{f}(\Vec{\mZ},\vh)= [\Vec{\mZ},\vec{\vg}] s(\Vec{\mZ},\vh) - \Vec{\mZ}^{\perp}s^{\perp}(\Vec{\mZ},\vh)$. The equivariance property of $\hat{f}$ implies $[\Vec{\mZ},\vec{\vg}] s(\Vec{\mZ},\vh) + \Vec{\mZ}^{\perp}s^{\perp}(\Vec{\mZ},\vh)=[\Vec{\mZ},\vec{\vg}] s(\Vec{\mZ},\vh) - \Vec{\mZ}^{\perp}s^{\perp}(\Vec{\mZ},\vh)$, which derives $s^{\perp}(\Vec{\mZ},\vh)=0$. Hence, the proof is concluded. 
\end{proof}

\begin{lemma}
\label{le:s-inv}
If the $\text{O}_{\vec{\vg}}(3)$-equivariant function $\hat{f}(\Vec{\mZ},\vh)$ lies in the subspace spanned by the columns of $[\Vec{\mZ},\vec{\vg}]$, then there exists a $\text{O}_{\vec{\vg}}(3)$-invariant function $\eta$ satisfying $\hat{f}(\Vec{\mZ},\vh) =[\Vec{\mZ},\vec{\vg}]\eta(\Vec{\mZ}, \vh)$.
\end{lemma}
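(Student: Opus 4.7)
The plan is to make the decomposition $\hat{f}(\vec{\mZ},\vh) = [\vec{\mZ},\vec{\vg}]\,\eta(\vec{\mZ},\vh)$ canonical so that the coefficient function $\eta$ is well-defined and then transfers the subequivariance of $\hat{f}$ to subinvariance. The hypothesis is that $\hat{f}(\vec{\mZ},\vh)$ already lies in the column space of $[\vec{\mZ},\vec{\vg}]$, so such a decomposition exists; however, if $[\vec{\mZ},\vec{\vg}]$ has a nontrivial kernel, the coefficient matrix is not unique, which is exactly the obstacle we must address.

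To remove this ambiguity I would define $\eta$ explicitly via the Moore--Penrose pseudoinverse, setting
\begin{align}
\eta(\vec{\mZ},\vh) \;\coloneqq\; [\vec{\mZ},\vec{\vg}]^{+}\,\hat{f}(\vec{\mZ},\vh).
\end{align}
Because $\hat{f}(\vec{\mZ},\vh)$ is in the column space of $[\vec{\mZ},\vec{\vg}]$, the identity $[\vec{\mZ},\vec{\vg}]\,[\vec{\mZ},\vec{\vg}]^{+}\hat{f}(\vec{\mZ},\vh)=\hat{f}(\vec{\mZ},\vh)$ holds, so this $\eta$ indeed reproduces $\hat{f}$.

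The main step is to check invariance. For any $\mO\in\text{O}_{\vec{\vg}}(3)$, we have $\mO\vec{\vg}=\vec{\vg}$, hence
\begin{align}
[\mO\vec{\mZ},\vec{\vg}] \;=\; [\mO\vec{\mZ},\mO\vec{\vg}] \;=\; \mO\,[\vec{\mZ},\vec{\vg}].
\end{align}
Using the standard identity $(\mO\mA)^{+}=\mA^{+}\mO^{\top}$ for an orthogonal matrix $\mO$, this gives $[\mO\vec{\mZ},\vec{\vg}]^{+}=[\vec{\mZ},\vec{\vg}]^{+}\mO^{\top}$. Combining this with the $\text{O}_{\vec{\vg}}(3)$-equivariance of $\hat{f}$, we compute
\begin{align}
\eta(\mO\vec{\mZ},\vh)
&= [\mO\vec{\mZ},\vec{\vg}]^{+}\,\hat{f}(\mO\vec{\mZ},\vh) \\
&= [\vec{\mZ},\vec{\vg}]^{+}\mO^{\top}\,\mO\,\hat{f}(\vec{\mZ},\vh) \\
&= [\vec{\mZ},\vec{\vg}]^{+}\,\hat{f}(\vec{\mZ},\vh)
= \eta(\vec{\mZ},\vh),
\end{align}
which proves that $\eta$ is $\text{O}_{\vec{\vg}}(3)$-invariant, as required.

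The hard part is precisely the pseudoinverse identity used above: it relies on the fact that left-multiplication by an orthogonal matrix does not change the row space, so the pseudoinverse transforms compatibly. One can either quote this as a standard linear-algebra fact or verify it directly via the SVD, noting that an orthogonal $\mO$ merely rotates the left-singular vectors of $[\vec{\mZ},\vec{\vg}]$ while leaving the singular values and right-singular vectors unchanged. Once this is in hand the lemma follows immediately, and combining it with Lemma~\ref{le:s-span} and the invariant-function parametrization used in Lemma~\ref{le:inv} (i.e., writing any invariant function as a function of the Gram matrix $[\vec{\mZ},\vec{\vg}]^\top[\vec{\mZ},\vec{\vg}]$) will close the proof of Theorem~\ref{th:subequ}.
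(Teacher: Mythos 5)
Your proof is correct and is essentially the same as the paper's: the paper performs a compact SVD $[\Vec{\mZ},\vec{\vg}]=\mU\mS_r\mV^\top$ and sets $\eta=\mV\mV^\top s(\Vec{\mZ},\vh)$, which is exactly your $[\Vec{\mZ},\vec{\vg}]^{+}\hat{f}(\Vec{\mZ},\vh)$, and both arguments hinge on the same fact that left-multiplication by $\mO$ only rotates the left-singular vectors so the coefficient extraction commutes with the group action. If anything, phrasing it through the Moore--Penrose pseudoinverse sidesteps the non-uniqueness of the SVD factors and makes the well-definedness of $\eta$ immediate.
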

\begin{proof}
We assume the rank of $[\Vec{\mZ},\vec{\vg}]$ is $r$ ($r\leq\min\{3, m+1\}$). By performing the compact SVD decomposition on $[\Vec{\mZ},\vec{\vg}]$, we devise $[\Vec{\mZ},\vec{\vg}]=\mU\mS_r\mV^\top$, where $\mS_r\in\R^{r\times r}$ is a square diagonal matrix with positive diagonal elements, $\mU\in\R^{3\times r}, \mV\in\R^{(m+1)\times r}$, and $\mU^\top\mU=\mV^\top\mV=\mI_r$.

Since $\hat{f}(\Vec{\mZ},\vh)$ lies in the subspace spanned by the columns of $[\Vec{\mZ},\vec{\vg}]$, there exists a function $s(\Vec{\mZ},\vh)$ allowing $\hat{f}(\Vec{\mZ},\vh)=[\Vec{\mZ},\vec{\vg}]s(\Vec{\mZ},\vh)$. With applying the SVD decomposition, we have 
\begin{align}
    \label{eq:svd}
    \hat{f}(\Vec{\mZ},\vh)=\mU\mS_r\mV^\top s(\Vec{\mZ},\vh).
\end{align}
Given that $\hat{f}$ is  $\text{O}_{\vec{\vg}}(3)$-equivariant, it means $\hat{f}(\mO\Vec{\mZ},\vh)=\mO\hat{f}(\Vec{\mZ},\vh)$. By substituting this equation into Eq.~(\ref{eq:svd}),
\begin{align}
    & \label{eq:svd-equi}
    \mO\mU\mS_r\mV^\top s(\mO\Vec{\mZ},\vh) = \mO\mU\mS_r\mV^\top s(\Vec{\mZ},\vh) \notag \\
    \Leftrightarrow &  \mV^\top s(\mO\Vec{\mZ},\vh) = \mV^\top s(\Vec{\mZ},\vh).
\end{align}
As $\mV$ is $\text{O}_{\vec{\vg}}(3)$-invariant, $\mV^\top s(\Vec{\mZ},\vh)$ is an $\text{O}_{\vec{\vg}}(3)$-invariant function. Define $\eta(\Vec{\mZ},\vh)\coloneqq\mV\mV^\top s(\Vec{\mZ},\vh)$, which is apparently $\text{O}_{\vec{\vg}}(3)$-invariant. Then we have $\hat{f}(\Vec{\mZ},\vh)=\mU\mS_r\mV^\top s(\Vec{\mZ},\vh)=\mU\mS_r\mV^\top\mV \mV^\top s(\Vec{\mZ},\vh)=[\Vec{\mZ}, \vec{\vg}]\eta(\Vec{\mZ},\vh)$. 

\end{proof}

\begin{lemma}
\label{le:s-bject}
If the function $\eta(\Vec{\mZ},\vh)$ is $\text{O}_{\vec{\vg}}(3)$-invariant, it is of the form $\eta(\Vec{\mZ},\vh)=\sigma([\Vec{\mZ}, \vec{\vg}]^\top[\Vec{\mZ}, \vec{\vg}],\vh)$ for a certain function $\sigma$.
\end{lemma}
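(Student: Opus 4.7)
The plan is to show that $\eta$ factors through the augmented Gram matrix $[\Vec{\mZ}, \vec{\vg}]^\top[\Vec{\mZ}, \vec{\vg}]$ by identifying two inputs whenever they yield the same Gram matrix, then defining $\sigma$ by its action on that Gram matrix. This parallels the reasoning behind Lemma~\ref{le:inv} (Lemmas 1--2 in \cite{huang2022equivariant}), but with the crucial twist that the orthogonal transformation relating two configurations must fix $\vec{\vg}$ and thus lies in $\text{O}_{\vec{\vg}}(3)$ rather than in the full $\text{O}(3)$.

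Concretely, first I would fix an arbitrary $\vh$ and consider two configurations $\Vec{\mZ}_1, \Vec{\mZ}_2 \in \R^{3\times m}$ such that $[\Vec{\mZ}_1, \vec{\vg}]^\top[\Vec{\mZ}_1, \vec{\vg}] = [\Vec{\mZ}_2, \vec{\vg}]^\top[\Vec{\mZ}_2, \vec{\vg}]$. Viewing the columns of $[\Vec{\mZ}_i, \vec{\vg}] \in \R^{3 \times (m+1)}$ as vectors in $\R^3$, the equality of Gram matrices means the two sets of columns have identical pairwise inner products. By the standard rigidity fact for finite configurations in Euclidean space, there exists an orthogonal matrix $\mO \in \text{O}(3)$ satisfying $\mO[\Vec{\mZ}_1, \vec{\vg}] = [\Vec{\mZ}_2, \vec{\vg}]$. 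Comparing the last column on both sides yields $\mO \vec{\vg} = \vec{\vg}$, hence $\mO \in \text{O}_{\vec{\vg}}(3)$; comparing the remaining columns gives $\mO \Vec{\mZ}_1 = \Vec{\mZ}_2$.

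Next, I would invoke the $\text{O}_{\vec{\vg}}(3)$-invariance of $\eta$ to conclude $\eta(\Vec{\mZ}_1, \vh) = \eta(\mO\Vec{\mZ}_1, \vh) = \eta(\Vec{\mZ}_2, \vh)$. Thus the value of $\eta(\Vec{\mZ}, \vh)$ depends on $\Vec{\mZ}$ only through $[\Vec{\mZ}, \vec{\vg}]^\top[\Vec{\mZ}, \vec{\vg}]$, so we may well-define
\begin{equation*}
\sigma\!\left([\Vec{\mZ}, \vec{\vg}]^\top[\Vec{\mZ}, \vec{\vg}], \vh\right) \coloneqq \eta(\Vec{\mZ}, \vh),
\end{equation*}
which gives the desired representation.

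The main obstacle is justifying the existence of $\mO \in \text{O}(3)$ aligning $[\Vec{\mZ}_1, \vec{\vg}]$ with $[\Vec{\mZ}_2, \vec{\vg}]$ when the columns may be rank-deficient (the augmented matrices are $3 \times (m+1)$, so their columns need not span $\R^3$, nor be linearly independent). I would handle this by applying a compact SVD argument analogous to the one used in the proof of Lemma~\ref{le:s-inv}: the equality of Gram matrices implies that $[\Vec{\mZ}_i, \vec{\vg}]$ share the same right singular vectors and singular values, so they differ only by a left-multiplication by an orthogonal change of basis on their common column space, which can be extended to an element of $\text{O}(3)$ by any isometry on the orthogonal complement. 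Once this rigidity step is in hand, the remainder of the argument is a direct application of invariance.
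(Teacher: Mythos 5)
Your proof is correct, and the key step is argued by a genuinely different route than the paper's. Both proofs reduce the lemma to the same claim — that equality of the augmented Gram matrices $[\Vec{\mZ}_1,\vec{\vg}]^\top[\Vec{\mZ}_1,\vec{\vg}]=[\Vec{\mZ}_2,\vec{\vg}]^\top[\Vec{\mZ}_2,\vec{\vg}]$ forces $\Vec{\mZ}_1=\mO\Vec{\mZ}_2$ for some $\mO\in\text{O}_{\vec{\vg}}(3)$ — and then conclude that $\eta$ is well-defined on the image of the Gram map. Where you differ is in how that rigidity claim is proved. The paper decomposes $\Vec{\mZ}_i=\vec{\vg}\bm{\alpha}_i+\Vec{\mG}^{\perp}\bm{\beta}_i$ along gravity and its orthogonal complement, deduces $\bm{\alpha}_1=\bm{\alpha}_2$ and $\bm{\beta}_1^\top\bm{\beta}_1=\bm{\beta}_2^\top\bm{\beta}_2$, invokes the $\text{O}(2)$ version of the rigidity lemma to get $\bm{\beta}_1=\mO'\bm{\beta}_2$, and explicitly assembles $\mO=\vec{\vg}\vec{\vg}^\top+\Vec{\mG}^{\perp}\mO'(\Vec{\mG}^{\perp})^\top$; this is more constructive and yields the geometric picture of $\mO$ acting as a planar rotation/reflection about the gravity axis. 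You instead apply the standard $\text{O}(3)$ rigidity fact directly to the augmented $3\times(m+1)$ matrices and simply read off $\mO\vec{\vg}=\vec{\vg}$ from the last column — a shorter argument that avoids the decomposition entirely and generalizes immediately to the case where $\vec{\vg}$ is replaced by a matrix $\Vec{\mG}$ of several broken directions (a generalization the paper only mentions in passing). Your handling of the rank-deficient case via compact SVD is the right fix for the only real gap in the naive version of that rigidity fact; the one point to state carefully is that the extension of the partial isometry must map the orthogonal complement of the column space of $[\Vec{\mZ}_1,\vec{\vg}]$ onto that of $[\Vec{\mZ}_2,\vec{\vg}]$ (these have equal dimension since equal Gram matrices force equal rank), which is exactly what your sketch intends.
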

\begin{proof}
For any two inputs $\Vec{\mZ}_1, \Vec{\mZ}_2$, we claim that 
\begin{align}
\label{eq:one-one}
\exists\mO\in\text{O}_{\vec{\vg}}(3), \Vec{\mZ}_1=\mO\Vec{\mZ}_2 & \Leftrightarrow [\Vec{\mZ}_1, \vec{\vg}]^\top[\Vec{\mZ}_1, \vec{\vg}] = [\Vec{\mZ}_2, \vec{\vg}]^\top[\Vec{\mZ}_2, \vec{\vg}].   
\end{align}
The sufficiency direction $\Rightarrow$ is obvious. We only need to prove the necessity $\Leftarrow$.

We denote by $\Vec{\mG}^{\perp}\in\R^{3\times 2}$ the orthogonal complement of $\vec{\vg}$. Then, we have the decompositions:
\begin{align}
    \Vec{\mZ}_1 &= \vec{\vg}\bm{\alpha}_1 + \Vec{\mG}^{\perp}\bm{\beta}_1;\\
    \Vec{\mZ}_2 &= \vec{\vg}\bm{\alpha}_2 + \Vec{\mG}^{\perp}\bm{\beta}_2;
\end{align}
where $\bm{\alpha}_1,\bm{\alpha}_2\in\R^{1\times m}$, $\bm{\beta}_1,\bm{\beta}_2\in\R^{2\times m}$. The RHS of Eq.~(\ref{eq:one-one}) indicates $\vec{\vg}^\top\Vec{\mZ}_1=\vec{\vg}^\top\Vec{\mZ}_2$, and $\Vec{\mZ}_1^\top\Vec{\mZ}_1=\Vec{\mZ}_2^\top\Vec{\mZ}_2$, hence we have $\bm{\alpha}_1=\bm{\alpha}_2$, and 
\begin{align}
    \bm{\beta}_1^\top \bm{\beta}_1 
    & = (\Vec{\mG}^{\perp}\bm{\beta}_1)^\top (\Vec{\mG}^{\perp}\bm{\beta}_1)\notag\\
    & = (\Vec{\mZ}_1-\vec{\vg}\bm{\alpha}_1)^\top (\Vec{\mZ}_1-\vec{\vg}\bm{\alpha}_1) \notag\\
    &= \Vec{\mZ}_1^\top\Vec{\mZ}_1 - \bm{\alpha}_1^\top\bm{\alpha}_1 \notag \\
    &= \Vec{\mZ}_2^\top\Vec{\mZ}_2 - \bm{\alpha}_2^\top\bm{\alpha}_2 \notag \\
    & = \bm{\beta}_2^\top \bm{\beta}_2.
\end{align}
According to Lemma 1 in~\cite{huang2022equivariant}, $\bm{\beta}_1^\top \bm{\beta}_1=\bm{\beta}_2^\top \bm{\beta}_2\Leftrightarrow \exists\mO'\in\text{O}(2), \bm{\beta}_1=\mO'\bm{\beta}_2$.

Now we define $\mO=\vec{\vg}\vec{\vg}^\top+\Vec{\mG}^{\perp}\mO'(\Vec{\mG}^{\perp})^\top$. Clearly, $\mO^\top\mO=\vec{\vg}\vec{\vg}^\top+\Vec{\mG}^{\perp}(\Vec{\mG}^{\perp})^\top=\mI_3$, and $\mO^\top\vec{\vg}=\vec{\vg}$, which means $\mO\in\text{O}_{\vec{\vg}}(3)$. More interestingly,
\begin{align}
    \mO \Vec{\mZ}_2 &= \vec{\vg}\vec{\vg}^\top \Vec{\mZ}_2+\Vec{\mG}^{\perp}\mO'(\Vec{\mG}^{\perp})^\top \Vec{\mZ}_2 \notag \\
    &= \vec{\vg}\alpha_2 +  \Vec{\mG}^{\perp}\mO'\bm{\beta}_2 \notag \\
    &= \vec{\vg}\alpha_1 +  \Vec{\mG}^{\perp}\bm{\beta}_1 \notag \\
    &= \Vec{\mZ}_1, 
\end{align}
which concludes the proof of the claim. Fig.~\ref{fig:theo-fig} provides a geometrical explanation of our proof above, where the transformation $\mO'$ is actually the projection of $\mO$ onto the subspace $\Vec{\mG}^{\perp}$.

\begin{figure}[t]
    \centering
    \includegraphics[width=0.65\textwidth]{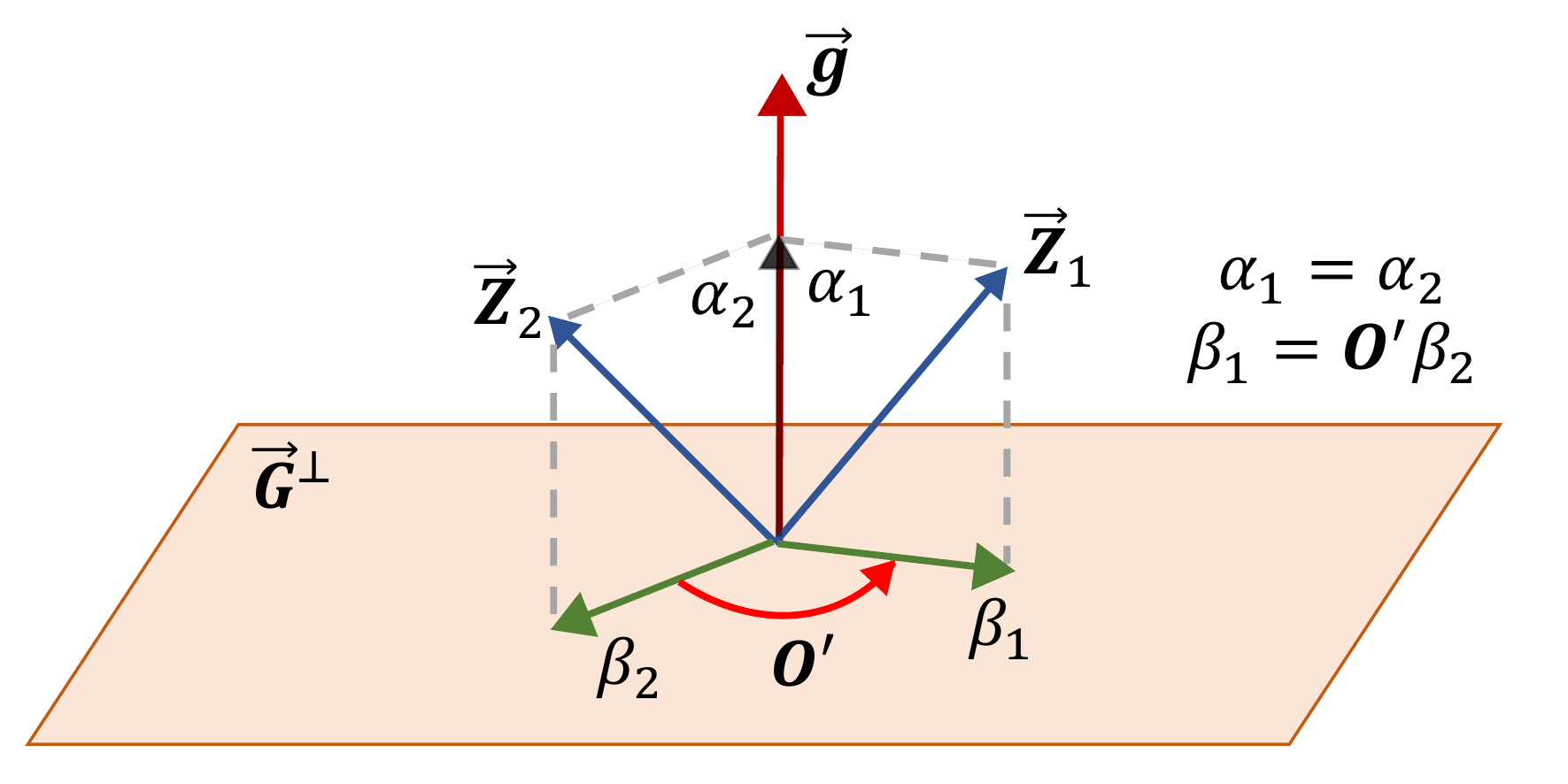}
    \caption{A geometric illustration of the proof for Lemma~\ref{le:s-bject}.}
    \label{fig:theo-fig}
\end{figure}

For any $\text{O}_{\vec{\vg}}(3)$-invariant function $\eta(\Vec{\mZ},\vh)$, it is a function of the equivalent class, that is, $\eta(\Vec{\mZ},\vh)=\sigma(\{\Vec{\mZ}\},\vh)$, where the equivalent class is defined as $\{\Vec{\mZ}\}\coloneqq \{\mO\Vec{\mZ}\mid \mO\in\text{O}_{\vec{\vg}}(3)\}$. The above claim in~(\ref{eq:one-one}) states that such equivalent class $\{\Vec{\mZ}\}$ maps to $[\Vec{\mZ}, \vec{\vg}]^\top[\Vec{\mZ}, \vec{\vg}]$ in a one-to-one way. Therefore, we must arrive at $\eta(\Vec{\mZ},\vh)=\sigma([\Vec{\mZ}, \vec{\vg}]^\top[\Vec{\mZ}, \vec{\vg}],\vh)$.

\end{proof}

By making use of Lemma~\ref{le:s-span}-\ref{le:s-bject}, we immediately obtain $f_{\vec{\vg}}(\Vec{\mZ},\vh)=[\Vec{\mZ},\vec{\vg}]\sigma([\Vec{\mZ},\vec{\vg}]^\top[\Vec{\mZ},\vec{\vg}],\vh)$. In accordance with the universality of MLP~\cite{cybenko1989approximation,hornik1991approximation}, we can always find an MLP that approximates $\sigma$ up to an accuracy $\frac{\epsilon}{G}$, where $G$ is the bound of $\|[\Vec{\mZ},\vec{\vg}]\|$, which implies  $\|\hat{f}-f_{\vec{\vg}}\|<\|[\Vec{\mZ},\vec{\vg}]\|\frac{\epsilon}{G}<\epsilon$, where $f_{\vec{\vg}}$ is defined in Eq.~(\ref{eq:subGMN}). The proof of Theorem~\ref{th:subequ} is finished.

Note that $f$ by Eq.~(\ref{eq:subGMN}) can also be considered as a function of both $\Vec{\mZ}$ and $\vec{\vg}$, and it is universal according to Proposition~\ref{th:equ}. When $f$ reduces to a function of $\Vec{\mZ}$ by fixing $\vec{\vg}$, then by Theorem~\ref{th:subequ}, it is still universal with respect to the subgroup that leaves $\vec{\vg}$ unchanged. This phenomenon shows the universality holds universally no matter which input variable of Eq.~(\ref{eq:subGMN}) we fix. 

Although this paper mainly focuses on the 3-dimension group O(3), our theorems and proofs above are generalizable to the $d$-dimension group O($d$), when $d$ is not restricted to 3 and the external field that breaks the symmetry has more than 1 direction, namely the gravity vector $\vec{\vg}\in\R^{3\times 1}$ becomes a directional matrix $\Vec{\mG}\in\R^{d \times d'} (d'<d)$.   


\subsection{Proof of Theorem~\ref{the:message-passing-equ}}
\label{sec:proof2}
\setcounter{theorem}{1}
\begin{theorem}
\label{the:message-passing-equ}
The message passing $\varphi$ (Eq.~\ref{eq:somp}) is $O_{\vec{\vg}}(3)$-equivariant.
\end{theorem}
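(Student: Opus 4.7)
The plan is to verify subequivariance of $\varphi$ by checking each of Eqs.~(\ref{eq:vector-interaction})--(\ref{eq:update-mlp}) in order, showing that at every stage the vector-valued quantities transform as $\mO(\cdot)$ under any $\mO\in \text{O}_{\vec{\vg}}(3)$ while the scalar-valued quantities stay fixed, and simultaneously that translation of all positions by a common $\vt\in\R^3$ leaves every intermediate vector unchanged except the output position, which is translated accordingly. Throughout, I would fix an arbitrary $\mO\in\text{O}_{\vec{\vg}}(3)$ and an arbitrary translation $\vt$, apply the combined transformation to every input $\vec{\mZ}_i$, $\vec{\mC}_k$ (leaving $\vh_i$, $\vc_k$ alone), and track how each intermediate object transforms.

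First I would dispense with the $\ominus$ operator. By its definition $\vec{\mZ}_i\ominus\vec{\mZ}_j=[\vec{\vx}_i-\vec{\vx}_j,\vec{\vv}_i,\vec{\vv}_j]$, every column is either a difference of positions or a velocity, so translations cancel out and the result is translation-invariant; rotations distribute column-wise so it is O(3)-equivariant. The same holds for $\vec{\mZ}_i\ominus\vec{\mC}_{o(i)}$ since $\vec{\mC}_{o(i)}$, being the mean of particle states inside its object, inherits the same translation/rotation behaviour as $\vec{\mZ}_i$. Consequently the horizontal stacking in Eq.~(\ref{eq:vector-interaction}) yields a matrix $\vec{\mZ}_{ij}$ that is translation-invariant and satisfies $\vec{\mZ}_{ij}\mapsto\mO\vec{\mZ}_{ij}$ under rotation, because column stacking commutes with left-multiplication by $\mO$. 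Equation~(\ref{eq:scalar-interaction}) is trivially invariant since it concatenates scalars that are unaffected by the action.

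Next I would invoke Theorem~\ref{th:subequ} for the key step. Since $\vec{\mZ}_{ij}$ is $\text{O}_{\vec{\vg}}(3)$-equivariant and $\vh_{ij}$ is $\text{O}_{\vec{\vg}}(3)$-invariant, plugging them into $\phi_{\vec{\vg}}$ defined in the form of Eq.~(\ref{eq:subGMN}) produces a vector message $\vec{\mM}_{ij}$ transforming as $\mO\vec{\mM}_{ij}$ and a scalar message $\vm_{ij}$ that is invariant; the extra ``g''-augmentation inside $\phi_{\vec{\vg}}$ is harmless because $\mO\vec{\vg}=\vec{\vg}$ by the definition of $\text{O}_{\vec{\vg}}(3)$. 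Summation over neighbours in Eq.~(\ref{eq:update-mlp}) preserves these properties by linearity, and a further concatenation with $\vec{\mZ}_i\ominus\vec{\mC}_{o(i)}$ (equivariant) and with $\vh_i\|\vc_{o(i)}$ (invariant) produces inputs of the correct type for $\psi_{\vec{\vg}}$. Applying Theorem~\ref{th:subequ} once more, the output of $\psi_{\vec{\vg}}$ has the same transformation law, and finally the residual update $(\vec{\mZ}_i,\vh_i)+\psi_{\vec{\vg}}(\cdot)$ preserves equivariance because both summands transform identically.

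The only subtlety I anticipate is bookkeeping translation behaviour cleanly, since $\vec{\mZ}_i$ itself is translation-equivariant in its position channel while the update term coming out of $\psi_{\vec{\vg}}$ is built entirely from translation-invariant quantities; I would handle this by separating $\vec{\mZ}_i=[\vec{\vx}_i,\vec{\vv}_i]$ and checking that the residual adds only to velocity-like channels, or equivalently that the position channel of the output picks up exactly the translation $\vt$ inherited from $\vec{\vx}_i$. With that verification, composing the per-equation conclusions gives the full $\text{O}_{\vec{\vg}}(3)$-equivariance (and translation-equivariance) of $\varphi$, completing the proof.
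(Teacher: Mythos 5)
Your proposal is correct and follows essentially the same route as the paper: a step-by-step pass through Eqs.~(\ref{eq:vector-interaction})--(\ref{eq:update-mlp}), using the translation-invariance and $\mO$-equivariance of ``$\ominus$'' and the pooled object features, the designed subequivariance of $\phi_{\vec{\vg}}$ and $\psi_{\vec{\vg}}$ (where $\mO\vec{\vg}=\vec{\vg}$ makes the augmentation harmless), linearity of neighbour aggregation, and the residual connection. Your translation bookkeeping matches the paper's closing remark, so nothing further is needed.
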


\begin{proof}
We prove step by step in the specifications of Eq.~(\ref{eq:somp}) from Eq.~(\ref{eq:vector-interaction})-(\ref{eq:update-mlp}). For better clarity, we denote the variables after applying the transformation $\mO\in \text{O}_{\vec{\vg}}(3)$ with the superscript $\ast$. 

For any $\mO\in \text{O}_{\vec{\vg}}(3)$, we immediately have $(\vec{\mZ}^\ast_i, \vh^\ast_i) = (\mO\vec{\mZ}_i, \vh_i)$, and
\begin{align}
\nonumber
    (\vec{\mC}^\ast_k, \vc^\ast_k) &=\left(\frac{1}{|\{i:o(i)=k\}|}\sum\nolimits_{\{i:o(i)=k\}}\vec{\mZ}^\ast_i, \sum\nolimits_{\{i:o(i)=k\}}\vh^\ast_i\right),\\
    \nonumber
  &=\left(\frac{1}{|\{i:o(i)=k\}|}\sum\nolimits_{\{i:o(i)=k\}}\mO\vec{\mZ}_i, \sum\nolimits_{\{i:o(i)=k\}}\vh_i\right), \\
  \nonumber
    &= (\mO\vec{\mC}_k, \vc_k).
\end{align}

Therefore, for Eq.~(\ref{eq:vector-interaction}), 
\begin{align}
\nonumber
    \vec{\mZ}^\ast_{ij} &= (\vec{\mZ}^\ast_i\ominus\vec{\mC}^\ast_{o(i)}) \|(\vec{\mZ}^\ast_j\ominus\vec{\mC}^\ast_{o(j)})\| (\vec{\mZ}^\ast_i\ominus\vec{\mZ}^\ast_j),\\
    \nonumber
    &= (\mO\vec{\mZ}_i\ominus\mO\vec{\mC}_{o(i)}) \|(\mO\vec{\mZ}_j\ominus\mO\vec{\mC}_{o(j)})\| (\mO\vec{\mZ}_i\ominus\mO\vec{\mZ}_j), \\
    \nonumber
    &= \mO\left( (\vec{\mZ}_i\ominus\vec{\mC}_{o(i)}) \|(\vec{\mZ}_j\ominus\vec{\mC}_{o(j)})\| (\vec{\mZ}_i\ominus\vec{\mZ}_j)  \right) = \mO\vec{\mZ}_{ij},
\end{align}
which is $\text{O}_{\vec{\vg}}(3)$-equivariant.
Similarly for $\vh_{ij}$ in Eq.~(\ref{eq:scalar-interaction}), it is invariant, \emph{i.e.}, $\vh^\ast_{ij} = \vh_{ij}$. Since $\phi_{\vec{\vg}}$ in Eq.~(\ref{eq:message-mlp}) is designed to be subequivariant ($\text{O}_{\vec{\vg}}(3)$-equivariant), by definition we immediately derive that $(\vec{\mM}^\ast_{ij}, \vm^\ast_{ij}) = \phi_{\vec{\vg}}\left(\vec{\mZ}^\ast_{ij}, \vh^\ast_{ij} \right) = \phi_{\vec{\vg}}\left(\mO\vec{\mZ}_{ij}, \vh_{ij} \right) = (\mO\vec{\mM}_{ij}, \vm_{ij})$.

Finally, for the aggregation and update in Eq.~(\ref{eq:update-mlp}), it is derived as

\begin{align}
\nonumber
    (\vec{\mZ}'^\ast_i, \vh'^\ast_i) &= (\vec{\mZ}^\ast_i, \vh^\ast_i) + \psi_{\vec{\vg}}\left((\sum\nolimits_{j\in\gN(i)}\vec{\mM}^\ast_{ij}) \| (\vec{\mZ}^\ast_i\ominus\vec{\mC}^\ast_{o(i)}), (\sum\nolimits_{j\in\gN(i)}\vm^\ast_{ij}) \| \vh^\ast_i \| \vc^\ast_{o(i)} \right), \\
    \nonumber
    &= (\mO\vec{\mZ}_i,\vh_i) + \psi_{\vec{\vg}}\left((\sum\nolimits_{j\in\gN(i)}\mO\vec{\mM}_{ij}) \| (\mO\vec{\mZ}_i\ominus\mO\vec{\mC}_{o(i)}), (\sum\nolimits_{j\in\gN(i)}\vm_{ij}) \| \vh_i \| \vc_{o(i)} \right), \\
    \nonumber
    &= (\mO\vec{\mZ}_i,\vh_i) +
    \psi_{\vec{\vg}}\left(\mO((\sum\nolimits_{j\in\gN(i)}\vec{\mM}_{ij}) \| (\vec{\mZ}_i\ominus\vec{\mC}_{o(i)})), (\sum\nolimits_{j\in\gN(i)}\vm_{ij}) \| \vh_i \| \vc_{o(i)} \right), \\
    \nonumber
    &= (\mO\vec{\mZ}'_i, \vh'_i),
\end{align}
which concludes the proof by showing that $\vec{\mZ}'_i$ is $\text{O}_{\vec{\vg}}(3)$-equivariant and $\vh'_i$ is $\text{O}_{\vec{\vg}}(3)$-invariant.
\end{proof}

Indeed, by leveraging Theorem~\ref{the:message-passing-equ}, it is also straightforward that the resulting SGNN is also $\text{O}_{\vec{\vg}}(3)$-equivariant, since its components $\varphi_1$, $\varphi_2$, and $\varphi_3$ are all $\text{O}_{\vec{\vg}}(3)$-equivariant functions. 

\textbf{Remark on translation equivariance.} Regarding translation equivariance, the operation ``$\ominus$'' always results in translation-invariant representations. Therefore, $\vec{\mZ}_{ij}$ is translation invariant, and so does $\vh_{ij}$. Following this induction, the intermediate results until the output of $\psi_{\vec{\vg}}$ are all translation-invariant. By adding $(\vec{\mZ}_i, \vh_i)$ to the final output, it is clear to see that $\vec{\mZ}'_i$ is translation-equivariant and $\vh'_i$ is translation-invariant.

\subsection{Theoretical Comparisons Between EGNN, GMN, and SGNN}
\label{sec:proof3}
In this sub-section, we theoretically reveal that both EGNN and GMN are special cases of SGNN by choosing specific forms of MLP in $\phi_{\vec{\vg}}$ of Eq.~(\ref{eq:message-mlp}) and $\psi_{\vec{\vg}}$ of Eq.~(\ref{eq:update-mlp}). We provide an illustration from the architectural view in Fig.~\ref{fig:theo-compare}.

\begin{figure}[htbp]
    \centering
    \includegraphics[width=0.98\textwidth]{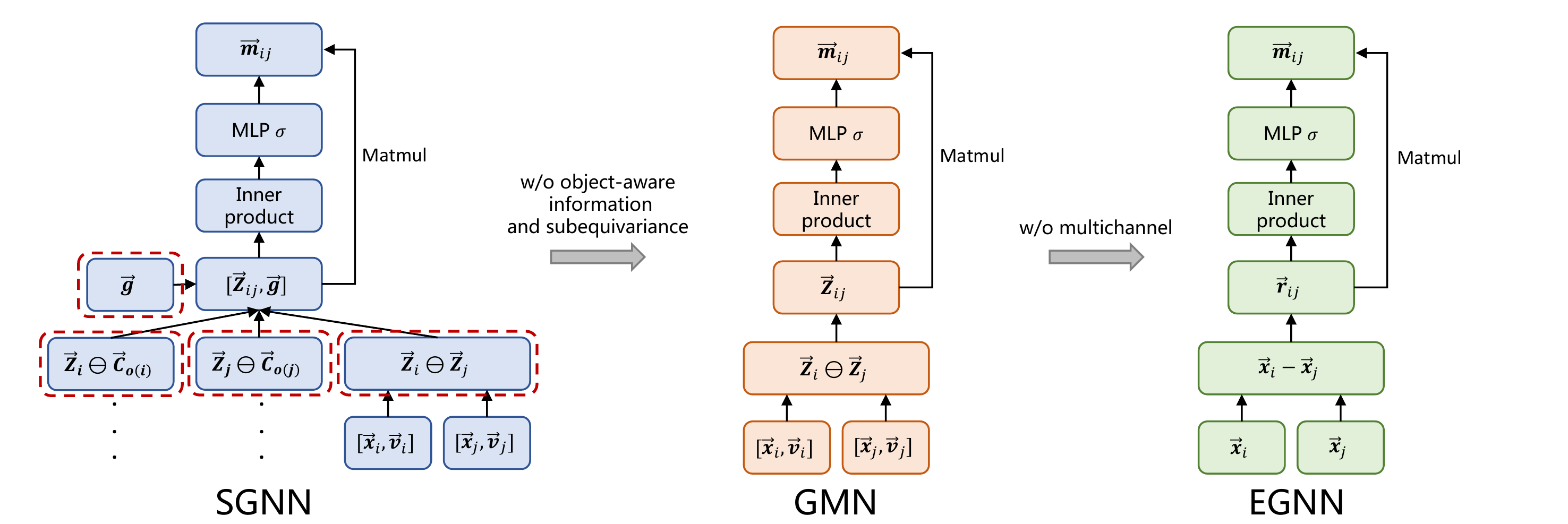}
    \caption{An illustration of the comparison between SGNN, GMN, and EGNN.}
    \label{fig:theo-compare}
\end{figure}

We first prove that SGNN can reduce to GMN by choosing specific form of the MLP $\sigma$. Intuitively, this reduction can be realized by masking the channels related to $\vec{\vg}$ and the object-aware information $\vec{\mC}_{o(i)}$ and $\vec{\mC}_{o(j)}$ from the input and the corresponding output before taking the matrix multiplication. We summarize this into the following lemma.

\setcounter{theorem}{5}
\begin{lemma}
\label{lemma:1}
Consider arbitrary 3D multichannel vectors $\vec{\mZ}_1\in\sR^{3\times m_1}$ and $\vec{\mZ}_2\in\sR^{3\times m_2}$. Let $f_{\sigma_1}=[\vec{\mZ}_1, \vec{\mZ}_2] \sigma_1 \left([\vec{\mZ}_1, \vec{\mZ}_2]^\top[\vec{\mZ}_1, \vec{\mZ}_2] \right), g_{\sigma_2} = \vec{\mZ}_1 \sigma_2 \left(\vec{\mZ}_1^\top \vec{\mZ}_1  \right)$. Then, for any $\sigma_2$, there exists $\sigma_1$, \emph{s.t.} $\|f_{\sigma_1} - g_{\sigma_2} \|<\epsilon$ for arbitrary small positive value $\epsilon$.
\end{lemma}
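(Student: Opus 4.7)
The plan is to construct an MLP $\sigma_1$ whose output is zero on the bottom $m_2$ rows and equals $\sigma_2(\vec{\mZ}_1^\top\vec{\mZ}_1)$ on the top $m_1$ rows; by block multiplication, $f_{\sigma_1}$ then collapses (up to approximation error) to $g_{\sigma_2}$. The key observation is that the Gram matrix $G \coloneqq [\vec{\mZ}_1,\vec{\mZ}_2]^\top[\vec{\mZ}_1,\vec{\mZ}_2]$ contains $\vec{\mZ}_1^\top\vec{\mZ}_1$ as its upper-left $m_1\times m_1$ block, so all the information that $\sigma_2$ depends on is already available to $\sigma_1$.

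Concretely, I would introduce two fixed linear maps: $P$, which extracts the upper-left $m_1\times m_1$ block of any $(m_1{+}m_2)\times(m_1{+}m_2)$ matrix, and $Q$, which appends $m_2$ zero rows below any matrix in $\sR^{m_1\times m'}$. Define the target $\sigma_1^\star \coloneqq Q\circ\sigma_2\circ P$. A direct block computation then gives
\[
[\vec{\mZ}_1,\vec{\mZ}_2]\,\sigma_1^\star(G) \;=\; \vec{\mZ}_1\,\sigma_2(\vec{\mZ}_1^\top\vec{\mZ}_1) \;+\; \vec{\mZ}_2\cdot\mathbf{0} \;=\; g_{\sigma_2},
\]
so $f_{\sigma_1^\star}$ coincides with $g_{\sigma_2}$ exactly.

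The remaining step is to realize $\sigma_1^\star$ by an MLP. Since $\sigma_2$ is an MLP and both $P,Q$ are linear, one option is to absorb $P$ and $Q$ into the first and last linear layers of $\sigma_2$, producing a genuine MLP $\sigma_1 = \sigma_1^\star$ of the same depth. Alternatively, by the universal approximation theorem, there exists an MLP $\sigma_1$ with $\|\sigma_1(G)-\sigma_1^\star(G)\| < \epsilon/\|[\vec{\mZ}_1,\vec{\mZ}_2]\|$ on any compact input domain. Combining with the submultiplicative bound
\[
\|f_{\sigma_1}-g_{\sigma_2}\| \;\le\; \|[\vec{\mZ}_1,\vec{\mZ}_2]\|\cdot \|\sigma_1(G)-\sigma_1^\star(G)\|
\]
then yields the claim.

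I do not anticipate a serious obstacle: the block-matrix manipulation is elementary, and universal approximation on compact domains is the same tool already invoked in the proof of Theorem~\ref{th:subequ}. The only mildly delicate point is choosing the approximation tolerance proportionally to $\|[\vec{\mZ}_1,\vec{\mZ}_2]\|$, but since the lemma is stated for fixed (though arbitrary) $\vec{\mZ}_1,\vec{\mZ}_2$, this is a one-line rescaling. The takeaway is that the enriched input $[\vec{\mZ}_1,\vec{\mZ}_2]$ in SGNN strictly subsumes the GMN input $\vec{\mZ}_1$, because the extra channels and Gram-matrix entries can always be made dormant by appropriate zero-padding inside $\sigma_1$.
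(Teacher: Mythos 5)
Your proposal is correct and follows essentially the same route as the paper's proof: the paper also defines $\sigma_1 = f_{\text{out}}\circ\sigma_2\circ f_{\text{in}}$, where $f_{\text{in}}$ restricts the Gram matrix to the $\vec{\mZ}_1^\top\vec{\mZ}_1$ block and $f_{\text{out}}$ zero-pads the output rows corresponding to $\vec{\mZ}_2$, then invokes universal approximation of MLPs. Your explicit handling of the approximation tolerance relative to $\|[\vec{\mZ}_1,\vec{\mZ}_2]\|$ is a minor refinement of the same argument.
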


\begin{proof}
The input of $\sigma_1$ can be rewritten as $\begin{bmatrix} \vec{\mZ}_1^\top\vec{\mZ}_1&  \vec{\mZ}_1^\top\vec{\mZ}_2\\
    \vec{\mZ}_2^\top\vec{\mZ}_1 &  \vec{\mZ}_2^\top\vec{\mZ}_2 \end{bmatrix}$. Denote $f_{\text{in}}(\mX) = \begin{bmatrix}1 & 0 \\ 0 & 0 \end{bmatrix}\mX$ and $f_{\text{out}}(\mX) = \begin{bmatrix} \mX \\ 0 \end{bmatrix}$.
Then, when choosing $\sigma_1 = f_{\text{out}} \circ \sigma_2 \circ f_{\text{in}}$, we have

\begin{align}
\nonumber
f_{\sigma_1} &= [\vec{\mZ}_1, \vec{\mZ}_2] \sigma_1 \left([\vec{\mZ}_1, \vec{\mZ}_2]^\top[\vec{\mZ}_1, \vec{\mZ}_2] \right), \\
\nonumber
&= [\vec{\mZ}_1, \vec{\mZ}_2] f_{\text{out}} \circ \sigma_2 \circ f_{\text{in}} \left(\begin{bmatrix} \vec{\mZ}_1^\top\vec{\mZ}_1&  \vec{\mZ}_1^\top\vec{\mZ}_2\\
    \vec{\mZ}_2^\top\vec{\mZ}_1 &  \vec{\mZ}_2^\top\vec{\mZ}_2 \end{bmatrix} \right), \\
    \nonumber
&= [\vec{\mZ}_1, \vec{\mZ}_2] f_{\text{out}} \circ \sigma_2 \left(\vec{\mZ}_1^\top\vec{\mZ}_1  \right),\\
\nonumber
&= [\vec{\mZ}_1, \vec{\mZ}_2] \begin{bmatrix} \sigma_2 \left(\vec{\mZ}_1^\top\vec{\mZ}_1  \right)\\ 0 \end{bmatrix}, \\
\nonumber
&= \vec{\mZ}_1 \sigma_2 \left(\vec{\mZ}_1^\top\vec{\mZ}_1  \right) = g_{\sigma_2}.
\end{align}
    
Therefore, by the universal approximation of MLP~\cite{cybenko1989approximation,hornik1991approximation}, we know that for any $g_{\sigma_2}$ parameterized by $\sigma_2$, there exists $\sigma_1 = f_{\text{out}} \circ \sigma_2 \circ f_{\text{in}}$ that can approximate $g_{\sigma_2}$ with $f_{\sigma_1}$ by arbitrarily small error $\epsilon$.
\end{proof}

Leveraging this lemma directly gives the following theorem.

\setcounter{theorem}{2}
\begin{theorem}
Let the symbol $f_{\sigma_1} \succeq g_{\sigma_2}$ denotes that for any $g_{\sigma_2}$ parameterized by $\sigma_2$, there exists $\sigma_1$ satisfying $\|f_{\sigma_1} - g_{\sigma_2}\| < \epsilon$ for arbitrarily small positive value $\epsilon$. Then, SOMP $\succeq$ GMN $\succeq$ EGNN.
\end{theorem}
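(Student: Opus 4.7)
The plan is to prove the chain of approximation relations by two successive applications of Lemma~\ref{lemma:1} (or a direct extension of it), one for each $\succeq$ in the statement. The key intuition is that Lemma~\ref{lemma:1} tells us that any message-passing scheme whose inner MLP only ``sees'' the Gram matrix of a subset of directional channels can be recovered from a scheme that includes additional channels, by choosing the outer MLP to read only the relevant sub-block of the augmented Gram matrix and to write zeros on the extra output channels. This reduction applies both when we strip away the gravity direction $\vec{\vg}$ plus the object-aware vectors $\vec{\mC}_{o(i)}$ (taking SOMP down to GMN), and when we further strip away the velocity channel of $\vec{\mZ}_i=[\vec{\vx}_i,\vec{\vv}_i]$ (taking GMN down to EGNN).

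For the first inequality SOMP $\succeq$ GMN, I would instantiate Lemma~\ref{lemma:1} with $\vec{\mZ}_1 \coloneqq \vec{\mZ}_i\ominus\vec{\mZ}_j$ (the channels used by GMN) and $\vec{\mZ}_2$ collecting all the extra directional inputs appearing in SOMP, namely $\vec{\mZ}_i\ominus\vec{\mC}_{o(i)}$, $\vec{\mZ}_j\ominus\vec{\mC}_{o(j)}$, and $\vec{\vg}$. Then choose the composition $\sigma_1 = f_{\mathrm{out}}\circ\sigma_2\circ f_{\mathrm{in}}$ so that $f_{\mathrm{in}}$ projects the block Gram matrix onto its $\vec{\mZ}_1^\top\vec{\mZ}_1$ sub-block and $f_{\mathrm{out}}$ pads the output with zeros on the coordinates corresponding to $\vec{\mZ}_2$. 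The same construction needs to be applied independently inside $\phi_{\vec{\vg}}$ (Eq.~\ref{eq:message-mlp}) and $\psi_{\vec{\vg}}$ (Eq.~\ref{eq:update-mlp}), and the scalar path $\vh_{ij}$ is handled analogously by zeroing out the components of the $\vc_{o(i)},\vc_{o(j)}$ concatenation. After this masking the SOMP update collapses exactly to the GMN update, and universal approximation of MLPs yields the $\epsilon$-closeness.

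For the second inequality GMN $\succeq$ EGNN, I would apply Lemma~\ref{lemma:1} again, this time with $\vec{\mZ}_1\coloneqq \vec{\vx}_i-\vec{\vx}_j$ (the single EGNN channel) and $\vec{\mZ}_2$ consisting of the velocity channels $\vec{\vv}_i,\vec{\vv}_j$ that GMN additionally stacks in $\vec{\mZ}_i\ominus\vec{\mZ}_j$. The same mask-and-pad trick shows that GMN's parameterisation family contains arbitrarily accurate approximations of every EGNN function. One small bookkeeping step is to make sure the $\vec{\mZ}_1\sigma_2(\vec{\mZ}_1^\top\vec{\mZ}_1)$ form matches EGNN's coordinate update $(\vec{\vx}_i-\vec{\vx}_j)\phi_x(\|\vec{\vx}_i-\vec{\vx}_j\|^2,\cdot)$; since $\vec{\mZ}_1^\top\vec{\mZ}_1 = \|\vec{\vx}_i-\vec{\vx}_j\|^2$ in this single-channel case, the two forms coincide up to a choice of scalar MLP. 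Transitivity of $\succeq$ then closes the chain.

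The only real obstacle is cosmetic rather than substantive: ensuring that Lemma~\ref{lemma:1} is stated in enough generality to cover both the vector output of $\phi_{\vec{\vg}},\psi_{\vec{\vg}}$ and the auxiliary invariant output channels $\vm_{ij},\vh_i'$. Since these invariant outputs are functions of the same Gram-matrix input, the same $f_{\mathrm{in}}$ projection suffices and only $f_{\mathrm{out}}$ needs to be broadened to include extra invariant rows; this is a routine extension of the argument in Lemma~\ref{lemma:1}. Once that is in place, the two $\succeq$ inequalities follow, and transitivity gives SOMP $\succeq$ GMN $\succeq$ EGNN as required.
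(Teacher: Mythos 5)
Your proposal is correct and follows essentially the same route as the paper: both inequalities are obtained by instantiating Lemma~\ref{lemma:1} with exactly the same choices of $\vec{\mZ}_1$ and $\vec{\mZ}_2$ (namely $\vec{\mZ}_1=\vec{\mZ}_i\ominus\vec{\mZ}_j$, $\vec{\mZ}_2=[\vec{\mZ}_i\ominus\vec{\mC}_{o(i)},\vec{\mZ}_j\ominus\vec{\mC}_{o(j)},\vec{\vg}]$ for SOMP $\succeq$ GMN, and $\vec{\mZ}_1=\vec{\vx}_i-\vec{\vx}_j$, $\vec{\mZ}_2=[\vec{\vv}_i,\vec{\vv}_j]$ for GMN $\succeq$ EGNN). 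Your additional bookkeeping about applying the mask-and-pad construction separately in $\phi_{\vec{\vg}}$ and $\psi_{\vec{\vg}}$ and on the scalar channels is a reasonable elaboration of details the paper leaves implicit.
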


\begin{proof}
\textbf{1. SOMP $\succeq$ GMN.} Let $\vec{\mZ}_1 = [\vec{\mZ}_i \ominus \vec{\mZ}_j]$, and $\vec{\mZ}_2 = [\vec{\mZ}_i\ominus \vec{\mC}_{o(i)}, \vec{\mZ}_j\ominus \vec{\mC}_{o(j)}, \vec{\vg}]$. Using Lemma~\ref{lemma:1} immediately shows that SOMP $\succeq$ GMN.
\textbf{2. GMN $\succeq$ EGNN.} Similarly, we choose $\vec{\mZ}_1 = [\vec{\vx}_i - \vec{\vx}_j]$, and $\vec{\mZ}_2 = [\vec{\vv}_i, \vec{\vv}_j]$, and by Lemma~\ref{lemma:1} we have GMN $\succeq$ EGNN, which concludes the proof.
\end{proof}

By these theoretical derivations we are able to show that SOMP indeed has stronger expressivity than GMN and EGNN, by leveraging object-aware information as well as the subequivariance depicted by vector $\vec{\vg}$.

\section{Implementation Details}
\label{sec:impl-detail}

\subsection{Hyper-parameters and Training Details}

We utilize the codebase provided by Physion~\cite{bear2021physion} for particle-based methods\footnote{\url{https://github.com/htung0101/Physion-particles}}. This repository contains the implementation of GNS~\cite{sanchez2020learning} and DPI~\cite{li2018learning}. For DPI, we notice that it has been optimized in the RigidFall task by ~\cite{li2020visual}\footnote{\url{https://github.com/YunzhuLi/VGPL-Dynamics-Prior}}, and we thus adopt their optimized version on RigidFall. As for EGNN~\cite{satorras2021en} and GMN~\cite{huang2022equivariant}, we resort to their original implementations\footnote{\url{https://github.com/vgsatorras/egnn}}\footnote{\url{https://github.com/hanjq17/GMN}}, respectively. The datasets and code repositories are released under MIT license.

We basically follow the hyper-parameters suggested by Physion. In detail, for GNS and DPI, we use a hidden dimension of 200 for the node update function $\psi$ and 300 for the message computation function $\phi$, each of which consists of 3 layers with ReLU as the activation function. The iteration step is set to 10 for GNS and 2 for DPI due to its multi-stage hierarchical modeling. We use an Adam optimizer with initial learning rate 0.0001, betas (0.9, 0.999), and a Plateau scheduler with a patience of 3 epochs and decaying factor 0.8. For EGNN, GMN, and SGNN, we still build upon the above hyper-parameters with very minor modifications. We adopt hidden dimension 200 uniformly for $\psi$ and $\phi$ with SiLU activation function, and 4 iterations in $\varphi_1$, $\varphi_2$, and $\varphi_3$ for SGNN, while 10 for EGNN and GMN. We use an early-stopping of 10 epochs.
We use a batch size of 1 on Physion due to the large size of each system and 8 on RigidFall due to its relatively small size. Besides, we also inject noise during training for better test-time long-term rollout prediction, exactly following the settings of Physion and RigidFall~\cite{li2020visual}. The scale is set to 3e-4 in Physion and 0.05 of the std in RigidFall. The cutoff radius $\gamma$ is set to be 0.08 on both datasets. On both datasets, we only use the state information of last frame $t$ as input to predict the information of frame $t+1$. The experiments are conducted on single card NVIDIA Tesla V100 GPU.

Notably, for EGNN-S and GMN-S, we make a modification to their updates of the velocity, namely,
$$ \vec{\mathbf{v}}_i^{l+1}=\phi_v(\mathbf{h}^l_i)\vec{\mathbf{v}}_i^{l} + \underline{\phi_g(\mathbf{h}_i^l)\vec{\mathbf{g}}} + \sum_{j\in\mathcal{N}(i)}(\vec{\mathbf{x}}_i - \vec{\mathbf{x}}_j)\phi_x(\mathbf{m}_{ij}), $$
where the underlined term highlights our adaptation, $\phi_v, \phi_g, \phi_x$ are all MLPs, and the superscript $l$ indicates the iteration step. The intuition of this term is similar to adding a gravitational acceleration term to the update of velocity. This formulation meets $O_{\vec{\vg}}(3)$-equivariance.

As for the data splits, we strictly follow Physion  and RigidFall. In detail, for Physion, the full training set contains 2000 trajectories in each scenario, which is then split into training and validation with the ratio 9:1. The testing set contains 200 trajectories. For RigidFall,  the full training set contains 5000 trajectories, which is also split into training and validation with ratio 9:1. Particularly, to study the data-efficiency of different models, we sub-sample multiple training sets with sizes 200, 500, 1000, 5000, as illustrated in Table~\ref{tab:rigidfall_mse} in the paper.

Besides, in the implementation we also employ a normalization before feeding the inner product into the MLP, \emph{i.e.}, we normalize the inner product by $\vec{\mZ}^\top\vec{\mZ}/\| \vec{\mZ}^\top\vec{\mZ} \|_F$ in Eq.~(\ref{eq:GMN}), as suggested in GMN~\cite{huang2022equivariant} to control the expanding variance in scale brought by the inner product for better numerical stability. This is similarly adopted in Eq.~(\ref{eq:subGMN}) for $[\vec{\mZ}, \vec{\vg}]$, where we also propose to dynamically control the scale of $\vec{\vg}$ by $\eta(\vh)\in\sR$ where $\eta$ is a lightweight two-layer MLP. We find these considerations generally leads to faster convergence.

\subsection{Computational Complexity}
In this sub-section, we compare the computational budget of SGNN to those of the baselines, aiming to illustrate that the superior performance brought by SGNN stems from our design of subequivariance and hierarchical modeling, but not more computational overhead or more parameters used.

Generally, the computational complexity of the models is approached by $O(KT|\gE|)$, where $K$ is the number of stages, $T$ is the number of message passing steps in each stage, and $|\gE|$ measures the number of edges in the interaction graph.
Among all the models, they can be characterized into non-hierarchical methods including GNS, EGNN, and GMN, as well as hierarchical methods including DPI and our SGNN. For the non-hierarchical models, it has been observed in~\cite{sanchez2020learning} that generally a larger number of propagation iterations $T$ is required for better performance, which is set to 10 in the implementation. For the hierarchical methods, it does not require such a large number, which is set to 2 for DPI following their original setup and 4 in SGNN. By this means, we have carefully controlled the computational budget by making the total number of message passing iterations nearly the same for all methods, since DPI requires in total $K=4$ stages (leaf-leaf, leaf-root, root-root, root-leaf) with  each stage involving 2 steps, and SGNN requires $K=3$ stages $\varphi_1$, $\varphi_2$, and $\varphi_3$, with each stage having 4 steps. GNS, EGNN, and GMN only have one stage, but need 10 steps in this stage. Besides, it is also worth noticing that SGNN employs edge separation, which further brings down the cost when computing message passing with $|\gE_{\text{inter}}|, |\gE_{\text{inner}}|, |\gE_{\text{obj}}| < |\gE|$ edges.
Regarding the size of the networks, we reuse the hyper-parameters, \emph{e.g.}, the number of layers in MLP and the hidden dimension, of the baselines for SGNN, which makes SGNN nearly as the same size as DPI. 

To further illustrate, we provide the total number of parameters and the average training time per step in seconds on Physion Dominoes in Table~\ref{tab:param}, which indicates that SGNN has a moderate number of parameters and still enjoys fast training speed.
\begin{table}[htbp]
  \centering
  \caption{Number of parameters (\#Param) and average training time per step on Physion Dominoes.}
    \begin{tabular}{lccccc}
    \toprule
          & GNS~\cite{sanchez2020learning}   & DPI~\cite{li2018learning}   & EGNN~\cite{satorras2021en}  & GMN~\cite{huang2022equivariant}   & SGNN \\
    \midrule
    \#Param & 0.54M & 1.98M & 0.45M & 0.51M & 1.50M \\
    Time (seconds)  & 0.40 $\pm$ 0.02   & 0.35$\pm$0.03  & 0.08$\pm$0.01 & 0.09$\pm$0.01 & 0.11$\pm$ 0.02 \\
    \bottomrule
    \end{tabular}%
  \label{tab:param}%
\end{table}%

\section{More Experiment Results}

\subsection{Motivating Example}

\begin{figure}[htbp]
    \centering
    \includegraphics[width=0.98\textwidth]{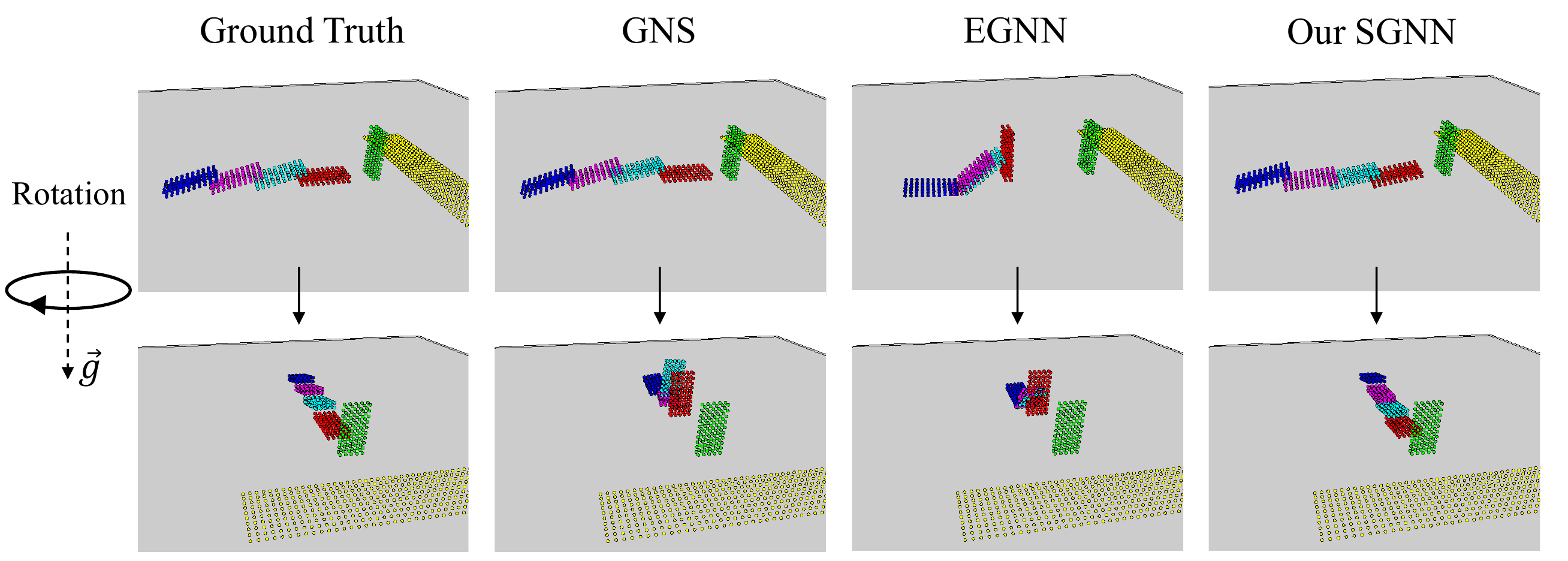}
    \caption{A motivating example on Physion Dominoes. GNS is able to produce accurate prediction only in the same direction as the training trajectories. EGNN fails to learn the complex interactions with gravity involved. Our model predicts very accurately regardless of any valid rotations applied.}
    \label{fig:motivation}
\end{figure}

We provide a motivating example in Fig.~\ref{fig:motivation}. We have the following observations: \textbf{1.} Physical laws abide by symmetry. The dynamics of the dominoes falling from left to right will exactly be preserved the same way if we apply a rotation along the gravitational axis $\vec{\vg}$; \textbf{2.} Without any guarantee of equivariance, models like GNS fail to preserve such symmetry. For example, if all dominoes in the training data are falling from left to right, GNS might be able to produce accurate prediction if the testing trajectory is also aligned from left to right, but will perform poorly if a rotation is adopted; \textbf{3.} Equivariant models like EGNN and GMN are able to preserve the symmetry, \emph{i.e.}, their prediction will rotate/translate together with the input. However, they are enforcing E($3$)-equivariance, which is too strong that limits the expressivity of the model when the equivariance is violated by the existence of gravity; \textbf{4.} Our model SGNN takes into consideration the desirable subequivariance as well as object-aware message passing, yielding very accurate prediction while being invulnerable to any test-time rotation along the vertical axis.

\subsection{Rollout MSE on Physion}
\label{sec:mse}

\begin{table}[htbp]
  \centering
  \caption{Rollout MSE on Physion when $t=10$.}
    \begin{tabular}{lcccccccc}
    \toprule
          & Dominoes & Contain & Link  & Drape & Support & Drop  & Collide & Roll \\
    \midrule
    GNS*~\cite{sanchez2020learning}  & 0.19  & 3.76  & 9.89  & 18.51  & 4.12  & 2.59  & 1.58  & 0.60  \\
    DPI*~\cite{li2018learning}  & 0.15  & 2.41  & 6.56  & 18.43  & \textbf{3.67}  & 2.39  & 1.66  & 0.63  \\
    \midrule
    GNS~\cite{sanchez2020learning}   & 1.01  & 4.25  & 16.23  & 23.72  & 4.20  & 2.57  & 5.35  & 0.80  \\
    DPI~\cite{li2018learning}   & 1.04  & 3.13  & 12.88  & 35.05  & 4.37  & 1.9   & 8.02  & 2.05  \\
    \midrule
    GNS-Rot~\cite{sanchez2020learning} & 0.27  & 3.95  & 10.39  & 29.85  & 4.69  & 2.04  & 1.53  & 0.63  \\
    DPI-Rot~\cite{li2018learning} & 0.22  & \textbf{2.27}  & 6.37  & 26.94  & 4.61  & 1.88  & 1.64  & 1.41  \\
    \midrule
    EGNN~\cite{satorras2021en}  & 0.31  & 5.91  & 12.93  & 39.81  & 5.25  & 1.86  & 4.22  & 1.01  \\
    GMN~\cite{huang2022equivariant}   & 0.59  & 8.88  & 19.36  & 39.70  & 9.08  & 3.16  & 6.13  & 2.03  \\
    \midrule
    SGNN  & \textbf{0.09}  & 2.32  & \textbf{4.98}  & \textbf{17.23}  & 4.52  & \textbf{1.37}  & \textbf{1.34}  & \textbf{0.53}  \\
    \bottomrule
    \end{tabular}%
  \label{tab:rollout10}%
\end{table}%

\begin{table}[htbp]
  \centering
  \caption{Rollout MSE ($\times 10^{1}$) on Physion when $t=35$.}
    \begin{tabular}{lcccccccc}
    \toprule
          & Dominoes & Contain & Link  & Drape & Support & Drop  & Collide & Roll \\
    \midrule
    GNS*~\cite{sanchez2020learning}  & 0.16 & 4.83  & 10.95  & 9.04  & \textbf{9.59}  & 4.78  & 3.28  & 0.39  \\
    DPI*~\cite{li2018learning}  & 0.14  & 2.36  & 9.49  & 32.97  & 28.97  & 1.73 & 4.07  & 0.38  \\
    \midrule
    GNS~\cite{sanchez2020learning}   & 0.53  & 5.28  & 16.56  & 9.53  & 9.74  & 5.92  & 8.10  & 0.43  \\
    DPI~\cite{li2018learning}   & 0.56  & 3.39  & 16.29  & 30.21  & 17.07  & 0.98  & 11.40  & 1.54  \\
    \midrule
    GNS-Rot~\cite{sanchez2020learning} & 0.22  & 4.32  & 15.54  & 12.13  & 9.80  & 1.79  & 3.02  & 0.37  \\
    DPI-Rot~\cite{li2018learning} & 0.35  & 4.65  & 11.04  & 12.02  & 53.69  & 1.24  & 3.68  & 1.54  \\
    \midrule
    EGNN~\cite{satorras2021en}  & 0.30  & 6.36  & 19.99  & 13.41  & 14.81  & 0.96  & 5.96  & 0.66  \\
    GMN~\cite{huang2022equivariant}   & 0.39  & 7.25  & 22.97  & 12.33  & 16.60  & 1.53  & 7.93  & 0.84  \\
    \midrule
    SGNN  & \textbf{0.07}  & \textbf{2.16}  & \textbf{7.01}  & \textbf{8.14}  & 13.55  & \textbf{0.70}  & \textbf{2.80}  & \textbf{0.31}  \\
    \bottomrule
    \end{tabular}%
  \label{tab:rollout35}%
\end{table}%

We provide the rollout MSE when $t=10$ and $t=35$ on Physion in Table~\ref{tab:rollout10} and~\ref{tab:rollout35}, respectively. Our SGNN gives the best results in 7 out of all 8 scenarios, especially favorable on long-term prediction.

\subsection{Learning curves}

\begin{figure}[htbp]
    \centering
    \includegraphics[width=0.42\textwidth]{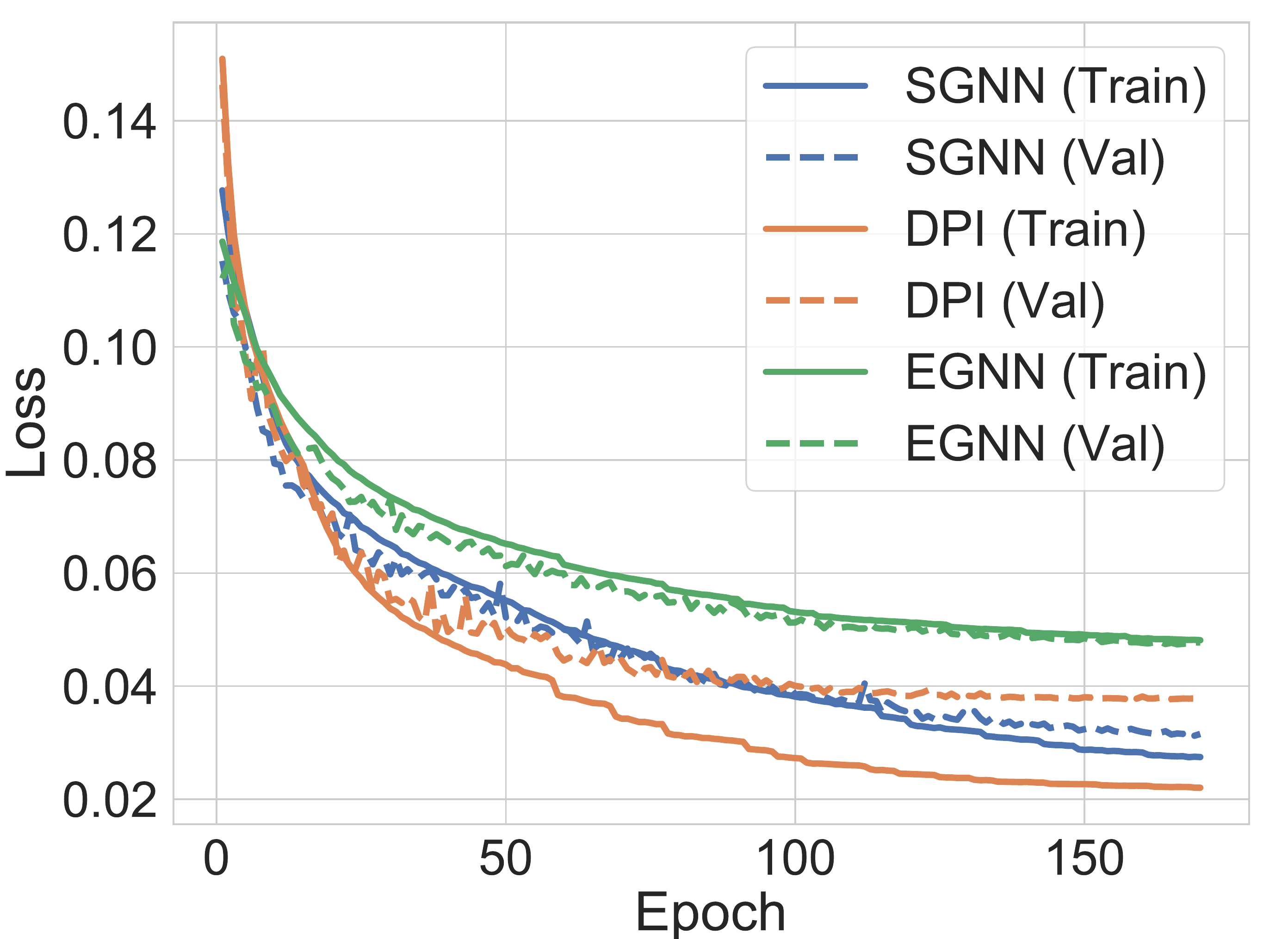}
    \includegraphics[width=0.42\textwidth]{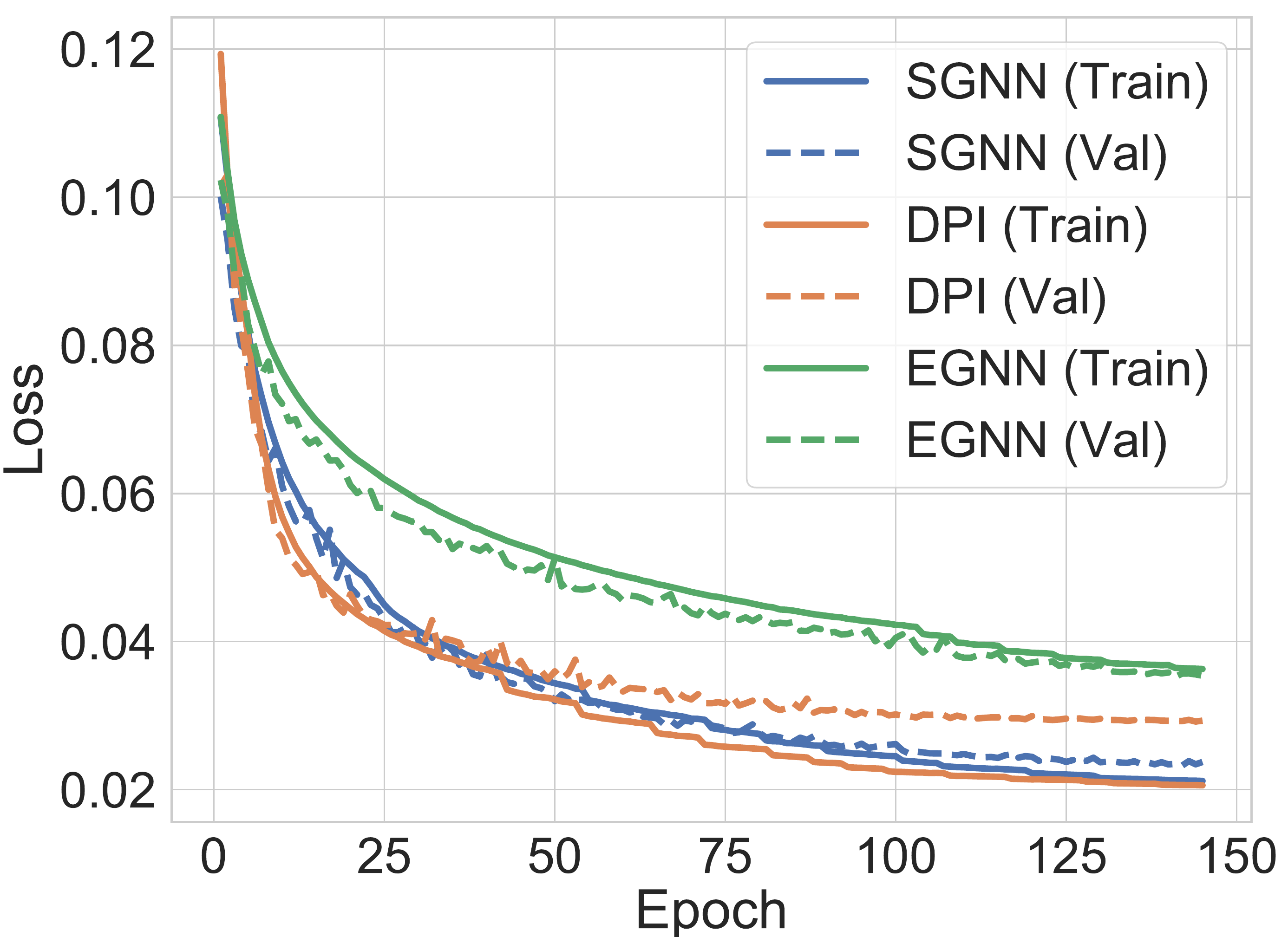}
    \caption{Learning curve comparisons on RigidFall. Left: $|\text{Train}|=200$; Right: $|\text{Train}|=500$.}
    \label{fig:learning_curve}
\end{figure}

We provide the learning curves including training loss and validation loss on RigidFall in Fig.~\ref{fig:learning_curve} with training data size 200 and 500, respectively. Our core observations here include \textbf{1.} The non-equivariant model DPI tends to suffer from \emph{overfitting}, since it lacks the inductive bias of symmetry. This can be observed from the curve that its training loss can reach a very low level (\emph{e.g.}, better than SGNN when $|\text{Train}|=200$), but the validation is not promising. There is generally a big gap between training and validation. Without the equivariance constraint, it may overfit the directions only existing in the training data and fail to generalize to validation set;
\textbf{2.} The E($3$)-equivariant model EGNN may underfit the training data, since it is restricted by an over-strong constraint that indeed fails to capture the real dynamics in the training data, though its generalization is desirable with a very small gap between training and validation; \textbf{3.} Our SGNN, by leveraging subequivariance, fits the training data well while yields strong generalization, achieving the lowest validation loss. These observations from the learning curves well align with our analyses and verify the efficacy of our design.





\subsection{Experiment on Hamiltonian-based NNs}
We augment SGNN and EGNN by a Hamiltonian integrator. Details of the implementation include:

\begin{itemize}
    \item We leverage a sum-pooling over the output scalar feature ($\mathbf{h}_i$) as the Hamiltonian of the system, i.e., $\mathcal{H} \in \mathbb{R}=\sum_{i=1}^N \mathbf{h}_i$.
    \item We employ a RK1 integrator to conduct Hamiltonian update, i.e., $(\dot{\vec{\mathbf{q}}}, \dot{\vec{\mathbf{p}}})=(\frac{\partial\mathcal{H}}{\partial{\vec{\mathbf{p}}}}, -\frac{\partial\mathcal{H}}{\partial{\vec{\mathbf{q}}}})$.
\end{itemize}

One thing worth noticing here is that we are assuming the particles possess uniform mass, so that $\vec{\mathbf{q}}, \vec{\mathbf{p}}$ can be derived from $\vec{\mathbf{x}}, \vec{\mathbf{v}}$, respectively. We name the variants as SGNN-H and EGNN-H ("H" stands for Hamiltonian), and evaluate them on Physion. The results are displayed in Table~\ref{tab:hamil}.

\begin{table}[htbp]
  \centering
  \caption{Comparison with the Hamiltonian-based variants.}
    \begin{tabular}{lcccccccc}
    \toprule
          & \multicolumn{1}{c}{Domino} & \multicolumn{1}{c}{Contain} & \multicolumn{1}{c}{Link} & \multicolumn{1}{c}{Drape} & \multicolumn{1}{c}{Support} & \multicolumn{1}{c}{Drop} & \multicolumn{1}{c}{Collide} & \multicolumn{1}{c}{Roll} \\
    \midrule
    EGNN~\cite{satorras2021en}  & 61.3  & 66.0    & 52.7  & 54.7  & 60.0    & 63.3  & 76.7  & 79.8 \\
    EGNN-H~\cite{satorras2021en,sanchez2019hamiltonian} & 52.0    & 58.1  & 54.0    & 54.3  & 51.1  & 54.7  & 75.7  & 75.3 \\
    SGNN  & 89.1  & 78.1  & 73.3  & 60.6  & 71.2  & 74.3  & 85.3  & 84.2 \\
    SGNN-H & 69.9  & 66.0    & 61.1  & 60.3  & 55.3  & 62.0    & 79.3  & 78.7 \\
    \bottomrule
    \end{tabular}%
  \label{tab:hamil}%
\end{table}%

\begin{table}[htbp]
  \centering
  \caption{Average training time per step (in seconds) on Physion Dominoes.}
    \begin{tabular}{cccc}
    \toprule
      EGNN~\cite{satorras2021en}    &  EGNN-H~\cite{satorras2021en,sanchez2019hamiltonian}     &    SGNN   &  SGNN-H \\
    \midrule
      0.08$\pm$0.01    &    0.44$\pm$0.02   &   0.11$\pm$0.02    & 0.48$\pm$0.03 \\
    \bottomrule
    \end{tabular}%
  \label{tab:hamil_time}%
\end{table}%

Adding Hamiltonian into EGNN and SGNN generally leads to detrimental performance. We speculate that it is probably due to the dissipative forces as well as highly complex interactions in Physion. Moreoever, as illustrated in Table~\ref{tab:hamil_time}, the Hamiltonian module brings significant computation overhead during training. This result suggests that it may not be beneficial to involve such strong physical inductive bias for the scenarios in Physion.

\subsection{Experiment on Steerable SE(2) GNN}
\label{sec:se2gnn}
We also implement a baseline that leverages the idea in~\cite{weiler2019general,cesa2021program} but extends from CNN to GNN. Indeed,~\cite{weiler2019general,cesa2021program}  are steerable CNNs, and these works have not offered available implementations on GNNs. We have tried our best to compare this idea with our model. Specifically, we implement ``Steerable-SE(2)-GNN'', that iterates the message passing as specified below. Consider the message computation for the edge $e_{ij}\in\mathcal{E}$ connecting node $i$ and $j$.
\begin{itemize}
    \item Compute the translation-invariant radial vector: $\vec{\mathbf{x}}_{ij}=\vec{\mathbf{x}}_i - \vec{\mathbf{x}}_j$.
    \item Project $\vec{\mathbf{x}}_{ij}$ onto $\vec{\mathbf{g}}$: $v\in\mathbb{R}=\frac{\vec{\mathbf{x}}_{ij}\cdot\vec{\mathbf{g}}}{\|\vec{\mathbf{g}} \|}$, and $\vec{\mathbf{u}}\in\mathbb{R}^2=((\vec{\mathbf{x}}_{ij}-v\vec{\mathbf{g}})\cdot \vec{\mathbf{m}}, ((\vec{\mathbf{x}}_{ij}-v\vec{\mathbf{g}})\cdot \vec{\mathbf{n}})$, where $\vec{\mathbf{m}}, \vec{\mathbf{n}}$ are two orthonormal bases vertical to $\vec{\mathbf{g}}$.
    \item Derive the type-0 message as $\mathbf{m}_{ij} = \text{MLP}_1(\sum_l w^{01}_l k^{01}_l(\vec{\mathbf{u}}) \cdot \vec{\mathbf{u}}, v, \|\vec{\mathbf{u}}\|, \mathbf{h}_i, \mathbf{h}_j)$.
    \item Derive the type-1 message as $\vec{\mathbf{M}}_{ij}=(\sum_l w_l^{10} k^{10}_l(\vec{\mathbf{u}})\mathbf{m}_{ij}+\sum_l w_l^{11} k_l^{11}(\vec{\mathbf{u}})\cdot\vec{\mathbf{u}})\cdot\text{MLP}_2(\mathbf{m}_{ij})$.
    \item Aggregate and update type-0 feature: $\mathbf{h}'_i=\text{MLP}_3(\sum_{j\in \mathcal{N}(i)}\mathbf{m}_{ij}, \mathbf{h}_i)$.
    \item Aggregate and update type-1 feature: $\vec{\mathbf{M}_i}=\sum_{j\in\mathcal{N}(i)}\vec{\mathbf{M}}_{ij}$,  $\vec{\mathbf{x}}_{i}'=\vec{\mathbf{x}}_{i} + \text{MLP}_4(\|\vec{\mathbf{M}}_{i}\|)\frac{\vec{\mathbf{M}}_{i}}{\|\vec{\mathbf{M}}_{i}\|+\epsilon}$.
\end{itemize}

Particularly, $w_l^{10}, w_l^{01}, w_l^{11}\in\mathbb{R}$ are the learnable coefficients and $k^{10}_l, k^{01}_l, k^{11}_l$ are the steerable kernel bases that transform irreps from type 1 to 0, type 0 to 1, and type 1 to 1, respectively (c.f. Table 8 in~\cite{weiler2019general} for more details); $\text{MLP}_2(\mathbf{m}_{ij})\in\mathbb{R}, \text{MLP}_4(\|\vec{\mathbf{M}}_{i}\|)\in\mathbb{R}$. It is proved that the above implementation is equivariant with respect to the subgroup $SO_{\vec{\mathbf{g}}}(3)$.

We compare Steer-SE(2)-GNN with EGNN, EGNN-S (the subequivariant version of EGNN), and SGNN on Physion in Table~\ref{tab:steer}.
\begin{table}[htbp]
  \centering
  \small
  \caption{Comparison with Steerable SE(2) GNN.}
    \begin{tabular}{lcccccccc}
    \toprule
          & \multicolumn{1}{c}{Dominoes} & \multicolumn{1}{c}{Contain} & \multicolumn{1}{c}{Link} & \multicolumn{1}{c}{Drape} & \multicolumn{1}{c}{Support} & \multicolumn{1}{c}{Drop} & \multicolumn{1}{c}{Collide} & \multicolumn{1}{c}{Roll} \\
    \midrule
    EGNN~\cite{satorras2021en}  & 61.3  & 66.0    & 52.7  & 54.7  & 60.0    & 63.3  & 76.7  & 79.8 \\
    Steer-SE(2)-GNN~\cite{weiler2019general} & 59.1  & 66.7  & 54.0    & 51.1  & 62.5  & 66.7  & 77.0    & 77.3 \\
    EGNN-S & 72.0    & 64.6  & 55.3  & 55.3  & 60.5  & 69.3  & 79.3  & 81.6 \\
    SGNN  & 89.1  & 78.1  & 73.3  & 60.6  & 71.2  & 74.3  & 85.3  & 84.2 \\
    \bottomrule
    \end{tabular}%
  \label{tab:steer}%
\end{table}%

Steer-SE(2)-GNN outperforms EGNN on 5 out of 8 tasks and obtains comparable results on the other 3 tasks, which indicates the reliability of our implementation. The reason why Steer-SE(2)-GNN is generally better than EGNN lies in the involvement of the gravity constraint. If considering this constraint as well, EGNN-S consistently surpasses Steer-SE(2)-GNN. Overall, our SGNN achieves the significantly best performance.
\section{More Visualizations}
Please refer to our Supplementary Video, presented at our project page \url{https://hanjq17.github.io/SGNN/}. 

Moreover, to further evaluate the generalization of different models toward unseen scenes and assess whether they properly learn the effect of gravity, we conduct extra experiments by applying a rotation around a non-gravity axis, resulting in such scenarios where dominoes are placed on an incline while gravity still points downwards vertically.

The video is also presented at our project page. It is worth noticing that all models are only trained with the original data (horizontal table with vertical gravity) and none of them have seen any scenario placed like these. We have the following observations.
\begin{itemize}
    \item Interestingly, SGNN well generalizes to these novel scenarios and reasonably simulates the effect of gravity. Particularly, the domino at the bottom starts to slide down along the table driven by gravity. The dominoes at the top reach an equilibrium between friction and gravity and keep still. The small bottle placed on the table also falls down due to gravity.
    \item EGNN, as an E(3)-equivariant model, does not perceive the changes in scenarios, producing the same trajectory as if the table is horizontal. GNS and DPI, by not incorporating rotation symmetry, do not properly learn the effect of gravity as well.
\end{itemize}

This experiment interestingly reveals that our SGNN is able to learn how gravity acts on physical dynamics effectively from data and can thus generalize to novel scenes, verifying the validity of our motivation and design of subequivariance.

\section{More Insights on Subequivariance}

In the paper we term subequivariance as a \emph{relaxation} of equivariance. In order to help understanding the position of our work, we provide more explanations and comparisons between full-equivariant models, non-equivariant models, and our subequivariant models.

\begin{figure}[htbp]
    \centering
    \includegraphics[width=0.98\textwidth]{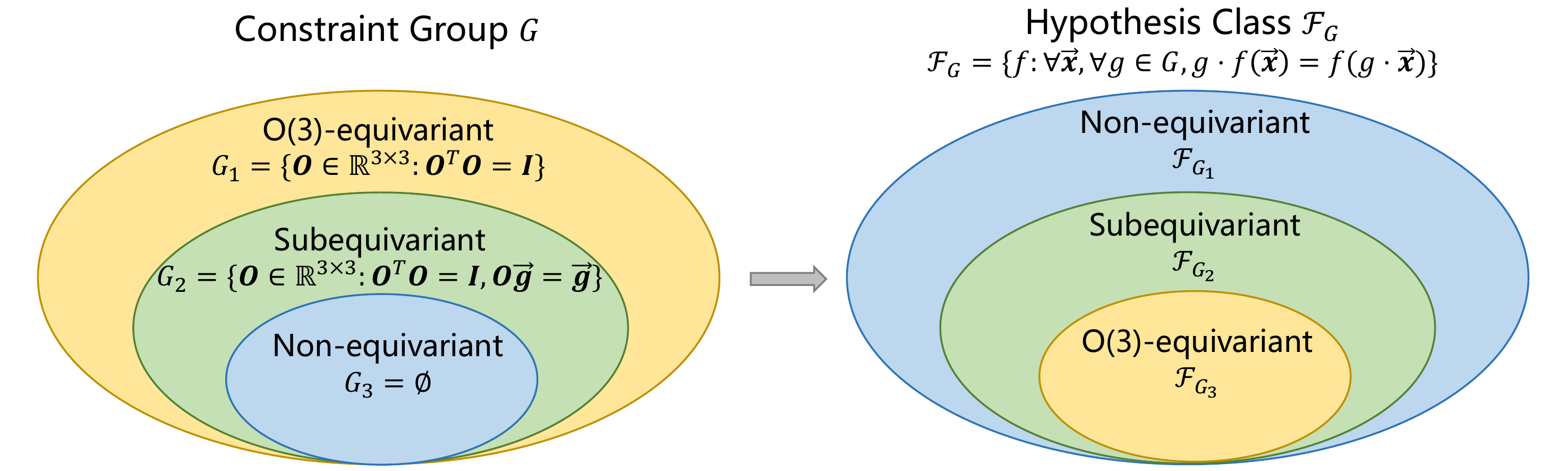}
    \caption{The comparison between O($3$)-equivariant, subequivariant, and non-equivariant models.}
    \label{fig:compare}
\end{figure}

As depicted in Fig.~\ref{fig:compare}, equivariance is exerted by a group $G$ that serves as a constraint over the corresponding possible functions, or so called the hypothesis class $\gF_G$. The functions in $\gF_G$ must satisfy that for any group element $g\in G$, the output of the function should be transformed the same way as the input by $g$, or formally, $\gF_G=\{f:\forall \vec{\vx},g\in G, g\cdot f(\vec{\vx})=f(g\cdot \vec{\vx})  \}$. Particularly, if two groups satisfy $G_1 \subseteq G_2$, it is straightforward to see that the corresponding hypothesis class $\gF_{G_2}\subseteq \gF_{G_2}$.

Specifically, we consider O($3$)-equivariant models with the constraint group $G_1=\{\mO\in\sR^{3\times 3}:\mO^\top\mO=\mI  \}$ including all orthogonal matrices, and non-equivariant models with $G_3=\emptyset$. Clearly the non-equivariant models possess a larger hypothesis class, which are usually easier to optimize during training. However, the drawback is the weaker generalization since the optimized function might not obey the proper constraint that is implied in the data. This is experimentally verifies by the training and validation curve of DPI in Fig.~\ref{fig:learning_curve}. The O($3$)-equivariant models, on the other hand, always satisfy the constraint $G$ and thus have a much smaller $\gF$. In the existence of gravity, the symmetry is violated in the vertical direction, and not all $g\in$ O($3$) should still serve as a constraint, but only those among $G_2 = \{\mO\in\sR^{3\times 3}:\mO^\top\mO=\mI, \mO\vec{\vg}=\vec{\vg} \}$. Therefore, $\gF_{G_1}$ becomes over-constrained, which significantly impedes the training (see the training curve of EGNN in Fig.~\ref{fig:learning_curve}). Our subequivariant model, instead, leverages $G_2$ as the constraint with $G_3\subseteq G_2\subseteq G_1$ and therefore $\gF_{G_1}\subseteq \gF_{G_2}\subseteq \gF_{G_3}$. With this proper relaxation, we expect the subequivariant models, equipped with the appropriate constraint, to have an ideal trade-off between training and generalization, which is also verified by the experimental results of SGNN and the learning curve in Fig.~\ref{fig:learning_curve}.

\end{document}